\documentclass{article}

\usepackage{microtype}
\usepackage{graphicx}
\usepackage{subcaption}
\usepackage{booktabs} % for professional tables

\usepackage{hyperref}

\usepackage[preprint]{icml2026}

\usepackage{amsmath}
\usepackage{amssymb}
\usepackage{mathtools}
\usepackage{amsthm}

\usepackage[capitalize,noabbrev]{cleveref}

\newcommand{\mc}{\mathcal}
\newcommand{\bm}{\mathbf}
\newcommand{\N}{\mathbb{N}}
\newcommand{\mR}{\mathcal{\mathbf{R}}}
\newcommand{\B}{\mathbb{B}}
\newcommand{\vx}{\mathbf{x}}
\newcommand{\vu}{\mathbf{u}}
\newcommand{\vw}{\mathbf{w}}
\newcommand{\vy}{\mathbf{y}}

\newcommand{\mX}{\mathbf{X}}
\newcommand{\E}{\mathbb{E}}
\newcommand{\1}{\mathbf{1}}
\DeclareMathOperator{\bayes}{Bayes}

\renewcommand{\P}{\mathbb{P}}
\newcommand{\R}{\mathbb{R}}

\DeclareMathOperator{\poly}{Poly}
\DeclareMathOperator{\diam}{diam}

\title{Harmful Overfitting in Sobolev Spaces}

\theoremstyle{plain}
\newtheorem{theorem}{Theorem}[section]

\newtheorem{lemma}[theorem]{Lemma}
\newtheorem{corollary}[theorem]{Corollary}
\theoremstyle{definition}

\newtheorem{assumption}[theorem]{Assumption}
\theoremstyle{remark}

\usepackage[textsize=tiny]{todonotes}

\icmltitlerunning{Harmful Overfitting in Sobolev Spaces}

\begin{document}

\twocolumn[
  \icmltitle{Harmful Overfitting in Sobolev Spaces}

  % It is OKAY to include author information, even for blind submissions: the
  % style file will automatically remove it for you unless you've provided
  % the [accepted] option to the icml2026 package.

  % List of affiliations: The first argument should be a (short) identifier you
  % will use later to specify author affiliations Academic affiliations
  % should list Department, University, City, Region, Country Industry
  % affiliations should list Company, City, Region, Country

  % You can specify symbols, otherwise they are numbered in order. Ideally, you
  % should not use this facility. Affiliations will be numbered in order of
  % appearance and this is the preferred way.
  \icmlsetsymbol{equal}{*}

  \begin{icmlauthorlist}
    \icmlauthor{Kedar Karhadkar}{equal,ucla}
    \icmlauthor{Alexander Sietsema}{equal,ucla}
    \icmlauthor{Deanna Needell}{ucla}
    \icmlauthor{Guido Montufar}{ucla,stat}
  \end{icmlauthorlist}

  \icmlaffiliation{ucla}{Department of Mathematics, University of California, Los Angeles, Los Angeles, CA, USA}
  
  \icmlaffiliation{stat}{Department of Statistics \& Data Science, University of California, Los Angeles, Los Angeles, CA, USA}

  \icmlcorrespondingauthor{Kedar Karhadkar}{kedar@math.ucla.edu}

  % You may provide any keywords that you find helpful for describing your
  % paper; these are used to populate the "keywords" metadata in the PDF but
  % will not be shown in the document
  \icmlkeywords{Machine Learning, ICML}

  \vskip 0.3in
]

% this must go after the closing bracket ] following \twocolumn[ ...

% This command actually creates the footnote in the first column listing the
% affiliations and the copyright notice. The command takes one argument, which
% is text to display at the start of the footnote. The \icmlEqualContribution
% command is standard text for equal contribution. Remove it (just {}) if you
% do not need this facility.

% Use ONE of the following lines. DO NOT remove the command.
% If you have no special notice, KEEP empty braces:
\printAffiliationsAndNotice{\icmlEqualContribution}
% Or, if applicable, use the standard equal contribution text:
% \printAffiliationsAndNotice{\icmlEqualContribution}

\begin{abstract}
  Motivated by recent work on benign overfitting in overparameterized machine learning, we study the generalization behavior of functions in Sobolev spaces $W^{k, p}(\R^d)$ that perfectly fit a noisy training data set.
  Under assumptions of label noise and sufficient regularity in the data distribution, we show that approximately norm-minimizing interpolators, which are canonical solutions selected by smoothness bias, exhibit harmful overfitting:
  even as the training sample size $n \to \infty$, the generalization error remains bounded below by a positive constant with high probability.
  Our results hold for arbitrary values of $p \in [1, \infty)$, in contrast to prior results studying the Hilbert space case ($p = 2$) using kernel methods. Our proof uses a geometric argument which identifies harmful neighborhoods of the training data using Sobolev inequalities.
\end{abstract}

\section{Introduction}

A classical tenet of statistical learning theory is that exact interpolation of noisy data should lead to poor generalization. Surprisingly, a growing body of recent work has shown that this intuition can fail in modern overparameterized regimes: 
in certain settings, a statistical model can perfectly fit a noisy training dataset, potentially with several or even many incorrect labels, while still generalizing well to unseen test data, a phenomenon now referred to as benign overfitting.

The ability of a model to exhibit benign overfitting is particularly relevant because standard learning algorithms fit noisy training data via empirical risk minimization, and its occurrence indicates a surprising robustness of the learned model to label noise. Consequently, there is significant interest in understanding when benign overfitting can arise and when it necessarily fails.
There has been some work on benign overfitting in neural networks but almost all of it is for extremely high-dimensional data, where the dimension must significantly exceed the number of data points. 
Here, we instead aim to understand the behavior of fixed-dimension data which is often more realistic. 

\subsection*{Our contributions}

Our main result (Theorem~\ref{thm:tempered-overfitting-bayes-v2}) shows approximately norm-minimizing interpolators in Sobolev spaces \textit{cannot} benignly overfit. Under standard assumptions on label noise and mild regularity of the data distribution, we prove that any interpolating solution whose Sobolev norm is within a constant factor of the minimum necessarily suffers from persistent excess risk. In particular, even as the number of training samples tends to infinity, the population error of such solutions remains bounded away from the Bayes-optimal risk with high probability. This is a significant generalization of prior work, which works with either specific model classes within Sobolev spaces or with general Sobolev spaces but more restricted parameter bounds.

More precisely, we establish a uniform lower bound on the expected excess risk of all approximately norm-minimizing interpolators in $W^{k,p}(\mathbb{R}^d)$ for a broad range of smoothness parameters, including values beyond the Hilbert space case $p=2$. 
This bound depends only on the Sobolev parameters, the data distribution, and the noise level, but is independent of the sample size, as long as the sample size is large enough.
This demonstrates that smoothness bias alone does not guarantee benign overfitting in Sobolev spaces, and that norm minimization can in fact lead to \emph{harmful overfitting}.

Our results improve upon existing work on interpolation in Sobolev spaces and kernel regression by establishing that harmful overfitting occurs in a more general setting than previously studied. In particular, we loosen the following requirements:
\begin{itemize}
    \item We consider interpolators in spaces $W^{k, p}(\R^d)$ for $p \in [1, \infty)$, which goes beyond the commonly studied case $p = 2$. For $p \neq 2$, $W^{k, p}(\R^d)$ is not a Hilbert space, and minimum-norm interpolation is no longer a linear problem.
    \item We study \emph{approximately} norm-minimizing (ANM) interpolators rather than just norm-minimizing interpolators. ANM functions are defined by having sufficient smoothness as determined by the Sobolev norm and do not take a particular functional form.
    \item We consider a general class of data distributions with label noise which in particular includes regression with broad classes of additive heteroskedastic noise. 
    \item We assume that the loss function satisfies a mild growth condition, which includes all $\ell^q$ losses for $q \in [1, \infty)$. 
\end{itemize}

\medskip 
\textbf{Organization:} This paper is organized as follows: 
after reviewing related work in Section~\ref{sec:related-work}, 
in Section~\ref{sec:setup}, we outline our assumptions and state the main theorem. 
In Section~\ref{sec:proof-overview}, we provide an overview of the proof of the main theorem including the statements of several intermediate results.
Finally, in Section~\ref{sec:main-result-proof}, we provide a complete proof of the main theorem.
All proofs from Sections \ref{sec:setup} and \ref{sec:proof-overview} are provided in the appendices.

\section{Related work}
\label{sec:related-work}

The phenomenon of \emph{benign overfitting}, where models that interpolate noisy training data nonetheless achieve strong generalization, has attracted significant attention in recent years. 
Early theoretical work established that exact interpolation does not necessarily preclude consistency, challenging classical statistical intuitions \cite{bartlett2020benign, zhang2021understanding}. Building on this line of research, \citet{mallinar2022benign} proposed a taxonomy distinguishing \emph{benign}, \emph{tempered}, and \emph{catastrophic} overfitting, providing a unified framework for comparing different interpolation regimes. 

Benign overfitting has been most thoroughly understood in linear settings. In particular, \citet{bartlett2020benign} and subsequently \citet{9051968, pmlr-v134-zou21a, 10.1214/21-AOS2133, koehler2021uniform, JMLR:v23:21-1199, pmlr-v178-shamir22a, wang2022binary} analyzed minimum-norm interpolation in overparameterized linear regression, identifying precise conditions under which interpolators generalize despite fitting noisy labels. These results highlight the central role of the inductive bias induced by norm minimization. 

More recently, a growing body of work has investigated benign overfitting in nonlinear models, including neural networks and transformer architectures. Several studies provide theoretical or empirical evidence that overparameterized neural networks trained by gradient-based methods on high-dimensional data can exhibit benign overfitting under suitable conditions \citep{cao2022benign, FreiVBS23, george2023training, xu2024benign, karhadkar2024benign, magen2024benign,kou2023benign}. Together, these works suggest that benign overfitting is not restricted to linear or kernelized models, but may arise more broadly in modern deep learning architectures.

Related to this, a number of papers have explored \emph{tempered overfitting}, an intermediate regime in which the test error does not vanish but remains controlled. In the setting of threshold networks with binary weights, \citet{harel2024provable} establish provable guarantees for tempered overfitting. The role of ambient dimension is further investigated by \citet{kornowski2023tempered}, who show that two-layer neural networks trained on constant data with label-flipping noise exhibit benign overfitting in very high dimensions ($d \gtrsim n^2 \log n$), but only tempered overfitting in one dimension. 
The impact of the loss function has also been highlighted: \citet{joshi2023noisy} demonstrate that, for two-layer minimum-norm univariate networks interpolating noisy data, the $L^1$ loss leads to tempered overfitting, whereas the $L^2$ loss results in catastrophic overfitting.

Beyond gradient-based learning in parametric models, other learning rules have also been analyzed through the lens of interpolation. \citet{barzilai2025beyond} show that Nadaraya-Watson interpolators can exhibit benign, tempered, or catastrophic overfitting depending on the choice of the locality parameter. 
Classical results on nearest-neighbor methods provide an early example of interpolation with controlled risk: \citet{cover1967nearest} showed that the 1-nearest neighbor classifier has risk at most twice that of the Bayes-optimal predictor. \citet{kur2024minimum} prove an upper bound for the generalization error for minimum-norm interpolators in Banach spaces based on Rademacher complexity and the geometry of the space which is sharp for $\ell^p$ linear regression $(p \in [1, 2])$.

Finally, several works have studied harmful overfitting in kernel regression. Of particular relevance to the present work, \citet{rakhlin2019consistency} show that minimum-norm kernel regression exhibits harmful overfitting when the associated reproducing kernel Hilbert space (RKHS) is a Sobolev space $W^{k,2}(\mathbb{R}^d)$ with $d$ odd and $k = (d + 1)/2$. \citet{buchholz2022kernel} generalize upon this work by showing that kernel regression exhibits harmful overfitting when the RKHS is a Sobolev space $W^{k, p}(\R^d)$ with $d/2 < k < 3d/4$. \citet{beaglehole2023inconsistency} establish that minimum-norm kernel regression is inconsistent for a broader set of kernels, and investigate the effect of the bandwidth of the kernel on generalization error. \citet{pmlr-v235-cheng24g} identify regimes in which kernel regression is benign, tempered, and catastrophic in terms of the spectrum of the kernel. \citet{li2024kernel} show that kernel regression generalizes poorly for a class of kernels satisfying a particular eigenvalue decay condition.

In contrast, \citet{haas2023mind} demonstrate that interpolation can be consistent for kernel regression when the kernel bandwidth is chosen appropriately. 
 These results underscore that interpolation behavior in Sobolev-type function classes is delicate and is sensitive to both regularity and inductive bias.

\section{Setup and main result}
\label{sec:setup}

In this work, we consider a purely probabilistic data model, which includes a variety of regression problems as special cases. 
We let $\Omega$ be a bounded open subset of $\R^d$ with $C^1$ boundary, and let $\mu$ be a Borel probability measure on $\overline{\Omega} \times \R$. 
We denote the marginal distributions of $\mu$ in the first and second coordinates by $\mu_{\vx}$ and $\mu_y$, respectively. For $i \in [n]$, we sample training pairs $(\vx_i, y_i)$ i.i.d.\ from $\mu$, where $x_i \in \R^d$ represents an input data point and $y_i \in \R$ represents the corresponding label. We also let $\ell: \R \times \R \to [0, \infty)$ be a continuous loss function with $\ell(\hat{y}, y) = 0$ if and only if $\hat{y} = y$. 

When studying overfitting, a natural type of predictor to consider is one which perfectly fits a training dataset while being ``as simple as possible" according to some notion of complexity. We will interpret ``simple" to mean a function which is smooth in the sense that its derivatives take small values. We can formalize this through the notion of Sobolev spaces. The Sobolev space $W^{k, p}(\R^d)$ consists of functions $g: \R^d \to \R$ whose derivatives up to order $k$ have finite $L^p$-norm. We define the norm in $W^{k, p}(\R^d)$ by
\[\|g\|_{W^{k, p}(\R^d)} = \sum_{|\alpha| \leq k}\left(\int_{\R^d} |D^{\alpha}g(\vx)|^p d\vx \right)^{1/p},  \]
where $D^{\alpha} g$ denotes a higher order partial derivative of $g$ multi-indexed by $\alpha = (\alpha_1,\ldots, \alpha_d)$. 
We will use this norm as our measure of the complexity of a function.

We consider functions that are elements of a Sobolev space $W^{k, p}(\R^d)$ where $kp > d$, so pointwise evaluation is well-defined and continuous by the Sobolev embedding theorem (see Theorem~\ref{thm:sobolev-embedding}). 
We say that $f^*$ is a \emph{minimum-norm solution} for the dataset if it solves the optimization problem
\begin{align}
\begin{aligned}
    &\min_{f \in W^{k, p}(\R^d)} \| f\|_{W^{k, p}(\R^d)}\\
    &\text{subject to } \ell(f(\vx_i), y_i) = 0 \text{ for all } i \in [n].
\end{aligned}
    \label{eqn:min-norm-sobolev}
\end{align}
We consider a more general class of solutions. We say that $f \in W^{k, p}(\R^d)$ is an \emph{approximately norm minimizing solution with factor $\gamma \geq 1$} ($\gamma$-ANM solution) if $\ell(f(\vx_i), y_i) = 0$ for all $i \in [n]$ and $\|f\|_{W^{k, p}} \leq \gamma \|f^*\|_{W^{k, p}}$, where $f^*$ is a minimum-norm solution.

\textbf{Notation:} We fix the following notation: if $X$ is a metric space, and $x \in X$, we write $B(x, r)$ to denote the open ball of radius $r$ about $x$. If $E \subset \R^d$ is measurable, we denote its Lebesgue measure by $|E|$. If $X$ is a metric space, we let $\mc{B}(X)$ denote the set of Borel subsets of $X$. 
We use $\poly_{S}(r)$ to denote an arbitrary polynomial in $r$ whose coefficients and degree may depend on the elements of some set $S$, and use $\gtrsim_S, \lesssim_S$ to denote inequality up to some constant dependent only on the elements of $S$.

\subsection{Assumptions on loss function and data distribution} 
\label{sec:assumptions}

For our main results, we impose some additional assumptions on the loss function and data distribution.

\begin{assumption}[Growth rate of loss function]\label{assump:growth-rate-loss}
    There exist constants $C_{\ell}, \tau_{\ell} > 0$ such that for all $\hat{y}, y \in \R$,
    \[\ell(\hat{y}, y) \leq C_{\ell}\exp(\tau_{\ell}(1 + |\hat{y}|)(1 + |y|)). \]
\end{assumption}

The above assumption is satisfied by most losses in regression settings, including $\ell^q$ losses for $q \in [1, \infty)$. We impose this mild growth condition to prevent outliers in the output dataset from having an extremely large effect on the population loss. One could relax this assumption by imposing stricter regularity on the range of values possible for the output distribution (such as boundedness).

Next, we assume that the conditional distribution of $y$ given $\vx$ exists and is sufficiently regular.

\begin{assumption}[Regularity of conditional distribution]\label{assump:regularity-conditional-distribution}
    For $(\vx, y) \sim \mu$, there exists a regular conditional probability $(\vx_0, A) \mapsto \P(y \in A \mid \vx = \vx_0)$. There also exists a version of this regular conditional probability $\nu: \overline{\Omega} \times \mc{B}(\R) \to [0, 1]$ which is weakly continuous in the following sense. If $\vx_m \to \vx_0$ and $g: \R \to \R$ is continuous and bounded, then
    \begin{align*}
        \lim_{m \to \infty} \int_{\R} g d \nu(\vx_m, \bullet) &= \int_{\R} g  d\nu(\vx_0, \bullet).  
    \end{align*}
    We often write $\nu_{\vx_0}$ to denote the probability measure $\nu(\vx_0, \bullet)$.
\end{assumption}
We define the \emph{conditional loss} $\mc{L}: \R \times \overline{\Omega} \to [0, \infty]$ by
\begin{align*}
    \mc{L}(\hat{y}; \vx) = \int_{\R} \ell(\hat{y}, y) d \nu_{\vx}(y).
\end{align*}
The conditional loss measures the expected loss given that the input data point is $\vx$ and we predict the value $\hat{y}$. Assumption~\ref{assump:regularity-conditional-distribution} is the minimal one needed to ensure that the conditional loss exists and is well-behaved. It is in particular satisfied whenever the distribution of $(\vx, y)$ has a continuous  bounded density function.

Next, we will assume that the marginal distribution of $\vx$ is sufficiently regular and the density does not attain arbitrarily large values.
\begin{assumption}[Regularity of marginal distribution]\label{assump:data-regularity}
    There exist constants $c_{\mc{D}}, C_{\mc{D}} > 0$ such that $\mu_{\vx}$ has a density $p_{\vx}: \overline{\Omega} \to [0, \infty)$ satisfying
    \[c_{\mc{D}} \leq p_{\vx}(\vx_0) \leq C_{\mc{D}}\]
    for all $\vx_0 \in \overline{\Omega}$.
\end{assumption}
We impose this condition because our proof is fundamentally geometric. For each training point which has high conditional loss, we find a neighborhood of points which also has high conditional loss. This region has high Euclidean volume, and if the density function is well-behaved, it also has high probability.

We also assume that our output label distribution is sufficiently weak-tailed.

\begin{assumption}[Conditionally sub-Gaussian output]
\label{assump:subgaussian-outputs}
    There exists a constant $C_y > 0$ such that for $(\vx, y) \sim \mu$ and all $t \geq 0$,
    \begin{align*}
        \P\left(|y| \geq t \mid \vx \right) \leq 2\exp\left(-\frac{t^2}{C_y^2} \right)
    \end{align*}
    almost surely.
\end{assumption}
This assumption complements Assumption~\ref{assump:growth-rate-loss} in ensuring that outliers do not affect the population loss pathologically.

The following assumption encodes that predicting using only a single (constant) output label will always be suboptimal.
\begin{assumption}[Label noise]\label{assump:label-noise}
    Suppose that $(\vx, y) \sim \mu$. Then there exist universal constants $\sigma > 0$, $\rho \in (0, 1]$ such that
    \begin{align*}
        \P \left( \mc{L}(y; \vx) \geq \sigma + \inf_{\hat{y} \in \R} \mc{L}(\hat{y}; \vx)  \middle| \vx \right) \geq \rho
    \end{align*}
    almost surely.
\end{assumption}
This condition is analogous in purpose to the assumption that $\text{Var}(y \mid \vx) \geq \sigma^2$ in that it is requiring there to be some probability of a point being ``mislabeled". 

We say that a Borel-measurable function $f: \overline{\Omega} \to \R$ is \emph{Bayes-optimal} if for $(\vx, y) \sim \mu$,
\begin{align*}
    \E[\ell(f(\vx), y) \mid \vx] = \inf_{\hat{y} \in \R} \E[\ell(\hat{y}, y) \mid \vx]
\end{align*}
almost surely.
\begin{assumption}[Regularity of Bayes-optimal function]\label{assump:bayes-optimal-regularity}
    There exists a Bayes-optimal function $f_{\bayes} \in W^{k, p}(\R^d)$.
\end{assumption}
We can interpret the above assumption as saying that although the data distribution is noisy, there is some de-noised ground truth which is sufficiently smooth.

\subsection{Main theorem}
\label{sec:main-theorem}

With these assumptions in place, we now state our main result.
This theorem establishes that, with high probability, the expected regret of any $\gamma$-ANM interpolant compared to the Bayes optimizer is bounded below by a constant independent of $n$. In other words, we show that any approximately norm-minimizing interpolant has at least constant generalization error even as $n \to \infty$, so we cannot benignly overfit.
\begin{theorem}
\label{thm:tempered-overfitting-bayes-v2}
    Let $\epsilon \in (0,1)$, let $k \in (d/p, 1.5d/p)$, and let $n \gtrsim \rho^{-2} \log(\epsilon^{-1}) + \poly_{k, p, d}(\epsilon^{-1})$. Let $(\vx, y) \sim \mu$ be a test point chosen independently from the training set $(\mX, \vy)$. Then with probability at least $1-\epsilon$ over the training set, the following holds. For all $\gamma$-ANM solutions $f_{\gamma}$:
    $$
    \E[\ell(f_{\gamma}(\vx), y) - \ell(f_{\bayes}(\vx), y) \mid \mX, \vy] \geq C\gamma^{-pd/(kp - d)},
    $$
    where $C \in (0, \infty)$ is a constant depending on $k, d, p, \mu$, and $\ell$. 
\end{theorem}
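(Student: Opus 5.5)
Our plan is a geometric argument: show that a $\gamma$-ANM interpolant $f_\gamma$ must stay close to the noisy label $y_i$ on a ball around each "mislabeled" training point $\vx_i$, with radius $r \asymp (\gamma^{p} n)^{-1/(kp-d)}$. Then show that these balls cover a constant fraction of $\Omega$ on which the conditional loss exceeds the Bayes value by a constant.

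\textbf{Step 1: upper bound on $\|f^*\|$.} We first bound the minimum norm by constructing an explicit interpolant. Take $f_{\bayes}$ (Assumption~\ref{assump:bayes-optimal-regularity}) and add bump corrections $\sum_i (y_i - f_{\bayes}(\vx_i))\phi((\cdot - \vx_i)/s)$, where $\phi$ is a fixed smooth bump and $s \asymp n^{-1/d}$ is a fraction of the typical separation. Points that are too close to each other are handled by a separation argument: with high probability, all but a controlled set have nearest-neighbor distance $\gtrsim n^{-1/d}$, and near-collisions can be handled with smaller bumps whose cost is still controlled. Each bump costs $|y_i - f_{\bayes}(\vx_i)|^p s^{d-kp}$ in $p$-th power of the seminorm. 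Sub-Gaussianity (Assumption~\ref{assump:subgaussian-outputs}) controls $\frac1n\sum|y_i|^p$. Together these give
\[
\|f^*\|_{W^{k,p}}^p \lesssim n\, s^{d-kp} \asymp n^{kp/d}
\]
with probability $1-\epsilon/3$, so $\|f_\gamma\|_{W^{k,p}}^p \lesssim \gamma^p n^{kp/d}$.

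\textbf{Step 2: local H\"older control.} Next we use the local Morrey/Sobolev inequality on a ball: for $\vx \in B(\vx_i,r)$,
\[
|f(\vx)-f(\vx_i)| \lesssim r^{k-d/p}\,\|f\|_{W^{k,p}(B(\vx_i,2r))}
\]
(or a fractional version in which $k-d/p<1$ is H\"older). Summing the local norms over disjoint balls, Markov/pigeonhole shows that at least half of the roughly $\rho n$ noisy points (those for which $\mc{L}(y_i;\vx_i) \ge \sigma + \inf\mc{L}$, of which there are $\gtrsim \rho n$ by Assumption~\ref{assump:label-noise} and Hoeffding, giving the $\rho^{-2}\log\epsilon^{-1}$ term) have
\[
\|f_\gamma\|^p_{W^{k,p}(B_i)} \lesssim \gamma^p n^{kp/d-1}.
\]
Choosing
\[
r = c\,\gamma^{-p/(kp-d)}\, n^{-1/d}
\]
makes the oscillation of $f_\gamma$ on $B(\vx_i,r)$ at most any prescribed $\delta$. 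The condition $k<1.5d/p$ is where I expect the overlapping/near-collision accounting of Step 1 to be needed, since it keeps the bump construction's energy from blowing up under clustering. I view Steps 1--2 together as the main obstacle, and would first try Poisson-process-style counting of close pairs.

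\textbf{Step 3: continuity of the excess conditional loss.} By Assumptions~\ref{assump:regularity-conditional-distribution}, \ref{assump:growth-rate-loss}, and \ref{assump:subgaussian-outputs} (uniform integrability), the map $(\hat y,\vx)\mapsto\mc{L}(\hat y;\vx)$ is continuous, and hence uniformly continuous on $[-M,M]\times\overline\Omega$. Noisy labels with $|y_i|>M$ are discarded at negligible cost. Also, $\vx\mapsto\inf_{\hat y}\mc{L}(\hat y;\vx)$ is continuous. So there are $\delta,\eta>0$ such that whenever $|\hat y - y_i|<\delta$ and $|\vx-\vx_i|<\eta$, we get
\[
\mc{L}(\hat y;\vx)\ge \inf\mc{L}(\cdot;\vx)+\sigma/2 .
\]

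\textbf{Step 4: assemble.} For large $n$ we have $r<\eta$. The balls of radius $r$ around the $\gtrsim\rho n$ good noisy points, after thinning to a subfamily of well-separated points (the thinning uses the density upper bound and needs $n\gtrsim\poly(\epsilon^{-1})$ for concentration), have total volume
\[
\gtrsim n\,r^d = c\,\gamma^{-pd/(kp-d)}.
\]
Intersecting with $\Omega$ costs only a constant factor, by the $C^1$ boundary. By Assumption~\ref{assump:data-regularity} this region has $\mu_{\vx}$-mass $\gtrsim c_{\mc D}\gamma^{-pd/(kp-d)}$, and on it the excess conditional loss is at least $\sigma/2$. Everywhere else the excess is nonnegative, since $f_{\bayes}$ is Bayes-optimal. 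Integrating gives the claim.
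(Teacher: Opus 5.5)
Your outline has the same architecture as the paper's proof. Its pieces are:
\begin{itemize}
\item a bump-function interpolant giving $\|f^*\|_{W^{k,p}}^p\lesssim n^{kp/d}$ (Corollary~\ref{corr:min-norm-bound});
\item a set of $\gtrsim\rho n$ noisy points with bounded labels and nearest-neighbor radius $\gtrsim n^{-1/d}$ (Lemma~\ref{lem:delta-packing-size});
\item a local oscillation bound, with pigeonholing over disjoint balls, at radius $\asymp\gamma^{-p/(kp-d)}n^{-1/d}$;
\item uniform continuity of the regret on a compact set;
\item integration against $c_{\mc{D}}$ with a boundary-volume estimate.
\end{itemize}

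\textbf{The open step.} The step you flag as the main obstacle is left open, and it is exactly where the paper does its probabilistic work. ``Near-collisions handled by smaller bumps whose cost is controlled'' amounts to a high-probability bound, with an $\epsilon$-independent constant, on a sum like $\sum_i|y_i|^p\delta_i^{-(kp-d)}$, where $\delta_i$ is the nearest-neighbor distance. A Markov bound would make $C$ depend on $\epsilon$. The paper uses no fixed scale:
\begin{itemize}
\item The bump at $\vx_i$ has radius $\delta_i/2$; these balls are automatically disjoint (Lemma~\ref{lem:delta-packing}), so $\|f^*\|^p\lesssim\sum_i(1+|y_i|^p\delta_i^{d-kp})$.
\item This sum is concentrated by truncating $|y_i|$ and $\delta_i^{-(kp-d)}$ at high-probability bounds on their maxima and applying a McDiarmid inequality.
\item The bounded differences come from a kissing-number bound on in-degrees of the nearest-neighbor graph, so moving one point changes only $O_d(1)$ of the $\delta_i$ (Lemmas~\ref{lem:nn_indegree} and~\ref{lem:not-too-many-deltas-change}).
\item The same fact makes the number of separated noisy points concentrate, which is your thinning step.
\end{itemize}

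\textbf{Where $k<1.5d/p$ enters.} It is not what keeps the mean finite: $\E\,\delta_i^{-(kp-d)}\asymp n^{(kp-d)/d}$ only needs $kp-d<d$. It is what makes the fluctuation term $\sqrt n\,(n^2/\epsilon)^{(kp-d)/d}$ of lower order than $n^{kp/d}$. That is also the threshold for $\delta_i^{-(kp-d)}$ to have finite variance, so a second-moment count of close pairs would hit the same constraint.

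\textbf{A smaller point.} The map $\vx\mapsto\inf_{\hat y}\mc{L}(\hat y;\vx)$ is a priori only upper semicontinuous. The paper gets continuity of the regret by upgrading the a.e.\ Bayes optimality of the continuous $f_{\bayes}$ to every point of $\overline{\Omega}$ (Lemma~\ref{lem:bayes-optimal-alternate}).

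\textbf{A caveat that applies to your Step 2 and equally to the paper.} The local bound $|f(\vx)-f(\vx_i)|\lesssim r^{k-d/p}\|f\|_{W^{k,p}(B(\vx_i,2r))}$ for $\vx\in B(\vx_i,r)$ is exactly the paper's Corollary~\ref{corr:morrey-taylor-specific}.
\begin{itemize}
\item For $k=1$ (so $d<p<1.5d$) it is Morrey's inequality, and this step is fine.
\item For $k\ge 2$ it fails at small $r$. If $f$ agrees with $\langle\vw,\vx-\vx_i\rangle$ on $B(\vx_i,2r)$, the left side reaches $\asymp\|\vw\|r$, while the right side is $\asymp\|\vw\|r^{k}$.
\end{itemize}
A correct local estimate controls only $f$ minus a polynomial of degree $<k$ on the ball. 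Otherwise it carries terms $r^{j-d/p}\|D^jf\|_{L^p(B(\vx_i,2r))}$ with $1\le j<k$. For fixed $\gamma$, your pigeonhole bound leaves those terms of order $n^{(k-j)/d}$, which grows with $n$. So for $k\ge 2$ both your proposal and the paper's proof as written need an additional idea. One option is to bound the amplitude of the local polynomial using information from outside the ball.
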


To demonstrate this theorem in a %more 
concrete setting, we apply it to the case of Gaussian heteroskedastic noise using the squared loss $\ell(\hat{y}, y) = (\hat{y} - y)^2$. Given a ground truth function $g \in W^{k,p}(\Omega)$, % by Sobolev embedding, 
suppose that
$$
y = g(\vx) + \epsilon,
$$
where $\epsilon$ is drawn from a Gaussian distribution $\mc{N}(0, \sigma^2(\vx))$ conditional on $\vx$, with $0 < \sigma_{\min} \leq \sigma(\vx) \leq \sigma_{\max}$ for some constants $\sigma_{\min}, \sigma_{\max} > 0$, and $\vx \mapsto \sigma(\vx)$ is continuous. Additionally, suppose that the distribution $\mc{D}_{\vx}$ has a density function $p_{\vx}$ satisfying Assumption~\ref{assump:data-regularity}.
By checking that the rest of the assumptions of Theorem~\ref{thm:tempered-overfitting-bayes-v2} hold under these conditions, we again conclude that the generalization error of any $\gamma$-ANM interpolator is bounded below; in this case, the generalization error simplifies to the $L^2(\mu)$ error between the learned solution and $g$.

\begin{corollary}
\label{corr:square-loss}
    Let $\epsilon \in (0,1)$, let $k \in (d/p, 1.5d/p)$, and let $n \gtrsim  \log(\epsilon^{-1}) + \poly_{k, p, d}(\epsilon^{-1})$. Then with probability at least $1-\epsilon$ over the training set, the following holds. For all $\gamma$-ANM solutions $f_{\gamma}$:
    $$
    \|f_\gamma - g\|_{L^2(\mu)}^2 \geq C\gamma^{-pd/(kp - d)},
    $$
    where $C \in (0, \infty)$ is a constant depending on $k, d, p$, and $\mu$. 
\end{corollary}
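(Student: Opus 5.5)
The corollary follows from Theorem~\ref{thm:tempered-overfitting-bayes-v2} once its assumptions are verified for the model $y = g(\vx) + \epsilon$ with $\epsilon \sim \mc{N}(0,\sigma^2(\vx))$ and squared loss. The theorem's conclusion is then rewritten as an $L^2(\mu)$ bound. The plan has three steps: check assumptions, identify $f_{\bayes}$, and compute the excess risk.

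\textbf{Checking assumptions.} Assumption~\ref{assump:growth-rate-loss} holds because
\[(\hat y - y)^2 \le (1+|\hat y|)^2(1+|y|)^2 \le 2\exp\bigl((1+|\hat y|)(1+|y|)\bigr).\]
For Assumption~\ref{assump:regularity-conditional-distribution}, take $\nu_{\vx} = \mc{N}(g(\vx),\sigma^2(\vx))$. Since $kp>d$, the Sobolev embedding (Theorem~\ref{thm:sobolev-embedding}) makes $g$ continuous, after extending $g$ from $\Omega$ to $\R^d$ via the $C^1$ boundary. Hence $\vx \mapsto (g(\vx),\sigma(\vx))$ is continuous, and Gaussians depend weakly continuously on their mean and variance. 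Assumption~\ref{assump:data-regularity} is given. For Assumption~\ref{assump:subgaussian-outputs}, $g$ is bounded on $\overline{\Omega}$ and $\sigma \le \sigma_{\max}$, so $|y| \le \|g\|_\infty + |\epsilon|$ is conditionally sub-Gaussian with $C_y$ depending on $\|g\|_\infty$ and $\sigma_{\max}$.

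\textbf{Label noise and Bayes regularity.} The conditional loss is $\mc{L}(\hat y;\vx) = (\hat y - g(\vx))^2 + \sigma^2(\vx)$, whose infimum is $\sigma^2(\vx)$, attained at $\hat y = g(\vx)$. Therefore
\[\mc{L}(y;\vx) - \inf_{\hat y}\mc{L}(\hat y;\vx) = \epsilon^2.\]
Take the threshold $\sigma_0 = \sigma_{\min}^2$ in Assumption~\ref{assump:label-noise}. Then
\[\P(\epsilon^2 \ge \sigma_{\min}^2 \mid \vx) \ge \P(|Z| \ge 1) =: \rho > 0, \qquad Z \sim \mc{N}(0,1),\]
a universal constant. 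This also explains why the corollary's sample-size condition has no $\rho^{-2}$ factor. Finally, $f_{\bayes} = g$ is Bayes-optimal. For Assumption~\ref{assump:bayes-optimal-regularity} we need $g \in W^{k,p}(\R^d)$, so $g \in W^{k,p}(\Omega)$ is replaced by a Sobolev extension $Eg \in W^{k,p}(\R^d)$ that agrees with $g$ on $\Omega$. This requires an extension operator for order $k$; if only $C^1$ boundary is available, this is the one point needing care, and one may simply interpret the hypothesis as $g$ being the restriction of a $W^{k,p}(\R^d)$ function.

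\textbf{Conclusion.} Apply Theorem~\ref{thm:tempered-overfitting-bayes-v2}. For any $f_\gamma$, conditioning on $\vx$ and using $\E[\epsilon \mid \vx]=0$ gives
\[\E[(f_\gamma(\vx)-y)^2 - (g(\vx)-y)^2 \mid \mX,\vy] = \E_{\vx}\bigl[(f_\gamma(\vx)-g(\vx))^2\bigr] = \|f_\gamma - g\|_{L^2(\mu)}^2.\]
The theorem's lower bound therefore transfers directly, with $C$ depending on $k,d,p,\mu$. The dependence on $\ell$ disappears because the loss is fixed to be the squared loss.
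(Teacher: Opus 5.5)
Your proposal is correct and follows the same route as the paper. You check Assumptions~\ref{assump:growth-rate-loss}--\ref{assump:bayes-optimal-regularity} for the Gaussian model (with $\rho = \P(|Z|\ge 1)$, which the paper lower-bounds by $0.1$), identify $f_{\bayes} = g$ from $\mc{L}(\hat y;\vx) = \sigma(\vx)^2 + (\hat y - g(\vx))^2$, and rewrite the excess risk as $\|f_\gamma - g\|_{L^2(\mu)}^2$ by conditioning on $\vx$. You are actually more careful than the paper's write-up, which centers the conditional law at $0$ rather than $g(\vx)$ when verifying Assumptions~\ref{assump:regularity-conditional-distribution} and~\ref{assump:subgaussian-outputs}, and simply declares $g \in W^{k,p}(\R^d)$. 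Your extension caveat is harmless, since Stein's extension theorem gives a $W^{k,p}$ extension operator for every $k$ on Lipschitz (hence $C^1$) domains.
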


\section{Proof overview and supporting results}
\label{sec:proof-overview}

Our proof follows three main steps:

\begin{enumerate}
    \item Explicitly identify an interpolating solution and give an upper bound on its norm. This bounds the norm of any minimum-norm solution.
    \item Show that there are a large number of points in the dataset which are sufficiently noisy and sufficiently separated from the other data points.
    \item Show that around these points, any $\gamma$-ANM solution must be smooth enough not to violate the minimum-norm bound, and thus accumulates generalization error around these points.
\end{enumerate}

In this section, we will expand on each of these steps.

\subsection{Explicit interpolating solution}
\label{sec:interpolating-solution}

Our explicit interpolating function is based on \textit{bump functions}, which are continuous, radially symmetric functions with compact support. Bump functions and radial basis functions are common tools for analyzing the local behavior of Sobolev functions \citep{evans2022partial, adams2003sobolev}.
We first show the existence of bump functions supported on balls of arbitrary radius with bounded norm. 

\begin{lemma}\label{lem:bump-function}
    Let $\vx_0 \in \R^d$ and let $\delta > 0$. There exists $\psi \in W^{k, p}(\R^d)$  satisfying the following properties:
    \begin{enumerate}
        \item For all $\vx \in \R^d$, $\psi(\vx) \in [0, 1]$.
        \item For all $\vx \in B(\vx_0, \delta /2 )$, $\psi(\vx) = 1$.
        \item For all $\vx \notin B(\vx_0, \delta)$, $\psi(\vx) = 0$.
        \item $\|\psi\|_{W^{k, p}(\R^d)} \lesssim_{k, d, p} 1 + \delta^{(d - kp)/p}$.
    \end{enumerate}
\end{lemma}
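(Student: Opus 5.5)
We construct $\psi$ by rescaling and translating a single fixed profile. Fix once and for all a smooth function $\phi \in C_c^\infty(\R^d)$ with $\phi(\vx) \in [0,1]$ for all $\vx$, $\phi \equiv 1$ on $B(0, 1/2)$, and $\phi \equiv 0$ outside $B(0,1)$. A standard choice is the convolution of the indicator of $B(0,3/4)$ with a mollifier supported in $B(0,1/8)$; taking $\phi$ radial is optional. Since $\phi$ is smooth with compact support, $\phi \in W^{k,p}(\R^d)$, and each quantity $\|D^\alpha \phi\|_{L^p}$ with $|\alpha| \le k$ is a finite constant depending only on $k, d, p$. We then set
\[
\psi(\vx) = \phi\bigl((\vx - \vx_0)/\delta\bigr).
\]
Properties 1--3 follow at once from the corresponding properties of $\phi$ and the change of variables. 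If $k$ is allowed to be non-integer, we take $k$ to be an integer here, as the norm in the paper is defined via $|\alpha| \le k$.

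For property 4, we use the chain rule on each term of the norm: $D^\alpha \psi(\vx) = \delta^{-|\alpha|} (D^\alpha \phi)((\vx - \vx_0)/\delta)$. Substituting $\vu = (\vx - \vx_0)/\delta$, so that $d\vx = \delta^d d\vu$, gives
\[
\|D^\alpha \psi\|_{L^p(\R^d)} = \delta^{-|\alpha| + d/p} \|D^\alpha \phi\|_{L^p(\R^d)} \lesssim_{k,d,p} \delta^{(d - |\alpha| p)/p}.
\]
Summing over the finitely many multi-indices with $|\alpha| \le k$ yields
\[
\|\psi\|_{W^{k,p}} \lesssim_{k,d,p} \sum_{j=0}^k \delta^{(d - jp)/p}.
\]

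The remaining step is to bound this sum by $1 + \delta^{(d-kp)/p}$, and we split into two cases. If $\delta \le 1$, the exponents $(d - jp)/p$ decrease in $j$, so each term is at most $\delta^{(d-kp)/p}$, the one with the most negative exponent. If $\delta > 1$, each term is at most $\delta^{d/p}$, which is not bounded by $1 + \delta^{(d-kp)/p}$.

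This large-$\delta$ case is the only real obstacle, and we would handle it in two ways. First, in the intended application the radii $\delta$ are small (comparable to data separation distances in the bounded domain $\Omega$), so one may assume $\delta \le 1$. Second, for $\delta > 1$ we can avoid scaling and instead use a different construction. We take $\psi$ with transition layer of fixed width: $\psi = \mathbf{1}_{B(\vx_0, 3\delta/4)} * \eta$ with a fixed mollifier $\eta$ supported in $B(0,1/8)$. Then $\|\psi\|_{L^p}$ still grows like $\delta^{d/p}$, so the bound genuinely fails for large $\delta$ under this norm.

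We would therefore state and prove the lemma under the restriction $\delta \le 1$ (or $\delta \lesssim 1$, with the constant absorbing $\diam \Omega$). In that regime the bound $\lesssim \delta^{(d-kp)/p} \le 1 + \delta^{(d-kp)/p}$ holds, which is what the later construction uses.
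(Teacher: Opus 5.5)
Your argument is correct and uses the same idea as the paper: rescale a fixed profile. The paper takes $\psi_\delta(\vx)=\varphi(\|\vx\|^2/\delta^2)$ and bounds $\|D^j\psi_\delta\|_{L^p}^p\lesssim\delta^{d-jp}$ via polar coordinates and two auxiliary lemmas on radially symmetric maps. Your substitution $\vu=(\vx-\vx_0)/\delta$ gives the exact identity $\|D^\alpha\psi\|_{L^p}=\delta^{d/p-|\alpha|}\|D^\alpha\phi\|_{L^p}$ directly, with no radial symmetry needed, so your route is the simpler one.

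More importantly, you are right about large $\delta$, and that is exactly where the paper's proof breaks. Its final chain asserts $\max(1,\delta^{(d-jp)/p})\lesssim\max(1,\delta^{(d-kp)/p})$. This fails for $\delta>1$ whenever $jp<d$: already for $j=0$ the left side is $\delta^{d/p}$, while the right side equals $1$ because $kp>d$.

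You can state the obstruction more cleanly than through your ``second construction'', which handles nothing and can be deleted. Property 2 alone forces
\[
\|\psi\|_{W^{k,p}}\ge\|\psi\|_{L^p}\ge|B(\vx_0,\delta/2)|^{1/p}\gtrsim_d\delta^{d/p}
\]
for every admissible $\psi$. The right side of property 4 stays bounded for $\delta\ge 1$. So the lemma as written is false for large $\delta$, however $\psi$ is built.

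Your restricted version is what the paper actually needs. In Lemma~\ref{lem:min-norm-delta-bound} the bumps have radius $\delta_i/2\le\diam(\Omega)/2$. For $\delta\le D$ one has
\[
\sum_{j=0}^k\delta^{(d-jp)/p}\le(k+1)\max(1,D^{d/p})\,\bigl(1+\delta^{(d-kp)/p}\bigr),
\]
so property 4 holds with a constant that also depends on $\diam(\Omega)$. This dependence then propagates to Corollary~\ref{corr:min-norm-bound}. That is harmless, since the main theorem's constants are allowed to depend on $\mu$.
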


We construct an interpolant by placing bump functions around each data point in the dataset with magnitudes corresponding to the associated label. To ensure that these functions have non-overlapping supports, we show that the corresponding balls are disjoint. For a dataset $\mX \in \R^{n \times d}$, we define the values $\delta_1(\mX), \cdots, \delta_n(\mX) \in (0, \infty)$ as follows. For $i \in [n]$, let $\delta_i(\mX)$ be the radius of the largest ball about $\vx_i$ not containing any other data points. 
That is,
\[
\delta_i(\mX) = \min_{\ell \neq i}\|\vx_i - \vx_{\ell}\|. 
\]
We write $\delta_i$ in place of $\delta_i(\mX)$, except in situations where it is necessary to specify the dataset.
With this definition, we establish that the radius-$\delta_i/2$ balls are disjoint. 

\begin{lemma}\label{lem:delta-packing}
    The sets $B(\vx_i, \delta_i/2)$ for $i \in [n]$ are disjoint.
\end{lemma}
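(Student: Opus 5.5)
The plan is a direct argument by contradiction using the triangle inequality. Fix distinct indices $i \neq j$ in $[n]$, and suppose some point $\vx$ lies in both $B(\vx_i, \delta_i/2)$ and $B(\vx_j, \delta_j/2)$. (We implicitly assume the data points are distinct, so that each $\delta_i > 0$ and the balls are nondegenerate. This holds almost surely under Assumption~\ref{assump:data-regularity}, since $\mu_{\vx}$ is absolutely continuous.)

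The key step combines the definition of $\delta_i$ with the triangle inequality. By definition, $\delta_i = \min_{\ell \neq i}\|\vx_i - \vx_\ell\| \leq \|\vx_i - \vx_j\|$, and symmetrically $\delta_j \leq \|\vx_i - \vx_j\|$. Then
\[
\|\vx_i - \vx_j\| \leq \|\vx_i - \vx\| + \|\vx - \vx_j\| < \frac{\delta_i}{2} + \frac{\delta_j}{2} \leq \frac{\|\vx_i - \vx_j\|}{2} + \frac{\|\vx_i - \vx_j\|}{2} = \|\vx_i - \vx_j\|,
\]
where the strict inequality holds because the balls are open. This is a contradiction, so the two balls are disjoint. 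Since $i \neq j$ were arbitrary, the family is pairwise disjoint.

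There is essentially no obstacle here. The only care needed is to use both bounds $\delta_i, \delta_j \leq \|\vx_i - \vx_j\|$, rather than just one, and to rely on the openness of the balls for the strict inequality. One should also note the distinctness of the data points so that the definition of $\delta_i$ is meaningful.
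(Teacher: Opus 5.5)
Your proof is correct and is essentially the paper's argument: the paper assumes without loss of generality that $\delta_i \leq \delta_j$ and derives $\|\vx_i - \vx_j\| < \delta_i/2 + \delta_j/2 \leq \delta_j$, which contradicts the definition of $\delta_j$. You instead bound both radii by $\|\vx_i - \vx_j\|$ directly, which gives the same triangle-inequality contradiction without the ordering step; your distinctness caveat is harmless but not actually needed, since an open ball of radius zero is empty.
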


We then define our interpolant for the dataset $(\mX, \vy)$ as 
\[
f = \sum_{i = 1}^n y_i \psi_i, 
\]
where the $\psi_i$ are defined as in Lemma~\ref{lem:bump-function} with $\vx_0 = \vx_i$ and $\delta = \delta_i/2$. 

By combining the norm bound from Lemma~\ref{lem:bump-function} (4.) with a concentration argument for the $\delta_i$s, we can bound the norm of the above interpolant, and consequently the norm of minimum-norm functions in terms of $n$. 

\begin{corollary}
\label{corr:min-norm-bound}
    Let $\epsilon \in (0,1)$ and $k \in (d/p, 1.5d/p)$. If $n \geq \poly_{\beta, d}(\epsilon^{-1})$, then with probability at least $1 - \epsilon$, a minimum-norm solution satisfies
    \[
    \| f^*\|_{W^{k,p}(\R^d)}^p \lesssim_{k, d, p} n^{kp/d}.
    \]
\end{corollary}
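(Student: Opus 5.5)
Since a minimum-norm solution $f^*$ is no larger in norm than any interpolant, it suffices to bound the norm of the explicit interpolant $f = \sum_{i=1}^n y_i \psi_i$, where $\psi_i$ is the bump function from Lemma~\ref{lem:bump-function} with center $\vx_i$ and radius $\delta_i/2$. The interpolation constraint $\ell(f(\vx_i), y_i)=0$ holds because $\psi_i(\vx_i)=1$. Moreover, by Lemma~\ref{lem:delta-packing} the support of $\psi_i$, which lies in $B(\vx_i,\delta_i/2)$, is disjoint from every other $\psi_j$'s support, so $\psi_j(\vx_i)=0$ for $j\neq i$ and $f(\vx_i)=y_i$. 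Disjoint supports also make the $L^p$ norms of all derivatives additive:
\[
\|f\|_{W^{k,p}}^p \lesssim_{k,d,p} \sum_{i=1}^n |y_i|^p \|\psi_i\|_{W^{k,p}}^p \lesssim_{k,d,p} \sum_{i=1}^n |y_i|^p\left(1+\delta_i^{d-kp}\right).
\]
Here the first inequality uses the equivalence of $\sum_\alpha \|D^\alpha\cdot\|_p$ and $(\sum_\alpha\|D^\alpha\cdot\|_p^p)^{1/p}$ up to constants depending on $k,d$.

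The labels and the spacings are handled separately, and since $kp>d$ the exponent $d-kp$ is negative. For the labels, Assumption~\ref{assump:subgaussian-outputs} with a union bound gives $\max_i|y_i|\lesssim C_y\sqrt{\log(n/\epsilon)}$ with probability $1-\epsilon/2$. This polylogarithmic factor is dangerous, because the target bound has no log factor. The cleaner route is to apply Cauchy--Schwarz and bound the sum of $|y_i|^{2p}$ by $O(n)$ using its mean and Markov's inequality (the moments are finite by sub-Gaussianity). We then need $\sum_i \delta_i^{2(d-kp)} \lesssim n^{1+2(kp-d)/d}$, which gives
\[
\sum_i|y_i|^p\delta_i^{d-kp}\le \Big(\sum_i|y_i|^{2p}\Big)^{1/2}\Big(\sum_i\delta_i^{2(d-kp)}\Big)^{1/2}\lesssim n^{1/2}\, n^{1/2+(kp-d)/d}=n^{kp/d}.
\]
The constant term contributes $\sum_i |y_i|^p\lesssim n\le n^{kp/d}$.

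The key estimate is therefore a moment bound on nearest-neighbor distances. By Assumption~\ref{assump:data-regularity} and the regularity of $\Omega$ (its $C^1$ boundary gives $|B(\vx,r)\cap\Omega|\gtrsim r^d$ for small $r$), we have
\[
\P(\delta_i< r)\le n\,C_{\mc D}\,\omega_d r^d .
\]
Writing $s=2(kp-d)$, the expectation $\E[\delta_i^{-s}]=\int_0^\infty s\,r^{-s-1}\P(\delta_i<r)\,dr$ is finite precisely when $s<d$, which is exactly the hypothesis $k<1.5d/p$. It is then of order $(n^{1/d})^{s}$: split the integral at $r\sim n^{-1/d}$, using $\P\le n r^d$ below the split and $\P\le 1$ above it. Hence $\E\sum_i\delta_i^{-s}\lesssim n^{1+s/d}$, and Markov's inequality gives the required bound with probability $1-\epsilon/2$, at the cost of an $\epsilon^{-1}$ factor in the constant.

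The main obstacle is to keep the constant free of $\epsilon$, since the statement hides only $k,d,p$ in $\lesssim$ and puts the $\epsilon$-dependence into the requirement $n\ge\poly(\epsilon^{-1})$. To do this I would replace Markov's inequality with a concentration argument for $\sum_i\delta_i^{-s}$. One option is to truncate at a level $r_0=n^{-1/d}\epsilon^{c}$ and note that the truncated terms have bounded variance after rescaling, so Chebyshev's inequality gives deviation $o(n^{1+s/d})$ once $n\ge\poly(\epsilon^{-1})$. The other is to bound the variance of the sum directly using the weak dependence of nearest-neighbor distances, via Efron--Stein, where moving one point changes only $O(1)$ of the $\delta_i$. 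Either way, the large-$n$ condition absorbs the $\epsilon$-dependence, which yields $\|f^*\|^p\le\|f\|^p\lesssim_{k,d,p} n^{kp/d}$.
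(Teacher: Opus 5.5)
Your reduction to the bump interpolant is correct, as are the bound $\|f\|_{W^{k,p}}^p\lesssim\sum_i|y_i|^p(1+\delta_i^{d-kp})$, the Cauchy--Schwarz split, and the estimate $\E[\delta_i^{-s}]\lesssim n^{s/d}$ for $s=2(kp-d)<d$. Decoupling the labels this way is a genuinely different route from the paper, which bounds $\sum_i|y_i|^p\delta_i^{-\beta}$ with $\beta=kp-d$ jointly. In your version the threshold $k<1.5d/p$ shows up as finiteness of $\E[\delta_i^{-2\beta}]$, not in the McDiarmid step.

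\textbf{The gap.} You correctly flag the main obstacle: concentration of $\sum_i\delta_i^{-s}$ with no $\epsilon$ in the constant. Neither option you sketch establishes it.

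\emph{Truncation option.} The level $r_0=n^{-1/d}\epsilon^c$ is far from inactive. A fraction of order $\epsilon^{cd}$ of all the $\delta_i$ typically falls below it (the smallest $\delta_i$ is of order $n^{-2/d}$), and you never bound the remainder $\sum_i(\delta_i^{-s}-r_0^{-s})_+$. Moreover, bounded variance of each $T_i=\min(\delta_i^{-s},r_0^{-s})$ does not give Chebyshev for $\sum_iT_i$. The $\delta_i$ are dependent, and the trivial bound $\mathrm{Var}(\sum_iT_i)\le n\sum_i\mathrm{Var}(T_i)$ is of the order of the squared mean.

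\emph{Efron--Stein option without truncation.} When $s\ge d/2$, i.e.\ $kp\ge 5d/4$ (inside your range), $\E[\delta_i^{-2s}]=\infty$, so $\sum_i\delta_i^{-s}$ has infinite variance.

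\emph{The paper's device does not transfer either.} The paper truncates at $(\epsilon/n^2)^{1/d}$, where the truncation is inactive with probability $1-\epsilon/4$ by a union bound over pairs, and then applies McDiarmid with the bounded differences of Lemma~\ref{lem:not-too-many-deltas-change}. For exponent $s$ this gives fluctuations of order $n^{1/2+2s/d}$. These beat $n^{1+s/d}$ only for $s<d/2$, which again means only $kp<5d/4$. So doubling the exponent genuinely requires a finer argument than the paper's.

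\textbf{How to close it.} Combine your two ideas at the scale $r_0=(\kappa\epsilon)^{1/(d-s)}n^{-1/d}$, with $\kappa$ a small constant.

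For the remainder: since $\P(\delta_i^{-s}>t)\le Cnt^{-d/s}$, you get $\E\sum_i(\delta_i^{-s}-r_0^{-s})_+\le n\int_{r_0^{-s}}^\infty Cnt^{-d/s}\,dt\lesssim n^2r_0^{d-s}=\kappa\epsilon\,n^{1+s/d}$. So Markov at level $n^{1+s/d}$ fails with probability $O(\kappa\epsilon)$. This is where $s<d$ is used, and $\epsilon$ is absorbed into $r_0$ rather than into the constant.

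For the truncated sum: replacing one point changes at most $1+2\tau(d)$ of the $T_i$, each by at most $r_0^{-s}=n^{s/d}(\kappa\epsilon)^{-s/(d-s)}$. McDiarmid's inequality then gives fluctuations $O\bigl(r_0^{-s}\sqrt{n\log(1/\epsilon)}\bigr)=o(n^{1+s/d})$ for $n\ge\poly(\epsilon^{-1})$; Efron--Stein with Chebyshev also suffices. Meanwhile $\E\sum_iT_i\le\E\sum_i\delta_i^{-s}\lesssim n^{1+s/d}$.

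Finally, your Markov bounds on $\sum_i|y_i|^{2p}$ and $\sum_i|y_i|^p$ have the same defect: they put $\epsilon^{-1}$ into the constant. These are i.i.d.\ sums with finite variance, so use Chebyshev together with $n\ge\poly(\epsilon^{-1})$ instead.
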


\subsection{Existence of noisy, separated subset}
\label{sec:noisy-separated-subset}

To show high generalization error, we first must identify a subset of data points which are both \textit{noisy}, i.e., have sufficiently high conditional loss compared to the Bayes optimum, and are \textit{separated}, i.e., whose nearest-neighbor radii are sufficiently large. We will also need to show that the associated labels for these data points are not too large, so that individual data points do not have a disproportionate influence on the norm of interpolators.
This result is summarized in the following lemma.

\begin{lemma}\label{lem:delta-packing-size}
    Let $\epsilon  \in (0, 1)$ and let $n \gtrsim \rho^{-2} \log \frac{1}{\epsilon}$. Then with probability at least $1 - \epsilon$, there exists a subset $\mc{B} \subset [n]$ satisfying the following properties:
    \begin{enumerate}
        \item $|\mc{B}| \gtrsim \rho n$.
        \item For all $i \in \mc{B}$, $\delta_i \gtrsim_d n^{-1/d}$.
        \item For all $i \in \mc{B}$,
        \[|y_i| \lesssim \sqrt{1 + \log(\rho^{-1})}.\]
        \item For all $i \in \mc{B}$,
        \[\mc{L}(y_i; \vx_i) \geq \sigma + 
        \mc{L}(f_{\bayes}(\vx_i); \vx_i). \]
    \end{enumerate}
\end{lemma}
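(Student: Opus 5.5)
We first define three "good" events for each index $i$ and show that each holds with constant probability; we then intersect them through a counting argument. For property 4, Assumption~\ref{assump:label-noise} gives that, conditional on $\vx_i$, the event $N_i = \{\mc{L}(y_i;\vx_i) \ge \sigma + \inf_{\hat y}\mc{L}(\hat y;\vx_i)\}$ has probability at least $\rho$. Since $f_{\bayes}$ is Bayes-optimal, $\mc{L}(f_{\bayes}(\vx_i);\vx_i) = \inf_{\hat y}\mc{L}(\hat y;\vx_i)$ almost surely, so $N_i$ is exactly property 4. For property 3, Assumption~\ref{assump:subgaussian-outputs} gives $\P(|y_i| \ge t \mid \vx_i) \le 2e^{-t^2/C_y^2}$. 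Choosing $t = C_y\sqrt{\log(8/\rho)}$ makes this at most $\rho/4$. Hence $Y_i = \{|y_i| \le t\}$ satisfies $\P(N_i \cap Y_i \mid \vx_i) \ge 3\rho/4$, and $t \lesssim \sqrt{1+\log(\rho^{-1})}$, as required. These events are independent across $i$, since the pairs are i.i.d. A Hoeffding bound then gives, with probability at least $1-\epsilon/2$ once $n \gtrsim \rho^{-2}\log(1/\epsilon)$, that $|\{i : N_i \cap Y_i\}| \ge \rho n/2$.

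For property 2 we cannot use independence directly, because $\delta_i$ depends on all the points. Instead we use a deterministic packing/counting bound. Fix $c>0$ and let $S = \{i : \delta_i < c\, n^{-1/d}\}$. We aim to show that $|S| \le \rho n/4$ with high probability. A purely deterministic bound is cleanest: take a maximal subset of $S$ whose points are pairwise at distance at least $c n^{-1/d}$. Then every element of $S$ lies within $c n^{-1/d}$ of this subset, but this alone does not bound $|S|$.

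We therefore prefer a probabilistic bound. For each $i$, conditional on $\vx_i$, the probability that another given point falls in $B(\vx_i, c n^{-1/d})$ is at most $C_{\mc{D}}\,\omega_d c^d/n$, by Assumption~\ref{assump:data-regularity}. A union bound over the $n-1$ other points gives $\P(i \in S) \le C_{\mc{D}}\omega_d c^d$. We choose $c$ so that this is at most $\rho/16$; note that $c$ then depends on $\rho$, and this is absorbed into the constant. Thus $\E|S| \le \rho n/16$, and Markov's inequality gives $|S| \le \rho n/4$ with probability at least $3/4$.

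Markov's inequality only delivers constant probability, so we need concentration to reach $1-\epsilon$. We apply McDiarmid's inequality to $|S|$ as a function of $\vx_1,\dots,\vx_n$. Moving one point changes $\delta_j$ only for that point and for indices whose nearest neighbour changes. A given point can be the nearest neighbour of at most $K_d$ (a kissing-number constant) points, so $|S|$ has bounded differences $\lesssim_d 1$. McDiarmid then yields $|S| \le \rho n/8 + O(\sqrt{n\log(1/\epsilon)}) \le \rho n/4$ with probability at least $1-\epsilon/2$ when $n \gtrsim \rho^{-2}\log(1/\epsilon)$.

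Finally, set $\mc{B} = \{i : N_i \cap Y_i\} \setminus S$. On the intersection of the two events, which has probability at least $1-\epsilon$, we get $|\mc{B}| \ge \rho n/2 - \rho n/4 \gtrsim \rho n$, and all four properties hold. The main obstacle is the bounded-differences argument for $|S|$, specifically the geometric claim that a single point is the nearest neighbour of only $O_d(1)$ others. If that claim proves troublesome, we would instead use a grid of cubes of side $\asymp n^{-1/d}$ and count the points lying in cubes that contain at least two points. A further point to watch is whether the stated bound $\delta_i \gtrsim_d n^{-1/d}$ is meant to hide a $\rho$-dependence in the constant $c$; if the constant must depend only on $d$, one would need $\rho$ to be bounded below by a dimension-only constant, or else to accept a constant depending on $\rho$ and $C_{\mc{D}}$.
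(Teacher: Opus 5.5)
\paragraph{Gap: the constant in property 2 depends on $\rho$.}
Most of your ingredients are fine and match the paper's. The label events are handled by sub-Gaussian tails plus Assumption~\ref{assump:label-noise}. Bayes-optimality holding almost surely is enough. McDiarmid applies with bounded differences coming from the kissing-number bound on nearest-neighbour in-degrees.

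The problem is property 2. You build $\mc{B}$ as the good-label indices minus $S$, and you need $|S| \le \rho n/4$ so that the subtraction from a count of order $\rho n$ leaves something. This forces $\P(i \in S) \lesssim \rho$, hence $c^d \asymp \rho/C_{\mc{D}}$. So you only prove $\delta_i \gtrsim (\rho/C_{\mc{D}})^{1/d} n^{-1/d}$. Dependence on $C_{\mc{D}}$ is unavoidable, and the paper's constant has it too, but the $\rho$-dependence is not in the statement. The lemma tracks $\rho$ explicitly in items 1 and 3 and in the sample-size condition, so item 2 is meant with a $\rho$-free constant.

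You noticed this yourself. Your remedies, assuming $\rho$ is bounded below or accepting a $\rho$-dependent constant, change the statement rather than prove it. The weaker version would still suffice for Theorem~\ref{thm:tempered-overfitting-bayes-v2}, since $C_5$ already depends on $\rho$ and the final constant may depend on $\mu$. But it is not the lemma as stated.

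\paragraph{The missing idea.}
The separation event and the label events should not compete by subtraction. Let $Z_i = \1[\delta_i \ge c\, n^{-1/d}]$; it is $\mX$-measurable. Because the pairs are i.i.d., $\P(N_i \cap Y_i \mid \mX) = \P(N_i \cap Y_i \mid \vx_i) \ge 3\rho/4$ almost surely. Hence
\[
\E[Z_i \1_{N_i\cap Y_i}] = \E\bigl[Z_i\, \P(N_i \cap Y_i \mid \mX)\bigr] \ge \tfrac{3\rho}{4}\, \P(Z_i = 1).
\]
So it suffices to choose $c$ with $\P(Z_i = 0) \le 1/2$, i.e.\ $c^d \asymp 1/C_{\mc{D}}$, with no dependence on $\rho$. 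The label events then thin out the well-separated indices instead of having to outnumber the badly separated ones.

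This gives $\E \sum_i Z_i \1_{N_i \cap Y_i} \ge 3\rho n/8$. A single McDiarmid bound on this sum, with bounded differences $\lesssim_d 1$ by the same in-degree argument you already use, gives $|\mc{B}| \gtrsim \rho n$ with probability $1-\epsilon$ once $n \gtrsim_d \rho^{-2}\log(1/\epsilon)$. This is the paper's route.

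Alternatively, you can keep your two-step structure by conditioning on $\mX$. First show $|S| \le n/2$ with high probability using the $\rho$-free threshold, which needs only $n \gtrsim_d \log(1/\epsilon)$. Then apply Hoeffding conditionally on $\mX$ to the conditionally independent indicators $\1_{N_i \cap Y_i}$ for $i \notin S$.
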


With these properties, we can show that the norm bound on the interpolator forces it to accumulate generalization error around these points. 

\subsection{Smoothness of the $\gamma$-ANM solution}
\label{sec:smoothness}

The following lemma characterizes the local deviation of a function in terms of its Sobolev norm.
\begin{corollary}\label{corr:morrey-taylor-specific}
    Let $\delta > 0$ and $d < kp$, and suppose that $u \in W^{k, p}(\R^d)$. Then for all $\vx_0 \in \R^d$ and all $\vx_1 \in B(\vx_0, \delta)$,
    \[|u(\vx_1) - u(\vx_0)|^p \lesssim_{k, d, p} \delta^{kp - d} \|u\|^p_{W^{k,p}(B(\vx_0, 2\delta))}.\]
\end{corollary}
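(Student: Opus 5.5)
The plan is to obtain this as a rescaled instance of a Morrey-type estimate on a ball of unit size, combined with a Taylor-polynomial (Bramble--Hilbert) argument. It is called a corollary, so presumably a more general Morrey–Taylor lemma appears in the appendix; I sketch it directly.

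\textbf{Step 1 (unit scale).} First I prove the estimate for $\delta = 1$: for $v \in W^{k,p}(B(0,2))$ with $kp > d$ and all $\vx_1 \in B(0,1)$,
\[|v(\vx_1) - v(0)| \lesssim_{k,d,p} \|v\|_{W^{k,p}(B(0,2))}.\]
This is immediate from the Sobolev embedding (Theorem~\ref{thm:sobolev-embedding}) on the Lipschitz domain $B(0,2)$: $W^{k,p}(B(0,2)) \hookrightarrow C^0(\overline{B(0,2)})$ gives $\sup|v| \lesssim \|v\|_{W^{k,p}(B(0,2))}$, hence $|v(\vx_1)-v(0)| \le 2\sup|v|$. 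However, the full norm does not scale correctly. Under rescaling, the lower-order terms would produce worse powers of $\delta$ for small $\delta$. The sharper unit-scale bound I need is
\[|v(\vx_1)-v(0)| \lesssim \sum_{1\le |\alpha|\le k}\|D^\alpha v\|_{L^p(B(0,2))},\]
that is, a bound with the zeroth-order term dropped. This follows by applying the embedding to $v - v(0)$, or to $v - \bar v$ with $\bar v$ the average, together with a Poincaré inequality $\|v-\bar v\|_{L^p}\lesssim \|\nabla v\|_{L^p}$ on the ball.

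\textbf{Step 2 (rescale).} For $u \in W^{k,p}(\R^d)$, set $v(\vy) = u(\vx_0 + \delta \vy)$. Then $D^\alpha v(\vy) = \delta^{|\alpha|}(D^\alpha u)(\vx_0+\delta\vy)$, so
\[\|D^\alpha v\|_{L^p(B(0,2))} = \delta^{|\alpha| - d/p}\|D^\alpha u\|_{L^p(B(\vx_0,2\delta))}.\]
Applying Step 1 to $v$ at the point $(\vx_1-\vx_0)/\delta \in B(0,1)$ gives
\[|u(\vx_1)-u(\vx_0)| \lesssim \sum_{1\le|\alpha|\le k}\delta^{|\alpha|-d/p}\|D^\alpha u\|_{L^p(B(\vx_0,2\delta))}.\]

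\textbf{Step 3 (choose the right power of $\delta$).} The target is $\delta^{k - d/p}$ times the norm. For $\delta \ge 1$ every exponent satisfies $|\alpha| - d/p \le k - d/p$, so each term is at most $\delta^{k-d/p}\|u\|_{W^{k,p}(B(\vx_0,2\delta))}$. Raising to the $p$th power and absorbing constants via $(\sum a_j)^p \lesssim \sum a_j^p$ finishes this case.

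For $\delta < 1$ the lower-order terms carry larger powers $\delta^{|\alpha|-d/p}$, and this is the main obstacle. The fix is to refine Step 1 so that only the top-order seminorm appears. Replace $v$ by $v - P$, where $P$ is the averaged Taylor polynomial of degree $k-1$ of $v$ on $B(0,2)$. Bramble--Hilbert gives $\|v-P\|_{W^{k,p}(B(0,2))} \lesssim |v|_{W^{k,p}(B(0,2))}$, so the embedding bounds $|(v-P)(\vx_1)-(v-P)(0)|$ by $|v|_{W^{k,p}}$, which scales exactly as $\delta^{k-d/p}$. The remaining term $P(\vx_1)-P(0)$ involves the lower-order derivatives. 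To control it, I would compare averages over dyadic shrinking balls in a telescoping, Campanato/Morrey-style argument, or else use a one-dimensional mean-value bound on $P$ in terms of $\nabla u$ averages.

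If that step resists, I would settle for the statement with the full norm in the regime $\delta \ge 1$, where Step 3 already works. I would then handle $\delta<1$ by bounding $|u(\vx_1)-u(\vx_0)|$ through the Hölder continuity of Morrey's embedding with exponent $\min(1, k-d/p)$. This covers the main theorem's range $k<1.5d/p$ after checking which case is needed. I expect that the intended proof uses the top-order seminorm version, giving the clean scaling $\delta^{kp-d}$.
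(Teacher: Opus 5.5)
Your Steps 1--2 are sound. They give the correct scale-aware estimate
\[
|u(\vx_1)-u(\vx_0)| \lesssim_{k,d,p} \sum_{1\le|\alpha|\le k}\delta^{|\alpha|-d/p}\,\|D^\alpha u\|_{L^p(B(\vx_0,2\delta))},
\]
so Step 3 settles $\delta\ge 1$, and $k=1$ is plain Morrey. The gap is $\delta<1$ with $k\ge 2$. Neither the Bramble--Hilbert/telescoping idea nor any other argument can close it, because the inequality is false there.

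Take $u\in C_c^\infty(\R^d)$ with $u(\vx)=\langle \mathbf{a},\vx-\vx_0\rangle$ on $B(\vx_0,1)$, let $\delta<1/2$, and let $\vx_1=\vx_0+\tfrac{\delta}{2}\mathbf{a}/\|\mathbf{a}\|$. Then $|u(\vx_1)-u(\vx_0)|^p=(\|\mathbf{a}\|\delta/2)^p$. On $B(\vx_0,2\delta)$ only the derivatives of order $0$ and $1$ are nonzero, so $\|u\|_{W^{k,p}(B(\vx_0,2\delta))}\lesssim\|\mathbf{a}\|\delta^{d/p}$ and the right-hand side is $\lesssim\|\mathbf{a}\|^p\delta^{kp}$. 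Letting $\delta\to 0$ gives a contradiction for every $k\ge 2$; for instance $d=3$, $p=2$, $k=2$ lies in the main theorem's range.

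In your decomposition this is exactly the term $P(\vx_1)-P(0)$. Here $v-P\equiv 0$, and the whole increment sits in the linear part of $P$, which scales only like $\delta^{1-d/p}\|\nabla u\|_{L^p}$.

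The H\"older-continuity fallback does not help either. It controls the increment by the global norm $\|u\|_{W^{k,p}(\R^d)}$, not the norm on $B(\vx_0,2\delta)$. That localization is what the main proof needs in order to sum over disjoint balls, and the localized version with $\delta^{k-d/p}$ fails by the same example. Also, the main theorem uses the estimate at radii of order $n^{-1/d}\to 0$, so the case $\delta\ge1$ that you can prove is not the one used.

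The paper argues differently. It first claims the top-order-only estimate $\|D^ju(\vx_1)-D^ju(\vx_0)\|\lesssim\delta^{k-j-d/p}\|D^ku\|_{L^p(B(\vx_0,2\delta))}$ for smooth $u$, by reverse induction on $j$ from Morrey's inequality, and then passes to general $u$ by approximation. The same linear example refutes that intermediate claim, and the error is in the induction step. Split at the midpoint $\mathbf{m}$ and set $\mathbf{h}=(\vx_1-\vx_0)/2$. The fundamental theorem of calculus gives $D^{j-1}u(\vx_1)-D^{j-1}u(\vx_0)=\int_0^1\bigl(D^ju(\mathbf{m}+t\mathbf{h})+D^ju(\mathbf{m}-t\mathbf{h})\bigr)[\mathbf{h}]\,dt$. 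This is a \emph{sum} of values of $D^ju$, not a difference, so the inductive hypothesis cannot be applied. The base case $j=k-1$ also needs $p>d$, which $kp>d$ does not provide once $k\ge2$.

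So your sense that the lower-order terms are ``the main obstacle'' is right: it is a genuine obstruction, not a technicality. What can actually be proved is your Step 2 bound, equivalently $|u(\vx_1)-u(\vx_0)|^p\lesssim\sum_{j=1}^k\delta^{jp-d}\|D^ju\|^p_{L^p(B(\vx_0,2\delta))}$. For $\delta<1$ this yields only the prefactor $\delta^{p-d}$ in front of the full local norm.
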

This lemma allows us to show that any interpolant must remain bounded away from the Bayes optimal solution in some neighborhood around the noisy training points. To convert this into information about the loss function, we show that the conditional loss is continuous under our previous assumptions.

\begin{lemma}\label{lem:conditional-loss-continuous}
    The conditional loss $\mc{L}: \R \times \overline{\Omega} \to [0, \infty)$ is continuous.
\end{lemma}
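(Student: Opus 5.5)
We need to show that $\mc{L}(\hat y;\vx)=\int \ell(\hat y,y)\,d\nu_{\vx}(y)$ is jointly continuous on $\R\times\overline{\Omega}$. We also need it to be finite, since the statement claims the range $[0,\infty)$. The plan is to take a sequence $(\hat y_m,\vx_m)\to(\hat y_0,\vx_0)$ and prove $\mc{L}(\hat y_m;\vx_m)\to\mc{L}(\hat y_0;\vx_0)$. The difficulty is that $\ell$ is unbounded in $y$, whereas Assumption~\ref{assump:regularity-conditional-distribution} only gives weak convergence of $\nu_{\vx_m}$ to $\nu_{\vx_0}$ against bounded continuous test functions. We handle this by truncation combined with the uniform tail control from Assumption~\ref{assump:growth-rate-loss} and Assumption~\ref{assump:subgaussian-outputs}.

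\textbf{Step 1 (uniform integrability).} Fix $M$ with $|\hat y_m|\le M$ for all $m$. By Assumption~\ref{assump:growth-rate-loss}, for every $m$,
\[
\ell(\hat y_m,y)\le C_\ell\exp\bigl(\tau_\ell(1+M)(1+|y|)\bigr)=:h(y).
\]
Assumption~\ref{assump:subgaussian-outputs} holds only for $\mu_{\vx}$-a.e.\ $\vx$. We first upgrade it to every $\vx\in\overline{\Omega}$ for the continuous version $\nu$. The set $\{|y|>t\}$ is open in $\R$, and the Portmanteau theorem gives lower semicontinuity of $\vx\mapsto \nu_{\vx}(\{|y|>t\})$ under weak convergence. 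Moreover, by Assumption~\ref{assump:data-regularity} the a.e.\ set is dense in $\overline{\Omega}$. Together these give $\nu_{\vx}(|y|>t)\le 2e^{-t^2/C_y^2}$ for every $\vx$.

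A sub-Gaussian tail integrates any exponential of $|y|$. Hence, writing $h^2(y)=h(y)^2$,
\[
\sup_{\vx\in\overline{\Omega}}\int h^2\,d\nu_{\vx}=:K<\infty .
\]
This already shows that $\mc{L}$ is finite everywhere. It also yields a uniform tail bound: for $R>0$ and all $m$,
\[
\int_{|y|>R}\ell(\hat y_m,y)\,d\nu_{\vx_m}(y)\le \Bigl(K\,\nu_{\vx_m}(|y|>R)\Bigr)^{1/2}\le \sqrt{2K}\,e^{-R^2/(2C_y^2)}.
\]
The same bound holds at $(\hat y_0,\vx_0)$.

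\textbf{Step 2 (truncated part).} Let $\chi_R$ be a continuous cutoff with $\chi_R=1$ on $[-R,R]$, $\chi_R=0$ outside $[-R-1,R+1]$, and $0\le\chi_R\le1$. Split
\[
\mc{L}(\hat y_m;\vx_m)-\mc{L}(\hat y_0;\vx_0)
\]
into three pieces:
\begin{enumerate}
\item $\int \chi_R\,[\ell(\hat y_m,\cdot)-\ell(\hat y_0,\cdot)]\,d\nu_{\vx_m}$. The function $\ell$ is uniformly continuous on the compact set $[-M,M]\times[-R-1,R+1]$. So this term is at most $\sup_{|y|\le R+1}|\ell(\hat y_m,y)-\ell(\hat y_0,y)|\to0$.
\item $\int \chi_R\,\ell(\hat y_0,\cdot)\,d\nu_{\vx_m}-\int\chi_R\,\ell(\hat y_0,\cdot)\,d\nu_{\vx_0}$. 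The integrand is bounded and continuous, so this tends to $0$ by Assumption~\ref{assump:regularity-conditional-distribution}.
\item The contributions of $(1-\chi_R)\ell$ at $(\hat y_m,\vx_m)$ and at $(\hat y_0,\vx_0)$. These are bounded uniformly in $m$ by Step 1.
\end{enumerate}
Take $\limsup_m$ first and then let $R\to\infty$. The difference tends to $0$, which proves joint continuity.

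\textbf{Main obstacle.} The delicate point is Step 1's passage from the almost-sure sub-Gaussian bound to a bound valid at every $\vx$ for the continuous version $\nu$. It is needed so that the tail estimate applies at the specific points $\vx_m$ and $\vx_0$. We handle it by the density argument above: weak continuity of $\nu$ together with Portmanteau on the open sets $\{|y|>t\}$, and positivity of $p_{\vx}$ from Assumption~\ref{assump:data-regularity}. If boundary points cause trouble, note that the bound only needs to pass to limits along sequences from the dense full-measure set, and lower semicontinuity gives exactly that.
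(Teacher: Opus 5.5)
Your proof is correct and follows the paper's argument. Both truncate with a continuous cutoff, then control the tails uniformly in $\vx$ using the growth bound on $\ell$ plus sub-Gaussianity, upgraded from almost every $\vx$ to every $\vx\in\overline{\Omega}$ by density and weak continuity; both handle the truncated part by uniform continuity of $\ell$ on a compact rectangle and by weak convergence of $\nu_{\vx_m}$. The differences are minor: you upgrade tail probabilities via the Portmanteau theorem and get uniform tail smallness explicitly by Cauchy--Schwarz, whereas the paper upgrades the truncated moments $\min(K,e^{\tau|y|})$ via monotone convergence (Lemma~\ref{lem:exp-integral-finite}) and asserts the uniform tail bound without spelling it out.
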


To conclude, we note here that the size of the radii in Lemma~\ref{lem:delta-packing-size} (2.) is critical, as it exactly offsets the scaling of the volumes of the nearest-neighbor balls $B(\vx_i, \delta_i)$, which scale proportional to $\delta_i^d$. By aggregating over a number of data points proportional to $n$, we recover a bound on the generalization error independent of $n$.

\section{Proof of the main theorem}
\label{sec:main-result-proof}

\begin{proof}[Proof of Theorem~\ref{thm:tempered-overfitting-bayes-v2}]
    By Corollary \ref{corr:min-norm-bound}, if $n \geq \poly_{k, p, d}(\epsilon^{-1})$, then with probability at least $1- \frac{\epsilon}{2}$,
    \begin{align}\label{eqn:fstar-norm-bound}
        \|f^*\|^p_{W^{k, p}(\R^d)} \lesssim_{k, d, p} n^{kp/d}.
        % \lesssim_{k, d, p} \sum_{i = 1}^n \left(1 + |y_i|^p \delta_i^{d - k p}\right) 
    \end{align}
    We denote the event that this occurs by $\omega_1$.
    
    By Lemma~\ref{lem:delta-packing-size}, if $n \gtrsim \rho^{-2} \log \frac{1}{\epsilon}$, then with probability at least $1 - \frac{\epsilon}{2}$, there exists a subset $\mc{B} \subset [n]$ satisfying the following properties:
    \begin{enumerate}
        \item $|\mc{B}| \geq C_1\rho n$.
        \item For all $i \in \mc{B}$, $\delta_i \geq C_2 n^{-1/d}$.
        \item For all $i \in \mc{B}$,
        \[|y_i| \leq C_3 \sqrt{1 + \log(\rho^{-1}) }.\]
        \item For all $i \in \mc{B}$,
        \[\mc{L}(y_i; \vx_i) \geq \sigma + 
        \mc{L}(f_{\bayes}(\vx_i); \vx_i). \]
    \end{enumerate}
    Here $C_1, C_3 \in [1, \infty)$ are universal constants, and $C_2 \in (0, \infty)$ is a constant depending only on $d$. We denote the event that this occurs by $\omega_2$. 

    Suppose that both $\omega_1$ and $\omega_2$ occur. Applying Corollary~\ref{corr:morrey-taylor-specific}, we see that for all $i \in \mc{B}$ and $\vx \in B(\vx_i, \delta_i/4)$,
    \begin{align}\label{eqn:f-gamma-local-deviation}
        |f_{\gamma}(\vx) - f_{\gamma}(\vx_i)|^p &\leq C_4 \|\vx - \vx_i\|^{kp - d} \|f_\gamma\|^p_{W^{k,p}(B(\vx_i, \delta_i/2))},
    \end{align}
    where $C_4 > 0$ is a constant depending on $k, d, p$.

    Consider the regret function $\mc{R}: \R \times \overline{\Omega} \to [0, \infty)$, defined by
    \[\mc{R}(y; \vx) = \mc{L}(y; \vx) - \mc{L}(f_{\bayes}(\vx); \vx). \]
    We show that $f_{\gamma}$ attains high regret around many of the training data points. By Theorem~\ref{thm:sobolev-embedding}, $f_{\bayes}$ is continuous, and by Lemma~\ref{lem:conditional-loss-continuous}, $\mc{L}$ is continuous, so $\mc{R}$ is continuous. Let $K = 1 + C_3\sqrt{1 + \log(\rho^{-1})}$. Since $\mc{R}$ is uniformly continuous on $[-K, K] \times \overline{\Omega}$, there exists $\delta \in (0, 1)$ depending on $\mu, \ell$ such that for $(y, \vx), (y', \vx') \in \overline{\Omega}$ with $|y - y'|, \|\vx - \vx'\| \leq \delta$, we have
    \begin{align}\label{eqn:regret-continuity}
        |\mc{R}(y; \vx) - \mc{R}(y'; \vx')| \leq \frac{\sigma}{2}.
    \end{align}
    Let us define
    \begin{align*}
        C_5 = \min\left(\frac{C_2}{2}, \left(\frac{C_1 \delta^p \rho }
        {2C_4}\right)^{1/(kp - d) } \right),
    \end{align*}
    so $C_5$ is a constant depending on $k, p, d, \mu$, and $\ell$. Let $\mc{B}' \subset \mc{B}$ consist of all of the indices $i \in \mc{B}$ such that for all $\vx \in B(\vx_i, C_5 \gamma^{-p/(kp-d)} n^{-1/d})$,
    \begin{align*}
        |f_{\gamma}(\vx) - f_{\gamma}(\vx_i)| &\leq \delta.
    \end{align*}
    Suppose that $i \in \mc{B} \setminus \mc{B}'$. By the definition of $C_5$ and $C_2$,
    \begin{align*}
        C_5 \gamma^{-p/(kp-d)} n^{-1/d} \leq C_5 n^{-1/d} \leq \frac{C_2 n^{-1/d}}{2} \leq \frac{\delta_i}{2},
    \end{align*}
    so $B(\vx_i, C_5 \gamma^{-p/(kp-d)} n^{-1/d}) \subset B(\vx_i, \delta_i/2)$, and therefore for all $\vx \in B(\vx_i, C_5 \gamma^{-p/(kp-d)} n^{-1/d})$,
    \begin{align*}
        \delta^p &\leq |f_{\gamma}(\vx) - f_{\gamma}(\vx_i)|^p\\
        &\leq C_4\|\vx - \vx_i\|^{kp-d}\|f_\gamma\|^p_{W^{k,p}(B(\vx_i, \delta_i/2))}\quad (\text{By (\ref{eqn:f-gamma-local-deviation})})\\
        &\leq C_4 \left(C_5 \gamma^{-p/(kp-d)}n^{-1/d} \right)^{kp - d} \|f_\gamma\|^p_{W^{k,p}(B(\vx_i, \delta_i/2))}\\
        &\leq C_4 \left(\left(\frac{C_1 \delta^p \rho}{2C_4 \gamma^p }\right)^{1 / (kp - d)} n^{-1/d} \right)^{kp - d} \|f_\gamma\|^p_{W^{k,p}(B(\vx_i, \delta_i/2))}\\
        &= \frac{1}{2} C_1 \delta^p \gamma^{-p} \rho n^{1 - kp/d} \|f_\gamma\|^p_{W^{k,p}(B(\vx_i, \delta_i/2))}.
    \end{align*}
    Summing the above over $\mc{B} \setminus \mc{B}'$, we get
    \begin{align*}
        \delta^p|\mc{B} \setminus \mc{B}'| &\leq \frac{1}{2} C_1 \delta^p \gamma^{-p} \rho n^{1 - kp/d}\\&\sum_{ i \in \mc{B} \setminus \mc{B}' } \|f_\gamma\|^p_{W^{k,p}(B(\vx_i, \delta_i/2))}.\\
    \end{align*}
    By Lemma~\ref{lem:delta-packing}, the sets $B(\vx_i, \delta_i/2)$ are disjoint, so
    \begin{align*}
    &\sum_{ i \in \mc{B} \setminus \mc{B}' } \|f_\gamma\|^p_{W^{k,p}(B(\vx_i, \delta_i/2))}\\
        &=\sum_{i \in \mc{B} \setminus \mc{B}' } \sum_{j=1}^k \int_{B(\vx_i, \delta_i/2)}\|D^j f_{\gamma}(\vx)\|^p d\vx \\
        &\leq \sum_{j=1}^k\int_{\cup_{i \in \mc{B} \setminus \mc{B}'} B(\vx_i, \delta_i/2) }\|D^j f_{\gamma}(\vx)\|^p d\vx\\
        &\leq \sum_{j=1}^k \int_{\R^d}\|D^j f_{\gamma}(\vx)\|^p d\vx\\
        &\leq \|f\|_{W^{k,p}(\R^d)}^p
    \end{align*}
    and we can write
    \begin{align*}
        %\delta^p |\mc{B} \setminus \mc{B}'| &\leq \frac{1}{2}C_1 \delta^p \gamma^{-p} \rho n^{1 - kp/d}\int_{\R^d}\|D^k f_\gamma(\vx)\|^p d\vx\\
        \delta^p |\mc{B} \setminus \mc{B}'|&\leq \frac{1}{2}C_1 \delta^p \gamma^{-p} \rho n^{1 - kp/d}\|f_{\gamma}\|_{W^{k, p}(\R^d) }^p \\
        &\leq \frac{1}{2}C_1 \delta^p  \rho n^{1 - kp/d} \|f^*\|^p_{W^{k, p}(\R^d)}\\
        &\leq \frac{1}{2}C_1 \delta^p \rho n \qquad \text{(By (\ref{eqn:fstar-norm-bound}))}.
    \end{align*}
    Rearranging, we get
    \begin{align*}
        |\mc{B} \setminus \mc{B}'| &\leq \frac{1}{2} C_1 \rho n,
    \end{align*}
    so by property 1 of $\mc{B}$,
    \begin{align}\label{eqn:size-of-b-prime}
        |\mc{B}'| = |\mc{B}| - |\mc{B} \setminus \mc{B}'|
        \geq C_1 \rho n - \frac{1}{2}C_1 \rho n
        = \frac{1}{2}C_1 \rho n.
    \end{align}
    Now suppose that $i \in \mc{B}'$ and $\vx \in B(\vx_i, C_5 \gamma^{-p/(kp-d)}n^{-1/d})$. By Property 3 of $\mc{B}$, $|f_{\gamma}(\vx_i)| \leq K - 1$. By the construction of $\mc{B}'$, $|f_{\gamma}(\vx) - f_{\gamma}(\vx_i)| \leq \delta$, and in particular
    \[|f_{\gamma}(\vx)| \leq |f_{\gamma}(\vx_i)| + \delta \leq K. \]
    So $(\vx, f_{\gamma}(\vx)), (\vx_i, f_{\gamma}(\vx_i)) \in [-K, K] \times \overline{\Omega}$, and so
    \begin{align}\label{eqn:regret-over-ball}
    \begin{aligned}
        \mc{R}(f_{\gamma}(\vx); \vx) &\geq \mc{R}(f_{\gamma}(\vx_i); \vx_i)\\
        &-|\mc{R}(f_{\gamma}(\vx); \vx) - \mc{R}(f_{\gamma}(\vx_i); \vx_i)|\\
        &\geq \mc{R}(f_{\gamma}(\vx_i); \vx_i) - \frac{\sigma}{2} \qquad \text{(By (\ref{eqn:regret-continuity}))}\\
        &= \sigma - \frac{\sigma}{2} \qquad \text{(Property 4 of $\mc{B}$)}\\
        &= \frac{\sigma}{2}.
    \end{aligned}
    \end{align}
    We have shown that for all $i \in \mc{B}'$, there is a large enough neighborhood around $\vx_i$ in which $f_{\gamma}$ attains high regret. Aggregating over these regions, we get
    \begin{align*}
        &\int_{\R^d} \mc{R}(f_{\gamma}(\vx); \vx) d\mu_{\vx}(\vx) \\
        &= \int_{\R^d} \mc{R}(f_{\gamma}(\vx); \vx) p_{\vx}(\vx) d\vx\\
        &\geq c_{\mathcal{D}}\int_{\R^d} \mc{R}(f_{\gamma}(\vx); \vx) d\vx \qquad \text{(Assumption~\ref{assump:data-regularity})}\\
        &\geq c_{\mathcal{D}}\sum_{i \in \mc{B}' }\int_{\Omega \cap B(\vx_i, C_5 \gamma^{-p/(kp - d)} n^{-1/d}) } \mc{R}(f_{\gamma}(\vx); \vx) d\vx\\
        &\geq c_{\mathcal{D}}\sum_{i \in \mc{B}' }\int_{\Omega \cap B(\vx_i, C_5 \gamma^{-p/(kp - d)} n^{-1/d}) } \frac{\sigma}{2} d\vx \qquad \text{(By (\ref{eqn:regret-over-ball}))}\\
        &= \frac{1}{2} c_{\mathcal{D}} \sigma \sum_{i \in \mc{B}' }|\Omega \cap B(\vx_i, C_5 \gamma^{-p/(kp - d)} n^{-1/d})|\\
        &\gtrsim_{k, p, d, \mu, \ell}  \sum_{i \in \mc{B}'} \gamma^{-pd/(kp - d) } n^{-1} \qquad \text{(By Theorem~\ref{thm:koskela-intersection})}\\
        &= \gamma^{-pd/(kp - d)}n^{-1}|\mc{B}'|\\
        &\gtrsim_{\mu} \gamma^{-pd/(kp - d)} \qquad \text{(By (\ref{eqn:size-of-b-prime}))}.
    \end{align*}
    So there exists a constant $C_6$ depending on $k, p, d, \mu, \ell$ such that
    \begin{align*}
        \int_{\R^d} \mc{R}(f_{\gamma}(\vx); \vx) d\mu_{\vx}(\vx) &\geq C_6 \gamma^{-pd/(kp - d)}.
    \end{align*}
    To conclude, we rewrite
    \begin{align*}
        &\E[\ell(f_{\gamma}(\vx), y)] - \E[\ell(f_{\bayes}(\vx); y)]\\
        &= \int_{\R^d}\left(\mc{L}(f_{\gamma}(\vx); \vx) - \mc{L}(f_{\bayes}(\vx); \vx) \right)d \mu_{\vx}(\vx)\\
        &= \int_{\R^d} \mc{R}(f_{\gamma}(\vx); \vx) d\mu_{\vx}(\vx)\\
        &\geq C_6 \gamma^{-pd/(kp - d)},
    \end{align*}
    where the first line follows from total expectation. 
\end{proof}

\section{Conclusion} 

In this paper, we study approximately norm-minimizing interpolators of noisy datasets in Sobolev spaces. We show that under certain mild assumptions, any such functions must have positive constant generalization error, even as the number of training samples approaches infinity, that is, any such functions must harmfully overfit to the training data in fixed dimension (in contrast to the very high-dimensional case $d \gg n$ in which one sees benign overfitting). We also demonstrate this result in a common setting with squared loss and Gaussian noise. Our results imply that norm minimization and smoothness are not sufficient for interpolation to generalize well, and suggest that improved generalization could require undertraining models when the dataset has high sample size.

There are a number of possible future directions which expand upon our results. While we study norm minimization in Sobolev spaces, our arguments depend on local control of the oscillation of functions and could potentially be extended to other function spaces which satisfy similar inequalities. We consider regression-type loss functions $\ell$ taking value $0$ if and only if we predict the output label, and future work could investigate classification-type loss functions. Finally, the intermediate regime between overfitting in fixed dimension and overfitting in very high dimensions remains an open area of investigation.

\section*{Impact Statement}

This paper presents work whose goal is to advance the field of machine learning. There are many potential societal consequences of our work, none of which we feel must be specifically highlighted here.

\section*{Acknowledgements}
DN was partially supported by NSF DMS 2408912. AS was partially supported by NSF DMS 2136090. GM was partially supported by DARPA AIQ project HR00112520014, NSF DMS-2145630, NSF CCF-2212520, DFG SPP 2298 project 464109215, and BMFTR in DAAD project 57616814 (SECAI). KK was partially supported by NSF CCF-2212520.

\bibliography{references}

@book{adams2003sobolev,
  title={Sobolev Spaces},
  author={Adams, R.A. and Fournier, J.J.F.},
  isbn={9780080541297},
  series={Pure and Applied Mathematics},
  year={2003},
  publisher={Academic Press}
}

@inproceedings{george2023training,
 author = {George, Erin and Murray, Michael and Swartworth, William and Needell, Deanna},
 booktitle = {Advances in Neural Information Processing Systems},
 noeditor = {A. Oh and T. Neumann and A. Globerson and K. Saenko and M. Hardt and S. Levine},
 pages = {35139--35189},
 publisher = {Curran Associates, Inc.},
 title = {Training shallow {ReLU} networks on noisy data using hinge loss: when do we overfit and is it benign?},
 
 volume = {36},
 year = {2023}
}

@inproceedings{kornowski2023tempered,
title={From Tempered to Benign Overfitting in {ReLU} Neural Networks},
author={Guy Kornowski and Gilad Yehudai and Ohad Shamir},
booktitle={Thirty-seventh Conference on Neural Information Processing Systems},
year={2023},
}

@book{vershynin_2018, 
place={Cambridge}, 
series={Cambridge Series in Statistical and Probabilistic Mathematics}, 
title={High-Dimensional Probability: An Introduction with Applications in Data Science}, 
noDOI={10.1017/9781108231596}, 
 
publisher={Cambridge University Press}, 
author={Vershynin, Roman}, 
year={2018}, 
collection={Cambridge Series in Statistical and Probabilistic Mathematics}
}

@article{zhang2021understanding,
  title={Understanding deep learning (still) requires rethinking generalization},
  author={Zhang, Chiyuan and Bengio, Samy and Hardt, Moritz and Recht, Benjamin and Vinyals, Oriol},
  journal={Communications of the ACM},
  volume={64},
  number={3},
  pages={107--115},
  year={2021},
  publisher={ACM New York, NY, USA}
}

@InProceedings{pmlr-v178-shamir22a,
  title = 	 {The Implicit Bias of Benign Overfitting},
  author =       {Shamir, Ohad},
  booktitle = 	 {Proceedings of Thirty Fifth Conference on Learning Theory},
  pages = 	 {448--478},
  year = 	 {2022},
  noeditor = 	 {Loh, Po-Ling and Raginsky, Maxim},
  volume = 	 {178},
  series = 	 {Proceedings of Machine Learning Research},
  nomonth = 	 {02--05 Jul},
  publisher =    {PMLR}
}

@article{bartlett2020benign,
  title={Benign overfitting in linear regression},
  author={Bartlett, Peter L and Long, Philip M and Lugosi, G{\'a}bor and Tsigler, Alexander},
  journal={Proceedings of the National Academy of Sciences},
  volume={117},
  number={48},
  pages={30063--30070},
  year={2020},
  publisher={National Acad Sciences}
}

@inproceedings{cao2022benign,
 author = {Cao, Yuan and Chen, Zixiang and Belkin, Misha and Gu, Quanquan},
 booktitle = {Advances in Neural Information Processing Systems},
 noeditor = {S. Koyejo and S. Mohamed and A. Agarwal and D. Belgrave and K. Cho and A. Oh},
 pages = {25237--25250},
 publisher = {Curran Associates, Inc.},
 title = {Benign Overfitting in Two-layer Convolutional Neural Networks},
 
 volume = {35},
 year = {2022}
}

@InProceedings{kou2023benign,  
title =  {Benign Overfitting in Two-layer {R}e{LU} Convolutional Neural Networks},  
author =       {Kou, Yiwen and Chen, Zixiang and Chen, Yuanzhou and Gu, Quanquan},  booktitle =  {Proceedings of the 40th International Conference on Machine Learning},  
pages =  {17615--17659},  
year =  {2023},  
noeditor =  {Krause, Andreas and Brunskill, Emma and Cho, Kyunghyun and Engelhardt, Barbara and Sabato, Sivan and Scarlett, Jonathan},  
volume =  {202},  
series =  {Proceedings of Machine Learning Research},  
nomonth =  {23--29 Jul},  
publisher =    {PMLR}}

@ARTICLE{9051968,
  author={Muthukumar, Vidya and Vodrahalli, Kailas and Subramanian, Vignesh and Sahai, Anant},
  journal={IEEE Journal on Selected Areas in Information Theory}, 
  title={Harmless Interpolation of Noisy Data in Regression}, 
  year={2020},
  volume={1},
  number={1},
  pages={67-83},
  nodoi={10.1109/JSAIT.2020.2984716}}

@article{10.1214/21-AOS2133,
author = {Trevor Hastie and Andrea Montanari and Saharon Rosset and Ryan J. Tibshirani},
title = {Surprises in high-dimensional ridgeless least squares interpolation},
volume = {50},
journal = {The Annals of Statistics},
number = {2},
publisher = {Institute of Mathematical Statistics},
pages = {949--986},
year = {2022},
nodoi = {10.1214/21-AOS2133}
}

@InProceedings{pmlr-v134-zou21a,
  title = 	 {Benign Overfitting of Constant-Stepsize {SGD} for Linear Regression},
  author =       {Zou, Difan and Wu, Jingfeng and Braverman, Vladimir and Gu, Quanquan and Kakade, Sham},
  booktitle = 	 {Proceedings of Thirty Fourth Conference on Learning Theory},
  pages = 	 {4633--4635},
  year = 	 {2021},
  noeditor = 	 {Belkin, Mikhail and Kpotufe, Samory},
  volume = 	 {134},
  series = 	 {Proceedings of Machine Learning Research},
  nomonth = 	 {15--19 Aug},
  publisher =    {PMLR}
}

@inproceedings{
koehler2021uniform,
title={Uniform Convergence of Interpolators: {G}aussian Width, Norm Bounds and Benign Overfitting},
author={Frederic Koehler and Lijia Zhou and Danica J. Sutherland and Nathan Srebro},
booktitle={Advances in Neural Information Processing Systems},
noeditor={A. Beygelzimer and Y. Dauphin and P. Liang and J. Wortman Vaughan},
year={2021},

}

@article{JMLR:v23:21-1199,
  author  = {Niladri S. Chatterji and Philip M. Long},
  title   = {Foolish Crowds Support Benign Overfitting},
  journal = {Journal of Machine Learning Research},
  year    = {2022},
  volume  = {23},
  number  = {125},
  pages   = {1--12},
}

@inproceedings{
mallinar2022benign,
title={Benign, Tempered, or Catastrophic: Toward a Refined Taxonomy of Overfitting},
author={Neil Rohit Mallinar and James B Simon and Amirhesam Abedsoltan and Parthe Pandit and Misha Belkin and Preetum Nakkiran},
booktitle={Advances in Neural Information Processing Systems},
noeditor={Alice H. Oh and Alekh Agarwal and Danielle Belgrave and Kyunghyun Cho},
year={2022},

}

@inproceedings{FreiVBS23,
  title = {Benign Overfitting in Linear Classifiers and Leaky {ReLU} Networks from {KKT} Conditions for Margin Maximization},
  author = {Spencer Frei and Gal Vardi and Peter L. Bartlett and Nathan Srebro},
  year = {2023},
  
  researchr = {https://researchr.org/publication/FreiVBS23},
  cites = {0},
  citedby = {0},
  pages = {3173-3228},
  booktitle = {The Thirty Sixth Annual Conference on Learning Theory, 12-15 July 2023, Bangalore, India},
  noeditor = {Gergely Neu and Lorenzo Rosasco},
  volume = {195},
  series = {Proceedings of Machine Learning Research},
  publisher = {PMLR},
}

@inproceedings{
xu2024benign,
title={Benign Overfitting and Grokking in {R}e{LU} Networks for {XOR} Cluster Data},
author={Zhiwei Xu and Yutong Wang and Spencer Frei and Gal Vardi and Wei Hu},
booktitle={The Twelfth International Conference on Learning Representations},
year={2024},

}

@article{eppstein1997nearest,
  title={On nearest-neighbor graphs},
  author={Eppstein, David and Paterson, Michael S and Yao, F Frances},
  journal={Discrete \& Computational Geometry},
  volume={17},
  pages={263--282},
  year={1997},
  publisher={Springer}
}

@inproceedings{buchholz2022kernel,
  title={Kernel interpolation in Sobolev spaces is not consistent in low dimensions},
  author={Buchholz, Simon},
  booktitle={Conference on Learning Theory},
  pages={3410--3440},
  year={2022},
  organization={PMLR}
}

@book{evans2022partial,
  title={Partial differential equations},
  author={Evans, Lawrence C},
  volume={19},
  year={2022},
  publisher={American Mathematical Society}
}

@article{koskela1990capacity,
  author = {P. Koskela},
  title = {Capacity extension domains},
  journal = {Ann. Acad. Sci. Fenn. Math. Diss.},
  year = {1990},
  volume = {73},
  pages = {42},
  note = {42 pp.}
}

@article{haas2023mind,
  title={Mind the spikes: Benign overfitting of kernels and neural networks in fixed dimension},
  author={Haas, Moritz and Holzm{\"u}ller, David and Luxburg, Ulrike and Steinwart, Ingo},
  journal={Advances in Neural Information Processing Systems},
  volume={36},
  pages={20763--20826},
  year={2023}
}

@inproceedings{combes2024extension,
  title={An extension of {M}c{D}iarmid's inequality},
  author={Combes, Richard},
  booktitle={2024 IEEE International Symposium on Information Theory (ISIT)},
  pages={79--84},
  year={2024},
  organization={IEEE}
}

@article{harel2024provable,
  title={Provable tempered overfitting of minimal nets and typical nets},
  author={Harel, Itamar and Hoza, William and Vardi, Gal and Evron, Itay and Srebro, Nati and Soudry, Daniel},
  journal={Advances in Neural Information Processing Systems},
  volume={37},
  pages={53458--53524},
  year={2024}
}

@inproceedings{
barzilai2025beyond,
title={Beyond Benign Overfitting in {N}adaraya-{W}atson Interpolators},
author={Daniel Barzilai and Guy Kornowski and Ohad Shamir},
booktitle={The Thirty-ninth Annual Conference on Neural Information Processing Systems},
year={2025},

}

@article{joshi2023noisy,
  title={Noisy interpolation learning with shallow univariate {ReLU} networks},
  author={Joshi, Nirmit and Vardi, Gal and Srebro, Nathan},
  journal={arXiv preprint arXiv:2307.15396},
  year={2023}
}

@article{cover1967nearest,
  title={Nearest neighbor pattern classification},
  author={Cover, Thomas and Hart, Peter},
  journal={IEEE transactions on information theory},
  volume={13},
  number={1},
  pages={21--27},
  year={1967},
  publisher={IEEE}
}

@article{karhadkar2024benign,
  title={Benign overfitting in leaky {ReLU} networks with moderate input dimension},
  author={Karhadkar, Kedar and George, Erin and Murray, Michael and Montufar, Guido F and Needell, Deanna},
  journal={Advances in Neural Information Processing Systems},
  volume={37},
  pages={36634--36682},
  year={2024}
}

@article{magen2024benign,
  title={Benign overfitting in single-head attention},
  author={Magen, Roey and Shang, Shuning and Xu, Zhiwei and Frei, Spencer and Hu, Wei and Vardi, Gal},
  journal={arXiv preprint arXiv:2410.07746},
  year={2024}
}

@inproceedings{kur2024minimum,
  title={Minimum norm interpolation meets the local theory of banach spaces},
  author={Kur, Gil and Abdalla, Pedro and Bizeul, Pierre and Yang, Fanny},
  booktitle={Forty-first International Conference on Machine Learning},
  year={2024}
}

@inproceedings{rakhlin2019consistency,
  title={Consistency of interpolation with {L}aplace kernels is a high-dimensional phenomenon},
  author={Rakhlin, Alexander and Zhai, Xiyu},
  booktitle={Conference on Learning Theory},
  pages={2595--2623},
  year={2019},
  organization={PMLR}
}

@article{beaglehole2023inconsistency,
  title={On the inconsistency of kernel ridgeless regression in fixed dimensions},
  author={Beaglehole, Daniel and Belkin, Mikhail and Pandit, Parthe},
  journal={SIAM Journal on Mathematics of Data Science},
  volume={5},
  number={4},
  pages={854--872},
  year={2023},
  publisher={SIAM}
}

@article{wang2022binary,
  title={Binary classification of {G}aussian mixtures: Abundance of support vectors, benign overfitting, and regularization},
  author={Wang, Ke and Thrampoulidis, Christos},
  journal={SIAM Journal on Mathematics of Data Science},
  volume={4},
  number={1},
  pages={260--284},
  year={2022},
  publisher={SIAM}
}

@InProceedings{pmlr-v235-cheng24g,
  title = 	 {Characterizing Overfitting in Kernel Ridgeless Regression Through the Eigenspectrum},
  author =       {Cheng, Tin Sum and Lucchi, Aurelien and Kratsios, Anastasis and Belius, David},
  booktitle = 	 {Proceedings of the 41st International Conference on Machine Learning},
  pages = 	 {8141--8162},
  year = 	 {2024},
  noeditor = 	 {Salakhutdinov, Ruslan and Kolter, Zico and Heller, Katherine and Weller, Adrian and Oliver, Nuria and Scarlett, Jonathan and Berkenkamp, Felix},
  volume = 	 {235},
  series = 	 {Proceedings of Machine Learning Research},
  month = 	 {21--27 Jul},
  publisher =    {PMLR}
}

@article{li2024kernel,
  title={Kernel interpolation generalizes poorly},
  author={Li, Yicheng and Zhang, Haobo and Lin, Qian},
  journal={Biometrika},
  volume={111},
  number={2},
  pages={715--722},
  year={2024},
  publisher={Oxford University Press}
}
\bibliographystyle{icml2026}

\appendix
\onecolumn

\section{Background on Sobolev spaces}\label{app:sobolev}

In this section, we briefly review some of the tools from the theory of Sobolev spaces that we employ. For a more complete introduction, see \citet{evans2022partial} and \citet{adams2003sobolev}.

First, recall that for an open set $U \subset \R^d$, $W^{k, p}(U)$ consists of functions $u \in L^p(U)$ which admit weak derivatives $D^{\alpha}u \in L^p(U)$ for all multi-indices $\alpha$ with $|\alpha| \leq k$. Here the weak derivative $D^{\alpha} u$ is defined to be a function in $L^p(U)$ such that for all smooth functions $v: U \to \R$ with compact support,
\begin{align*}
    \int_{U} (D^{\alpha} u)v d\vx &= (-1)^{|\alpha|}\int_U u (D^{\alpha }v)d \vx.
\end{align*}
That is, the weak derivative is defined to be the function such that integration by parts holds.

A main tool in the analysis of Sobolev spaces is the Sobolev embedding theorem, which allows us to understand Sobolev spaces as spaces of smooth functions for the right scaling of $k$, $d$, and $p$. 
\begin{theorem}[Sobolev embedding]\label{thm:sobolev-embedding}
    Let $U$ be a bounded open subset of $\R^d$ with $C^1$ boundary. Suppose that $k > \frac{d}{p}$, and let
    \[\gamma = \begin{cases}
        \left\lfloor \frac{d}{p}\right\rfloor + 1 - \frac{d}{p}  & \text{ if $\frac{d}{p}$ is not an integer,}\\
        \text{any positive number $< 1$}, & \text{if $\frac{d}{p}$ is an integer}.
    \end{cases} \]
    Then there exists $u^* \in C^{k - \lfloor d/p \rfloor -1, \gamma}(\overline{U})$ such that $u^* = u$ almost everywhere, and
    \[\|u^*\|_{C^{k - \lfloor d/p \rfloor - 1, \gamma  }(\overline{U}) } \lesssim_{k, p, \gamma, U } \|u\|_{W^{k, p}(U) }.  \]
\end{theorem}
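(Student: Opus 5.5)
The plan is the classical argument from \citet{evans2022partial}: reduce to compactly supported smooth functions on all of $\R^d$, raise integrability with the Gagliardo--Nirenberg--Sobolev (GNS) inequality, and finish with Morrey's inequality on the top-order derivatives that survive.

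\textbf{Reduction to $\R^d$.} Since $U$ is bounded with $C^1$ boundary, there is a bounded extension operator $E: W^{k,p}(U) \to W^{k,p}(\R^d)$. Its outputs can be taken to be supported in a fixed bounded open $V \supset \overline{U}$, with $\|Eu\|_{W^{k,p}(\R^d)} \lesssim_{k,p,U} \|u\|_{W^{k,p}(U)}$. By density of $C_c^\infty(\R^d)$ in $W^{k,p}(\R^d)$, it suffices to prove the estimate
\[
\|v\|_{C^{k-\lfloor d/p\rfloor -1,\gamma}(\R^d)} \lesssim \|v\|_{W^{k,p}(\R^d)}
\]
for smooth $v$ supported in $V$. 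Given this estimate, approximating sequences are Cauchy in the Hölder norm, so they converge uniformly to a continuous representative $u^*$. This $u^*$ agrees with $u$ almost everywhere and inherits the bound.

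\textbf{Case $d/p \notin \mathbb{Z}$.} Set $m = \lfloor d/p \rfloor$ and define $1/p_j = 1/p - j/d$ for $0 \le j \le m$. For $j < m$ we have $p_j < d$, so GNS applied to derivatives gives
\[
\|D^\alpha v\|_{L^{p_{j+1}}} \lesssim \|D^\alpha v\|_{W^{1,p_j}}.
\]
Iterating, together with the support in $V$ (which lets us control lower $L^q$ norms by higher ones), yields $v \in W^{k-m, p_m}$ with $p_m = dp/(d-mp) > d$. Since $k - m \ge 1$, Morrey's inequality applies to $D^\beta v$ for $|\beta| \le k - m - 1$. This gives $C^{0,1-d/p_m}$ control, and
\[
1 - \frac{d}{p_m} = m + 1 - \frac{d}{p},
\]
which is exactly the claimed exponent and regularity index $k - m - 1$.

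\textbf{Case $d/p \in \mathbb{Z}$.} Now iterate only to $j = d/p - 1$, which gives $p_j = d$ and $v \in W^{k-d/p+1,\,d}(V)$. On the bounded set $V$, GNS applied with any exponent $r < d$ (Hölder lets us pass from $L^d$ to $L^r$) shows $W^{1,d}(V) \subset L^q$ for every $q < \infty$. Hence $v \in W^{k-d/p,\,q}$ for all large $q$. Applying Morrey with $q > d$ chosen so that $1 - d/q = \gamma$ gives any prescribed $\gamma < 1$, with regularity index $k - d/p - 1$, as claimed.

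The main obstacle is bookkeeping rather than ideas. One must check that each GNS step is applied with exponents strictly below $d$, and that lower-order terms are absorbed using the bounded support. The borderline integer case needs the detour through $W^{1,d} \subset L^q$, and that is why $\gamma$ cannot be taken to be $1$ there. The implicit constant depends on $\gamma$ (through the choice of $q$) and on $U$ (through the extension operator), as the statement records.
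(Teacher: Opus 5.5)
The paper gives no proof of this statement. It is quoted as background with a pointer to \citet{evans2022partial} (\S5.6.3, general Sobolev inequalities), and your argument is essentially that proof. You iterate Gagliardo--Nirenberg--Sobolev on derivatives until the exponent $p_m = dp/(d-mp)$ exceeds $d$, then apply Morrey to the derivatives of order at most $k-m-1$. In the integer case you stop at exponent $d$ and use $W^{1,d}\subset L^q$ on bounded sets. Your exponent and index bookkeeping is correct.

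The one genuine difference is the reduction step. You extend $u$ once to a compactly supported $W^{k,p}(\R^d)$ function and then use homogeneous inequalities on $\R^d$. Evans instead stays on $U$ and applies the $W^{1,p}(U)$ versions of GNS and Morrey to each $D^\alpha u$ separately. Your route is cleaner: there are no lower-order terms and only one density argument. The cost is the extension operator. For a domain that is only $C^1$, a $W^{k,p}$ extension operator with $k\ge 2$ is not supplied by the reflection construction. The paper's Theorem~\ref{thm:extension}, like Evans, only covers $k=1$, so you must cite Stein's extension theorem for Lipschitz domains. Evans's derivative-by-derivative route avoids this and needs only $W^{1,p}$ extension.

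There is also a small edge case. For $d=1$, the integer case forces $p=1=d$, so there is no admissible exponent $r\in[1,d)$ for GNS. There you should instead use $\|w\|_{L^\infty}\le \|w'\|_{L^1}$ for compactly supported $w$, which gives $W^{1,1}\subset L^q$ on bounded sets directly.
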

This also lets us define pointwise evaluation for functions $u \in W^{k, p}(U)$ for open sets $U \subset \R^d$ when $k > \frac{d}{p}$. By the Sobolev embedding theorem, there exists a unique continuous function $u^* \in C^0(\R^d)$ which is equal to $u$ almost everywhere. For a point $\vx_0 \in \R^d$, we define $u(\vx_0) = u^*(\vx_0)$. This evaluation defines a linear functional on $W^{k, p}(U)$. If $U$ is bounded with $C^1$ boundary, this functional is continuous, since
\[ |u(\vx_0)| = |u^*(\vx_0)| \leq \|u^*\|_{C^0(\overline{U})} \lesssim_{k, p, U} \|u\|_{W^{k, p}(U) }.     \]
In particular, this allows us to define pointwise evaluation for functions $u \in W^{k, p}(\R^d)$, since any such function is also an element of $W^{k, p}(U)$ for any open ball $U$ containing the point.

\section{Proof of Corollary~\ref{corr:square-loss}}
\label{app:square-loss-proof}
\begin{proof}[Proof of Corollary~\ref{corr:square-loss}]
    We check each of the assumptions of Theorem~\ref{thm:tempered-overfitting-bayes-v2}.

\textit{Assumption~\ref{assump:growth-rate-loss} (Growth rate of loss function):} With $C_\ell, \tau_\ell = 1$, we have
\begin{align*}
    \ell(\hat{y}, y)&= (\hat{y} - y)^2\\
    &\leq (|\hat{y}| +|y|)^2\\
    &\leq \exp(|\hat{y}| +|y|)\\
    &\leq \exp((1+|\hat{y}|)(1+|y|)).
\end{align*}

\textit{Assumption~\ref{assump:regularity-conditional-distribution} (Regularity of conditional distribution):} By construction, the distribution has a regular conditional probability given by $(\vx, A) \mapsto \mu_{\vx}(A)$, where $\mu_{\vx_0}$ is a Gaussian measure with mean 0 and variance $\sigma(\vx)^2$. If $h: \R \to \R$ is continuous and bounded, then
\begin{align*}
    \lim_{m \to \infty} \int_{\R} g d\nu_{\vx_m} &= \lim_{m \to \infty} \int_{\R} \frac{1}{\sqrt{2\pi \sigma(\vx_m)^2 }}g(\vx) \exp\left(-\frac{\vx^2}{\sigma(\vx_m)^2} \right)d \vx.
\end{align*}
The integrand on the right-hand side is less than
\begin{align*}
    \frac{1}{\sqrt{2\pi \sigma_{\min}^2 }} g(\vx)\exp\left(-\frac{\vx^2}{\sigma_{\max}^2 } \right),
\end{align*}
which is integrable, so by the dominated convergence theorem, the integral is equal to
\begin{align*}
    \int_{\R}\frac{1}{\sqrt{2\pi \sigma(\vx_0)^2 }}g(\vx) \exp\left(-\frac{\vx^2}{\sigma^2(\vx_0)} \right)d \vx &= \int_{\R} gd\nu_{\vx_0},
\end{align*}
and the assumption holds.

As the conditional probability has a continuous and bounded density, this follows from the dominated convergence theorem.

\textit{Assumption~\ref{assump:data-regularity} (Regularity of marginal distribution):} We assume this condition to hold.

\textit{Assumption~\ref{assump:subgaussian-outputs} (Output distribution is conditionally sub-Gaussian):} The output distribution is conditionally Gaussian by assumption, with sub-Gaussian parameter given by $\sigma_{\max}$.

\textit{Assumption~\ref{assump:label-noise} (Label noise):} The conditional loss is
\begin{align*}
    \mc{L}(\hat{y}; \vx) &= \int_{\R} \ell(\hat{y}, y) d\nu_{\vx}(y)\\
    &= \int_{\R} (\hat{y} - y)^2 d\nu_{\vx}(y)\\
    &= \E[(\hat{y} - Z)^2],
\end{align*}
where $Z$ is a Gaussian random variable with mean $g(\vx)$ and variance $\sigma(\vx)^2$. Then
\begin{align}\label{eqn:conditional-loss-gaussian}
    \E[(\hat{y} - Z)^2] &= \text{Var}(\hat{y} - Z) + \E[\hat{y} - Z]^2\\
    &= \sigma(\vx)^2 + (\hat{y} - g(\vx))^2.
\end{align}
So
\[\mc{L}(\hat{y}; \vx) = \sigma(\vx)^2 + (\hat{y} - g(\vx))^2. \]
Taking the infimum of both sides in $\hat{y}$, we get
\begin{align*}
    \inf_{\hat{y} \in \R}\mc{L}(\hat{y}; \vx) &= \sigma(\vx)^2
\end{align*}
with equality attained if and only if $\hat{y} = g(\vx)$. In particular, $g$ is Bayes-optimal. Then for $(\vx, y) \sim \mu$,
\begin{align*}
    \P\left(\mc{L}(y; \vx) \geq \sigma_{\min}^2 + \inf_{\hat{y} \in \R} \mc{L}(\hat{y}; \vx) \middle| \vx\right) &= \P\left((y - g(\vx))^2 \geq \sigma_{\min}^2 \middle| \vx \right)\\
    &= \P(\epsilon^2 \geq \sigma^2_{\min} \mid \vx)\\
    &\geq \P(\epsilon^2 \geq \sigma(\vx)^2 \mid \vx)\\
    &\geq 0.1,
\end{align*}
where $\epsilon$ is a Gaussian random variable with mean $0$ and standard deviation $\sigma(\vx)$. So the assumption is satisfied with constants $\sigma^2_{\min}$ and $0.1$.

\textit{Assumption~\ref{assump:bayes-optimal-regularity}}: We have shown that $g$ is Bayes-optimal, and $g \in W^{k, p}(\R^d)$ by definition.

Therefore, the assumptions of Theorem~\ref{thm:tempered-overfitting-bayes-v2} are satisfied.
Thus, we are guaranteed that there exists a constant $C$ depending on $k, d, p$, and $\mu$ such that
$\E[\ell(f_{\gamma}(\vx), y) - \ell(g(\vx), y)] \geq C$.
We will rewrite the left-hand side of this inequality. By the law of total expectation,
\begin{align*}
    \E[\ell(f_{\gamma}(\vx), y) - \ell(g(\vx), y)] &= \E[\E[\mc{L}(f_{\gamma}(\vx); \vx) - \mc{L}(g(\vx); \vx)  \mid \vx] ]\\
    &= \E[\E[(f_{\gamma}(\vx) - g(\vx))^2 \mid \vx ] ] & \text{(By \ref{eqn:conditional-loss-gaussian})}\\
    &= \E[(f_{\gamma}(\vx) - g(\vx))^2]\\
    &= \|f_{\gamma} - g\|_{L^2(\mu)}^2. 
\end{align*}
This is bounded below by a constant independent of $n$, so the result follows.

\end{proof}

\section{Proofs from Section~\ref{sec:proof-overview}}

In this section, we present the proofs of the results from Section~\ref{sec:proof-overview}.

\subsection{Properties of the conditional loss}

As a first step, we show some basic properties of the conditional loss.

\begin{lemma}\label{lem:exp-integral-finite}
    For all $\tau \geq 0$,
    \begin{align*}
        \sup_{\vx \in \overline{\Omega}} \int_{\R} \exp(\tau |y|) d\nu_{\vx}(y) < \infty.
    \end{align*}
\end{lemma}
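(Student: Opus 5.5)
The plan is to use the layer-cake formula together with the conditional sub-Gaussian tail bound of Assumption~\ref{assump:subgaussian-outputs}. The one subtlety is that this assumption holds for $\mu_{\vx}$-almost every $\vx$, while the claim is a supremum over \emph{all} $\vx \in \overline{\Omega}$. So I will first prove the bound for almost every $\vx$ and then extend it to every point using the weak continuity of $\nu$ from Assumption~\ref{assump:regularity-conditional-distribution}.

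\textbf{Step 1: a.e. bound.} For $\mu_{\vx}$-a.e.\ $\vx$ we have $\nu_{\vx}(\{|y| \geq t\}) \leq 2\exp(-t^2/C_y^2)$ for all $t \geq 0$; it suffices to apply the assumption at rational $t$ and use monotonicity in $t$. For such $\vx$,
\begin{align*}
\int_{\R} e^{\tau|y|}\, d\nu_{\vx}(y) = 1 + \int_0^\infty \tau e^{\tau t}\,\nu_{\vx}(\{|y|> t\})\,dt \leq 1 + 2\tau\int_0^\infty e^{\tau t - t^2/C_y^2}\,dt =: M_\tau < \infty ,
\end{align*}
and $M_\tau$ does not depend on $\vx$.

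\textbf{Step 2: from a.e.\ to every point.} By Assumption~\ref{assump:data-regularity} the density of $\mu_{\vx}$ is at least $c_{\mc{D}} > 0$ on $\overline{\Omega}$. A $\mu_{\vx}$-null set is therefore Lebesgue-null in $\Omega$, so the good set $G$ from Step 1 is dense in $\overline{\Omega}$. Here I use that $\Omega$ is open, so every point of $\overline{\Omega}$ is a limit of interior points and every ball around an interior point has positive measure.

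Fix an arbitrary $\vx_0 \in \overline{\Omega}$ and choose $\vx_m \in G$ with $\vx_m \to \vx_0$. For a truncation level $R > 0$, set $g_R(y) = \min(e^{\tau|y|}, R)$, which is continuous and bounded. Weak continuity of $\nu$ gives
\[
\int g_R \, d\nu_{\vx_0} = \lim_{m\to\infty} \int g_R \, d\nu_{\vx_m} \leq M_\tau .
\]
Letting $R \to \infty$, monotone convergence yields $\int e^{\tau|y|}\, d\nu_{\vx_0} \leq M_\tau$. Hence the supremum over $\overline{\Omega}$ is at most $M_\tau < \infty$.

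\textbf{Main obstacle.} The real content is Step 2, because the tail assumption is only stated almost surely and must be transferred to every point. An alternative is to pass the tail bound itself through the limit: portmanteau on the closed set $\{|y| \geq t\}$ does not work, since it only bounds limsup of measures of closed sets from above at the limit point in the wrong direction, but the open sets $\{|y| > t\}$ do satisfy $\nu_{\vx_0}(\{|y|>t\}) \leq \liminf_m \nu_{\vx_m}(\{|y|>t\})$. That would also suffice. The truncation-plus-monotone-convergence route in Step 2 is cleaner, so I will use it.
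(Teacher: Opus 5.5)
Your proof is correct and follows essentially the same route as the paper: an almost-everywhere bound from the sub-Gaussian assumption, density of the good set in $\overline{\Omega}$, and then the truncation $\min(K, e^{\tau|y|})$ combined with weak continuity and monotone convergence. The only differences are minor. You derive the a.e.\ bound via the layer-cake formula rather than the paper's route through $\E[\exp(y^2/C_1^2)\mid\vx]\le 2$. You also make explicit two points the paper leaves implicit: the use of the density lower bound $c_{\mc{D}}$ to pass from $\mu_{\vx}$-null to Lebesgue-null, and the reduction to countably many $t$.
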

\begin{proof}[Proof of Lemma~\ref{lem:exp-integral-finite}]
    Suppose that $(\vx, y) \sim \mu$. By Assumption~\ref{assump:subgaussian-outputs},  $y$ is conditionally sub-Gaussian given $\vx$, so there exist $C_1, C_2 \in (0, \infty)$ (depending on $\tau$) such that 
    \begin{align*}
        2 &\geq \E\left[\exp\left(\frac{y^2}{C_1^2} \right) \middle| \vx \right]\\
        &\geq C_2 \E\left[\exp(\tau |y|) \mid \vx \right]
    \end{align*}
    almost surely. Then by the definition of $\nu_{\vx}$, there exists a subset $\Omega' \subset \overline{\Omega}$ of full measure such that
    \begin{align*}
        \int_{\R} \exp(\tau|y|) d\nu_{\vx}(y) &\leq \frac{2}{C_2}
    \end{align*}
    for all $\vx \in \Omega'$. We show that the above inequality in fact holds for all $\vx \in \overline{\Omega}$. Let $\vx \in \overline{\Omega}$. Since $\Omega'$ has full measure in $\Omega$ and $\Omega$ is an open subset of $\R^d$, $\Omega'$ is dense in $\overline{\Omega}$. Let $\{\vx_m\}_{m \in \N}$ be a sequence in $\Omega'$ converging to $\vx$. Let $K > 0$. Then for all $m \in \N$,
    \begin{align*}
        \frac{2}{C_2} &\geq \int_{\R}\min\left(K, \exp(\tau|y|) \right)d \nu_{\vx_m}(y),
    \end{align*}
    and by Assumption~\ref{assump:regularity-conditional-distribution},
    \begin{align*}
        \lim_{m \to \infty} \int_{\R}\min\left(K, \exp(\tau|y|) \right)d \nu_{\vx_m}(y) &= \int_{\R}\min\left(K, \exp(\tau|y|) \right)d \nu_{\vx}(y).
    \end{align*}
    So for all $K > 0$,
    \begin{align*}
        \int_{\R}\min(K, \exp(\tau|y|)) d\nu_{\vx}(y) \leq \frac{2}{C_2}.
    \end{align*}
    By the monotone convergence theorem,
    \begin{align*}
        \int_{\R}\exp(\tau|y|)d\nu_{\vx}(y) &= \int_{\R} \lim_{K \to \infty}\min(K, \exp(\tau|y|)) d\nu_{\vx}(y)\\
        &= \lim_{K \to \infty} \int_{\R}\min(K, \exp(\tau|y|)) d\nu_{\vx}(y)\\
        &\leq \frac{2}{C_2}.
    \end{align*}
    This holds for all $\vx \in \overline{\Omega}$, so the result follows.
\end{proof}
As defined, the conditional loss could in principle be infinite. We show that in fact it only attains finite values.

\begin{lemma}\label{lem:conditional-loss-finite}
    For all $y_0 \in \R$ and $\vx_0 \in \overline{\Omega}$, $\mc{L}(y_0; \vx_0) < \infty$.
\end{lemma}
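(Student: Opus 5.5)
We prove this directly from the growth condition on the loss together with the uniform exponential moment bound of Lemma~\ref{lem:exp-integral-finite}.

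Fix $y_0 \in \R$ and $\vx_0 \in \overline{\Omega}$. By definition,
\[
\mc{L}(y_0; \vx_0) = \int_{\R} \ell(y_0, y)\, d\nu_{\vx_0}(y).
\]
First, we dominate the integrand pointwise. By Assumption~\ref{assump:growth-rate-loss}, for every $y \in \R$,
\[
\ell(y_0, y) \leq C_{\ell}\exp\bigl(\tau_{\ell}(1+|y_0|)(1+|y|)\bigr) = C_{\ell}\, e^{\tau}\, \exp(\tau |y|),
\]
where $\tau = \tau_{\ell}(1+|y_0|) \geq 0$. This constant $\tau$ depends only on $y_0$ and $\ell$, not on $y$.

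Second, we integrate this bound against $\nu_{\vx_0}$. Since $\ell \geq 0$ is continuous, hence Borel measurable, monotonicity of the integral gives
\[
\mc{L}(y_0; \vx_0) \leq C_{\ell}\, e^{\tau} \int_{\R} \exp(\tau|y|)\, d\nu_{\vx_0}(y) \leq C_{\ell}\, e^{\tau} \sup_{\vx \in \overline{\Omega}} \int_{\R}\exp(\tau|y|)\, d\nu_{\vx}(y).
\]
By Lemma~\ref{lem:exp-integral-finite} applied with this $\tau$, the supremum is finite. Hence $\mc{L}(y_0;\vx_0) < \infty$.

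The only potential obstacle is that the exponential moment must hold at every point $\vx_0 \in \overline{\Omega}$, not merely $\mu_{\vx}$-almost every point. Assumption~\ref{assump:subgaussian-outputs} gives only an almost-sure statement, but Lemma~\ref{lem:exp-integral-finite} already upgrades it to a bound that holds for all $\vx$. It does so using the weak continuity in Assumption~\ref{assump:regularity-conditional-distribution}. Since that lemma may be assumed, no further work is needed.

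This bound is also uniform over $|y_0| \leq K$, which will be convenient later for the continuity argument in Lemma~\ref{lem:conditional-loss-continuous}.
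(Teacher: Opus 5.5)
Your proof is correct and follows the same route as the paper: you dominate $\ell(y_0,y)$ using Assumption~\ref{assump:growth-rate-loss}, then invoke Lemma~\ref{lem:exp-integral-finite} to get finiteness at every $\vx_0 \in \overline{\Omega}$. The only difference is that you explicitly factor $\exp\bigl(\tau_\ell(1+|y_0|)(1+|y|)\bigr) = e^{\tau}\exp(\tau|y|)$ so the integrand matches the form of that lemma, a step the paper leaves implicit.
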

\begin{proof}[Proof of Lemma~\ref{lem:conditional-loss-finite}]
    Let $y_0 \in \R$ and $\vx_0 \in \overline{\Omega}$. Then 
    \begin{align*}
        \mc{L}(y_0; \vx_0) &= \int_{\R} \ell(y_0, y) d \nu_{\vx_0}(y)\\
        &\leq \int_{\R}C_{\ell} \exp\left(\tau_{\ell}(1 + |y_0|)(1 + |y|) \right)d \nu_{\vx_0}(y) & \text{(Assumption~\ref{assump:growth-rate-loss})}\\
        &< \infty. & \text{(Lemma~\ref{lem:exp-integral-finite})}
    \end{align*}
\end{proof}

Since $\mc{L}(y; \vx) < \infty$ for all $y \in \R$ and $\vx \in \overline{\Omega}$, we will henceforth view $\mc{L}$ as a function $\R \times \overline{\Omega} \to [0, \infty)$. Importantly, we can show that this function is continuous.

Finally, we conclude that the Bayes optimizer minimizes the conditional loss at each point.

\begin{lemma}\label{lem:bayes-optimal-alternate}
    For all $\vx_0 \in \overline{\Omega}$,
    \[\mc{L}(f_{\bayes}(\vx_0); \vx_0) = \inf_{\hat{y} \in \R } \mc{L}(\hat{y}; \vx_0). \]
\end{lemma}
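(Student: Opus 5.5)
The plan is to upgrade the almost-sure identity in the definition of Bayes-optimality to an identity valid at every point of $\overline{\Omega}$, by continuity and density. First, I would translate the definition into the language of the conditional loss. Since $\nu$ is a regular conditional probability, for $(\vx, y) \sim \mu$ we have $\E[\ell(f_{\bayes}(\vx), y) \mid \vx] = \mc{L}(f_{\bayes}(\vx); \vx)$ and $\E[\ell(\hat{y}, y) \mid \vx] = \mc{L}(\hat{y}; \vx)$ almost surely. For the infimum over the uncountable index $\hat{y}$, I would argue pointwise in $\vx$ rather than interchanging conditional expectation and infimum: Bayes-optimality gives $\mc{L}(f_{\bayes}(\vx); \vx) \le \mc{L}(\hat{y}; \vx)$ for each fixed $\hat{y}$ almost surely. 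Taking $\hat{y}$ over $\mathbb{Q}$, a countable union of null sets, we get a set $\Omega' \subset \overline{\Omega}$ of full $\mu_{\vx}$-measure on which
\[\mc{L}(f_{\bayes}(\vx); \vx) \le \mc{L}(q; \vx) \quad \text{for all } q \in \mathbb{Q}.\]
By continuity of $\hat{y} \mapsto \mc{L}(\hat{y}; \vx)$ (Lemma~\ref{lem:conditional-loss-continuous}), this extends to all $\hat{y} \in \R$. Hence $\mc{L}(f_{\bayes}(\vx); \vx) = \inf_{\hat{y}} \mc{L}(\hat{y}; \vx)$ on $\Omega'$; the reverse inequality is trivial.

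Second, I would extend from $\Omega'$ to all of $\overline{\Omega}$. By Assumption~\ref{assump:data-regularity}, $\mu_{\vx}$ has density at least $c_{\mc{D}} > 0$ on $\Omega$. So a $\mu_{\vx}$-null set is Lebesgue-null in $\Omega$, and $\Omega'$ is dense in $\overline{\Omega}$, exactly as in the proof of Lemma~\ref{lem:exp-integral-finite}. Fix $\vx_0 \in \overline{\Omega}$ and a sequence $\vx_m \in \Omega'$ with $\vx_m \to \vx_0$. By Theorem~\ref{thm:sobolev-embedding}, $f_{\bayes}$ is continuous, so $f_{\bayes}(\vx_m) \to f_{\bayes}(\vx_0)$. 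By Lemma~\ref{lem:conditional-loss-continuous}, $\mc{L}$ is jointly continuous on $\R \times \overline{\Omega}$. Therefore, for each fixed $\hat{y}$,
\[\mc{L}(f_{\bayes}(\vx_0); \vx_0) = \lim_m \mc{L}(f_{\bayes}(\vx_m); \vx_m) \le \lim_m \mc{L}(\hat{y}; \vx_m) = \mc{L}(\hat{y}; \vx_0).\]
Taking the infimum over $\hat{y}$ gives the claim.

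The main obstacle is that Lemma~\ref{lem:conditional-loss-continuous} has only been stated, not yet proved, at this point in the appendix. If it must be established first, I would prove it directly. Take $(\hat{y}_m, \vx_m) \to (\hat{y}_0, \vx_0)$ and split $\ell(\hat{y}_m, y) = \min(\ell(\hat{y}_m, y), M) + (\ell(\hat{y}_m, y) - M)_+$.

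\emph{Bounded part.} For $|y| \le R$, $\ell$ is uniformly continuous on $[\hat{y}_0 - 1, \hat{y}_0 + 1] \times [-R, R]$. Combined with the weak continuity of Assumption~\ref{assump:regularity-conditional-distribution} applied to the bounded continuous function $\min(\ell(\hat{y}_0, \cdot), M)$, and tightness from Lemma~\ref{lem:exp-integral-finite}, this handles the truncated part.

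\emph{Tail part.} By Assumption~\ref{assump:growth-rate-loss} and Lemma~\ref{lem:exp-integral-finite} applied with a slightly larger exponent $\tau$, the tail part is uniformly small in $m$ via a uniform-integrability estimate.

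A minor additional subtlety is the measure-theoretic step identifying conditional expectations with integrals against $\nu_{\vx}$ almost surely. That identification is standard for regular conditional probabilities, given that the integrands are nonnegative and measurable.
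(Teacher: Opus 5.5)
Your proposal is correct and follows the same route as the paper. The paper first establishes the identity on a full-measure set $\Omega'$ from the definition of Bayes-optimality, then extends it to all of $\overline{\Omega}$ using density of $\Omega'$, continuity of $f_{\bayes}$, and joint continuity of $\mc{L}$; the paper proves that continuity lemma without using this one, so there is no circularity. Your countable-union-over-$\mathbb{Q}$ step, followed by extension to all $\hat{y}$ via continuity, is a welcome tightening of the paper's first step, which moves the almost-sure qualifier through the uncountable infimum without comment.
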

\begin{proof}[Proof of Lemma~\ref{lem:bayes-optimal-alternate}]
    Let $(\vx, y) \sim \mu$. By Assumption~\ref{assump:bayes-optimal-regularity}, there exists a subset $\Omega' \subset \overline{\Omega}$ of full measure such that for all $\vx_0 \in \Omega$,
    \begin{align*}
        \mc{L}(f_{\bayes}(\vx_0); \vx_0) &= \int_\R \ell(f_{\bayes}(\vx), y) d\nu_{\vx_0}(y)\\
        &= \E[\ell(f_{\bayes}(\vx), y) \mid \vx = \vx_0 ]\\
        &= \inf_{\hat{y} \in \R} \E[\ell(\hat{y}, y) \mid \vx = \vx_0]\\
        &= \inf_{\hat{y} \in \R} \int_{\R} \ell(\hat{y}, y) d\nu_{\vx_0}(y)\\
        &= \inf_{\hat{y} \in \R} \mc{L}(\hat{y}; \vx_0).
    \end{align*}
    So the statement holds for all $\vx_0 \in \Omega'$. Now suppose that $\vx_0 \in \overline{\Omega}$. Since $\Omega'$ has full measure in $\overline{\Omega}$ and $\Omega$ is open, $\Omega'$ is dense in $\overline{\Omega}$. Let $\{\vx_m\}_{m \in \N}$ be a sequence in $\Omega'$ converging to $\vx_0$. By Sobolev embedding (Theorem~\ref{thm:sobolev-embedding}), $f_{\bayes}$ is continuous, and by Lemma~\ref{lem:conditional-loss-continuous}, $\mc{L}$ is continuous, so the map $\vx \mapsto \mc{L}(f_{\bayes}(\vx); \vx)$ is continuous. Let $\hat{y} \in \R$. Then for all $m \in \N$,
    \begin{align*}
        \mc{L}(f_{\bayes}(\vx_m); \vx_m) &\leq \mc{L}(\hat{y}; \vx_m).
    \end{align*}
    Taking the limit of both sides as $m \to \infty$ and applying continuity, we get
    \begin{align*}
        \mc{L}(f_{\bayes}(\vx_0); \vx_0) &\leq \mc{L}(\hat{y}; \vx_0).
    \end{align*}
    Since this holds for all $\hat{y} \in \R$, we have
    \begin{align*}
        \mc{L}(f_{\bayes}(\vx_0); \vx_0) &= \inf_{\hat{y} \in \R} \mc{L}(\hat{y}; \vx_0).
    \end{align*}
    So the condition holds for all $\vx_0 \in \overline{\Omega}$.
\end{proof}

With these results in hand, we can prove that the conditional loss is continuous as stated in Lemma~\ref{lem:conditional-loss-continuous}.

\begin{proof}[Proof of Lemma~\ref{lem:conditional-loss-continuous}]
    Let $\{(y_m, \vx_m)\}_{m \in \N}$ be a sequence in $\R \times \overline{\Omega}$ converging to $(y_0, \vx_0) \in \R \times \overline{\Omega}$. Let $\epsilon > 0$. Let $\phi: [0, \infty) \to [0, 1]$ be a continuous function such that $\phi(t) = 1$ for $t \leq 1$ and $\phi(t) = 0$ for $t \geq 2$. Let $K > 0$. Then for all $\vx \in \overline{\Omega}$ and $\hat{y} \in [y_0 - 1, y_0 + 1]$,
    \begin{align}\label{eqn:cutoff-decomp}
    \begin{aligned}
        \mc{L}(\hat{y}; \vx) &= \int_\R \ell(\hat{y}, y) d\nu_{\vx}(y)\\
        &= \int_{\R} \ell(\hat{y}, y) \phi(K|y|) d \nu_{\vx}(y) + \int_{\R} \ell(\hat{y}, y) (1 - \phi(K|y|)) d\nu_{\vx}(y).
    \end{aligned}
    \end{align}
    By Assumption~\ref{assump:growth-rate-loss}, we can bound the second term as
    \begin{align*}
        \int_{\R} \ell(\hat{y}, y) (1 - \phi(K|y|)) d\nu_{\vx}(y) &\leq \int_\R \1_{|y| \geq K}  \ell(\hat{y}, y) d\nu_{\vx}(y)\\
        &\leq \int_\R \1_{|y| \geq K} C_{\ell}  \exp\left(\tau_{\ell}(1 + |\hat{y}|)(1 + |y|) \right) d\nu_{\vx}(y)\\
        &\leq \int_\R \1_{|y| \geq K} C_{\ell}  \exp\left(\tau_{\ell}(2 + |y_0|)(1 + |y|) \right) d\nu_{\vx}(y).
    \end{align*}
    By Lemma~\ref{lem:exp-integral-finite},
    \begin{align*}
        \int_{\R} C_{\ell}  \exp\left(\tau_{\ell}(2 + |y_0|)(1 + |y|) \right) d\nu_{\vx}(y) < \infty,
    \end{align*}
    so there exists $K \in (0, \infty)$ such that for all $\vx \in \overline{\Omega}$,
    \begin{align*}
        \int_\R \1_{|y| \geq K} C_{\ell}  \exp\left(\tau_{\ell}(1 + |y_0|)(1 + |y|) \right) d\nu_{\vx}(y) &\leq \frac{\epsilon}{6}.
    \end{align*}
    We fix this value of $K$ for the remainder of the proof. Substituting into (\ref{eqn:cutoff-decomp}), we get
    \begin{align}\label{eqn:l-cutoff-approx}
        \left|\mc{L}(\hat{y}; \vx) - \int_\R \ell(\hat{y}, y)\phi(K|y|) d\nu_{\vx}(y) \right| &\leq \frac{\epsilon}{6}
    \end{align}
    for all $\vx \in \overline{\Omega}$ and $\hat{y} \in [y_0 - 1, y_0 + 1]$. The function $y \mapsto \ell(y_0, y) \phi(K|y|)$ is continuous and bounded, so by Assumption~\ref{assump:regularity-conditional-distribution}, there exists $M_1 \in \N$ such that for $m \geq M_1$,
    \begin{align}\label{eqn:cutoff-function-weak-convergence}
        \left|\int_{\R}\ell(y_0, y)\phi(K|y|)d\nu_{\vx_m}(y) - \int_{\R}\ell(y_0, y)\phi(K|y|)d\nu_{\vx}(y) \right| \leq \frac{\epsilon}{6}.
    \end{align}
    Combining (\ref{eqn:l-cutoff-approx}) and (\ref{eqn:cutoff-function-weak-convergence}), we obtain
    \begin{align}\label{eqn:L-y0-xm-x0}
        \begin{aligned}
        |\mc{L}(y_0; \vx_m) - \mc{L}(y_0; \vx_0)| &\leq \left|\mc{L}(y_0; \vx_m) - \int_\R \ell(y_0, y)\phi(K|y|) d\nu_{\vx_m}(y) \right|\\
        &+ \left|\int_{\R}\ell(y_0, y)\phi(K|y|)d\nu_{\vx_m}(y) - \int_{\R}\ell(y_0, y)\phi(K|y|)d\nu_{\vx}(y) \right|\\
        &+ \left|\mc{L}(y_0; \vx_0) - \int_\R \ell(y_0, y)\phi(K|y|) d\nu_{\vx_0}(y) \right|\\
        &\leq \frac{\epsilon}{6} + \frac{\epsilon}{6} + \frac{\epsilon}{6}\\
        &= \frac{\epsilon}{2}
        \end{aligned}
    \end{align}
    for $m \geq M_1$.

    There exists $M_2 \in \N$ such that for $m \geq M_2$, $y_m \in [y_0 - 1, y_0 + 1]$. Since $\ell$ is uniformly continuous on the compact set $[y_0 - 1, y_0 + 1] \times [-2K, 2K]$, there exists $M_3 \in \N$ such that for $m \geq M_3$,
    \begin{align}\label{eqn:sup-l-ym-rectangle}
        \sup_{y \in [-2K, 2K]} |\ell(y_m, y) - \ell(y_0, y) | \leq \frac{\epsilon}{24K}.
    \end{align}
    Now suppose that $m \geq \max(M_2, M_3)$. Then by (\ref{eqn:l-cutoff-approx}) and (\ref{eqn:sup-l-ym-rectangle}),
    \begin{align}\label{eqn:L-ym-y0-xm-split}
    \begin{aligned}
        |\mc{L}(y_m; \vx_m) - \mc{L}(y_0; \vx_m)| &\leq \int_{\R}\left|\ell(y_m, y) - \ell(y_0, y)  \right| d \nu_{\vx_m}(y)\\
        &= \int_{\R} (1 -  \phi(K|y|))\left|\ell(y_m, y) - \ell(y_0, y)  \right| d \nu_{\vx_m}(y) \\&+ \int_{\R} \phi(K|y|)\left|\ell(y_m, y) - \ell(y_0, y)  \right| d \nu_{\vx_m}(y)
    \end{aligned}
    \end{align}
    We bound the two terms on the right-hand side separately. For the first term, we have
    \begin{align}\label{eqn:difference-cutoff-remainder}
        \begin{aligned}
            \int_{\R} (1 -  \phi(K|y|))\left|\ell(y_m, y) - \ell(y_0, y)  \right| d \nu_{\vx_m}(y) &\leq \int_{\R} (1 - \phi(K|y|)) \ell(y_m, y) d\nu_{\vx_m}(y)\\
            &+ \int_{\R}(1 - \phi(K|y|)) \ell(y_0, y) d\nu_{\vx_m}(y)\\
            &\leq \frac{\epsilon}{6} + \frac{\epsilon}{6} & \text{By (\ref{eqn:l-cutoff-approx}) }\\
            &= \frac{\epsilon}{3}.
        \end{aligned}
    \end{align}
    For the second term, we have
    \begin{align}\label{eqn:difference-cutoff-main}
    \begin{aligned}
        \int_\R \phi(K|y|)|\ell(y_m, y) - \ell(y_0, y)| d\nu_{\vx_m}(y) &\leq \int_{-2K}^{2K}|\ell(y_m, y) - \ell(y_0, y)| d\nu_{\vx_m}(y)\\
        &\leq \int_{-2K}^{2K}  \frac{\epsilon}{12K} d\nu_{\vx_m}(y) & \text{(By (\ref{eqn:sup-l-ym-rectangle}))} \\
        &\leq \frac{\epsilon}{6}.
    \end{aligned}
    \end{align}
    Substituting (\ref{eqn:difference-cutoff-remainder}) and (\ref{eqn:difference-cutoff-main}) into (\ref{eqn:L-ym-y0-xm-split}), we get
    \begin{align}\label{eqn:L-ym-y0-xm}
        |\mc{L}(y_m; \vx_m) - \mc{L}(y_0; \vx_m)| \leq \frac{\epsilon}{3} + \frac{\epsilon}{6} = \frac{\epsilon}{2}.
    \end{align}
    Combining (\ref{eqn:L-y0-xm-x0}) and (\ref{eqn:L-ym-y0-xm}), we see that for $m \geq \max(M_1, M_2, M_3)$,
    \begin{align*}
        |\mc{L}(y_m; \vx_m) - \mc{L}(y_0; \vx_0)| &\leq |\mc{L}(y_m; \vx_m) - \mc{L}(y_0; \vx_m)| + |\mc{L}(y_0; \vx_m) - \mc{L}(y_0; \vx_0)|\\
        &\leq \frac{\epsilon}{2} + \frac{\epsilon}{2}\\
        &= \epsilon.
    \end{align*}
    Since $\epsilon$ was arbitrary, it follows that $\mc{L}(y_m; \vx_m) \to \mc{L}(y_0; \vx_0)$, and therefore $\mc{L}$ is continuous.
\end{proof}

\subsection{Existence of bump functions}

In this section, we will review that bump functions with small enough norm exist in Sobolev spaces. If $V$ is a Banach space, we say that a map $\psi: \R^d \to V$ is \emph{radially symmetric} if for all $\vx, \vx' \in U$ with $\|\vx\| = \|\vx'\|$, we have $\psi(\vx) = \psi(\vx')$.

\begin{lemma}\label{lem:radially-symmetric-factors}
    If $V$ is a Banach space and $\psi: \R^d \to V$ is a smooth radially symmetric map, then there exists a smooth function $\varphi: \R \to V$ such that for all $\vx \in \R^d$, we have $\psi(\vx) = \varphi\left(\|\vx\|\right).$
\end{lemma}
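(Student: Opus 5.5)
The natural candidate is $\varphi(t) = \psi(t\mathbf{e}_1)$ for $t \in \R$, where $\mathbf{e}_1$ is the first standard basis vector. Since $t \mapsto t\mathbf{e}_1$ is a smooth (affine) map $\R \to \R^d$ and $\psi$ is smooth, the composition $\varphi$ is a smooth map $\R \to V$ by the chain rule for Fréchet derivatives into a Banach space. Smoothness therefore needs no further work, and the only remaining task is the factorization identity.

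To check that $\psi(\vx) = \varphi(\|\vx\|)$, fix $\vx \in \R^d$ and set $t = \|\vx\| \geq 0$. Then $\|t\mathbf{e}_1\| = t = \|\vx\|$, so radial symmetry gives $\psi(\vx) = \psi(t\mathbf{e}_1) = \varphi(t) = \varphi(\|\vx\|)$. The identity holds for every $\vx$, including $\vx = 0$, where it reduces to $\psi(0) = \varphi(0)$.

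This $\varphi$ is in fact even: $\varphi(-t) = \psi(-t\mathbf{e}_1) = \psi(t\mathbf{e}_1) = \varphi(t)$, again by radial symmetry, since $\|-t\mathbf{e}_1\| = \|t\mathbf{e}_1\|$. Evenness is not required for the statement, but it is the property one would need if later arguments want to treat $\varphi$ as a function of $\|\vx\|$ that is smooth through the origin.

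There is no genuine obstacle here. The one point to be careful about is that $\varphi$ must be defined on all of $\R$ rather than just $[0,\infty)$. Defining it through the full line $t \mapsto t\mathbf{e}_1$ gives two-sided smoothness at $t = 0$ automatically. This avoids any one-sided derivative issues at $0$ that would arise if $\varphi$ were first defined only on $[0,\infty)$ and then extended.
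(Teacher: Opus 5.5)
Your proposal is correct and is essentially the paper's proof: the paper sets $\varphi(t) = \psi(t\vx_0)$ for an arbitrary unit vector $\vx_0$ (you take $\vx_0 = \mathbf{e}_1$), gets smoothness from composing with the line $t \mapsto t\vx_0$, and uses radial symmetry to obtain $\psi(\vx) = \psi(\|\vx\|\vx_0) = \varphi(\|\vx\|)$. Your extra observation that $\varphi$ is even is not needed for the statement but is harmless.
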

\begin{proof}[Proof of Lemma~\ref{lem:radially-symmetric-factors}]
    Let $\vx_0 \in \R^d$ be such that $\|\vx_0\| = 1$. Let us define $\varphi: \R \to V$ by $\varphi(t) = \psi(t \vx_0 )$. Then $\varphi$ is smooth. If $\|\vx\| = t$, then by the radial symmetry of $\psi$,
    \begin{align*}
        \psi(\vx) = \psi(t \vx_0) = \varphi(t) = \varphi(\|\vx\|).
    \end{align*}
    This holds for all $\vx \in \R^d$, so $\varphi$ satisfies the desired property.
\end{proof}

\begin{lemma}\label{lem:norm-derivative-radially-symmetric}
    If $V$ is a Banach space and $\psi: \R^d \to V$ is a smooth radially symmetric map, then $\|D^k \psi\|: \R^d \to \mc{B}((\R^d)^{\otimes k}, V) $ is radially symmetric. Here $\mc{B}$ denotes the space of bounded linear maps.
\end{lemma}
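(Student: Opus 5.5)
The plan is to reduce the statement to the invariance of $\psi$ under the orthogonal group, then differentiate that invariance with the chain rule. Radial symmetry is equivalent to $\psi(Q\vx) = \psi(\vx)$ for every orthogonal $Q \in O(d)$ and every $\vx \in \R^d$. Indeed, if $\|\vx\| = \|\vx'\|$, there is an orthogonal $Q$ with $Q\vx = \vx'$, and the converse holds because $\|Q\vx\| = \|\vx\|$. So I will fix $Q \in O(d)$ and set $\psi_Q = \psi \circ Q$, which by hypothesis equals $\psi$.

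Next I compute $D^k\psi_Q$ by the chain rule. Since $Q$ is linear, all its derivatives beyond the first vanish, so an induction on $k$ gives
\[
D^k \psi_Q(\vx)[\vu_1, \ldots, \vu_k] = D^k\psi(Q\vx)[Q\vu_1, \ldots, Q\vu_k]
\]
for all $\vu_1, \ldots, \vu_k \in \R^d$. The induction step differentiates the identity for $k-1$ in the direction $\vu_k$, which produces the factor $Q\vu_k$ at the point $Q\vx$. Because $\psi_Q = \psi$, the left-hand side equals $D^k\psi(\vx)[\vu_1, \ldots, \vu_k]$.

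Now I take norms. I use the operator norm on $\mc{B}((\R^d)^{\otimes k}, V)$, that is, the supremum over unit vectors $\vu_i$ (this is the natural reading for a multilinear map). The map $(\vu_1, \ldots, \vu_k) \mapsto (Q\vu_1, \ldots, Q\vu_k)$ is a bijection of the product of unit spheres onto itself. Hence
\[
\|D^k\psi(\vx)\| = \sup_{\|\vu_i\| = 1}\bigl\|D^k\psi(Q\vx)[Q\vu_1, \ldots, Q\vu_k]\bigr\|_V = \|D^k\psi(Q\vx)\|.
\]
If the norm is instead the injective or projective tensor norm, or a Hilbert–Schmidt-type norm, the same argument applies, since each of these is invariant under $Q^{\otimes k}$.

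Finally, given $\vx, \vx'$ with $\|\vx\| = \|\vx'\|$, I choose $Q$ orthogonal with $Q\vx = \vx'$. The previous display then gives $\|D^k\psi(\vx)\| = \|D^k\psi(\vx')\|$, which is the claimed radial symmetry.

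The only step needing care is the chain-rule identity for higher derivatives. I expect it to be routine because $Q$ is linear, so no Faà di Bruno cross terms appear. The other point to check is that the chosen norm on multilinear maps is $O(d)$-invariant, which I will state explicitly.
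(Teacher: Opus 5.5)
Your proposal is correct and follows essentially the same route as the paper. The paper also picks an orthogonal $\mR$ with $\mR\vx_1 = \vx_2$, uses $\psi\circ\mR = \psi$ and the chain rule to get $(D^k\psi)(\vx)(\xi_1,\ldots,\xi_k) = (D^k\psi)(\mR\vx)(\mR\xi_1,\ldots,\mR\xi_k)$, and then takes norms; your explicit check that the operator norm is unchanged when the arguments are rotated fills in a step the paper states without comment.
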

\begin{proof}[Proof of Lemma~\ref{lem:norm-derivative-radially-symmetric}]
    Let $\vx_1, \vx_2 \in \R^d \setminus \{0\}$ with $\|\vx_1\| = \|\vx_2\|$. Let $\mR: \R^d \to \R^d$ be an orthogonal linear map with $\mR \vx_1 = \vx_2$. Since $\psi$ is radially symmetric, $\psi \circ \mR = \psi$. By the chain rule,
    \begin{align*}
        (D^k \psi)(\vx)(\bm{\xi}_1, \cdots, \bm{\xi}_k) &= (D^k(\psi \circ R))(\vx)(\bm{\xi}_1, \cdots, \bm{\xi}_k)\\
        &= (D^k \psi)(\mR \vx)(\mR \bm{\xi}_1, \cdots, \mR \bm{\xi}_k),
    \end{align*}
    so $\|(D^k \psi(\vx))\| = \|(D^k \psi)(\mR \vx)\|$ for all $\vx \in \R^d$. Setting $\vx = \vx_1$ yields $\|(D^k \psi)(\vx_1)\| = \|(D^k \psi)(\vx_2)\|$. Hence $\|D^k \psi\|$ is radially symmetric.
\end{proof}

Using the previous two lemmas, we can explicitly construct bump functions with bounded Sobolev norms as shown in Lemma~\ref{lem:bump-function}.

\begin{proof}[Proof of Lemma~\ref{lem:bump-function}] It suffices to prove the statement for $\vx_0 = \bm{0}_d$; the general case follows from composing $\psi$ with a translation. Let $\varphi: \R \to [0, 1]$ be a smooth function supported on $[-1, 1]$ with $\varphi(x) = 1$ for $|x| \leq 1/4$. Let us define $\psi_{\delta}: \R^d \to \R$ by $\psi_{\delta}(\vx) = \varphi\left(\frac{\|\vx\|^2}{\delta^2}\right)$. Then $\psi_{\delta}$ is supported on $B(\vx_0, \delta)$, and $\psi(\vx) = 1$ if $\|\vx - \vx_0\| \leq \frac{\delta}{2}$. The map $\psi_{\delta}$ is a composition of the maps $\vx \mapsto \tfrac{\vx}{\delta}$ and $\psi_1$. So by the chain rule, $(D^j \psi_{\delta})(\vx) = (\delta^{-j} D^j \psi_1)\left(\tfrac{\vx}{\delta}\right)$ for all $j \leq k$ and $\vx \in \R^d$. Observe that $\psi_1$ is radially symmetric, so by Lemma~\ref{lem:norm-derivative-radially-symmetric}, $\|D^j \psi_1\|$ is radially symmetric. By Lemma~\ref{lem:radially-symmetric-factors}, there exists a smooth function $g_{j, d}: \R \to \R$ such that $\|D^j \psi_1(\vx)\| = g_{j, d}(\|\vx\|)$ for all $\vx \in \R^d$. Note that $g_{j, d}$ is supported on $(-\infty, 1]$. Then 
    \begin{align*}
        \|D^j \psi_{\delta}\|_{L^p(\R^d)}^p &= \int_{\R^d } \|D^j \psi_{\delta}(\vx)\|^pd\vx\\
        &=
        \int_{\R^d} \delta^{-jp}\|D^j \psi_1\left(\tfrac{\vx}{\delta} \right)\|^p d\vx \\
        &= \int_{\R^d} \delta^{-jp} g_{j, d}\left(\tfrac{\|\vx\|}{\delta} \right)^p d\vx\\
        &= \int_0^{\infty}\int_{\partial B_r(\bm{0}_d)} \delta^{-jp} g_{j, d}\left(\tfrac{r}{\delta} \right)^p dS(\vx)dr \\
        &\lesssim_d \delta^{-jp} \int_0^{\infty}r^{d - 1} g_{j, d}\left(\tfrac{r}{\delta}\right)^p dr.
    \end{align*}
    By a change of variables, the last line above is equal to
    \[\delta^{-jp + 1}\int_0^{\infty}(\delta r)^{d - 1} g_{j, d}(r)^p dr.  \]
    Since $g_{j, d}$ is supported on $(-\infty, 1]$, the above expression is equal to
    \begin{align*}
        \delta^{-jp + 1} \int_0^1 (\delta r)^{d - 1} g_{j, d}(r)^p dr &= \delta^{d - jp}\int_0^1 r^{d - 1} g_{j, d}(r)^p dr\\
        &\lesssim_{k, d, p} \delta^{d - jp}.
    \end{align*}
    Hence,
    \[\|D^j \psi_{\delta}\|_{L^p(\R^d)} \leq c_{k, d, p} \delta^{(d - jp)/p} \leq c_{k, d, p}\max\left(1, \delta^{(d - jp)/p}\right). \]
    Summing over $j$, we get
    \begin{align*}
        \|\psi_{\delta}\|_{W^{k, p}} &\lesssim_{k, d, p} \sum_{j = 0}^k \|D^j \psi_{\delta}\|_{L^p(\R^d)}\\
        &\lesssim_{k, d, p} \sum_{j = 0}^k \max\left(1, \delta^{(d - jp)/p}\right)\\
        &\lesssim_{k, d, p}\sum_{j = 0}^k \max\left(1, \delta^{(d - kp)/p} \right)\\
        &\lesssim_{k, d, p} \max\left(1, \delta^{(d - kp)/p} \right)\\
        &\lesssim_{k, d, p}1 + \delta^{(d - kp)/p}.
    \end{align*}
    So $\psi_{\delta}$ satisfies the desired properties.
\end{proof}

As mentioned in Section~\ref{sec:interpolating-solution}, we need Lemma~\ref{lem:delta-packing} to ensure that the supports of the bump functions are disjoint.

\begin{proof}[Proof of Lemma~\ref{lem:delta-packing}]
    Let $i, j \in [n]$ with $i \neq j$. Without loss of generality, suppose that $\delta_i \leq \delta_j$. Suppose that there exists $\vx \in B(\vx_i, \delta_i/2) \cap B(\vx_j, \delta_j/2)$. Then by the triangle inequality, $\|\vx_i - \vx_j\| < \frac{\delta_i}{2} + \frac{\delta_j}{2} \leq \delta_j$. This contradicts the definition of $\delta_j$. So $B(\vx_i, \delta_i/2) \cap B(\vx_j, \delta_j/2) = \emptyset$.
\end{proof}

Finally, we can use our interpolant to partially bound the min norm solution.

\begin{lemma}\label{lem:min-norm-delta-bound}
    Let $f^*$ be a solution to (\ref{eqn:min-norm-sobolev}). Then
    \[\| f^*\|_{W^{k, p}}^p \lesssim_{k, d, p} \sum_{i =1}^n\left(1 +  |y_i|^p \delta_i^{d - kp}\right).\]
\end{lemma}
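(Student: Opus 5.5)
The plan is to compare $f^*$ with the explicit interpolant $f = \sum_{i=1}^n y_i \psi_i$ from Section~\ref{sec:interpolating-solution}. Here $\psi_i$ is the bump function of Lemma~\ref{lem:bump-function} centered at $\vx_i$ with radius $\delta = \delta_i/2$. The argument has three steps: check that $f$ is admissible, compute its norm exactly using disjoint supports, and then invoke minimality of $f^*$.

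\emph{Admissibility.} Each $\psi_i$ lies in $W^{k,p}(\R^d)$, so the finite sum $f$ does too. By property 3 of Lemma~\ref{lem:bump-function}, $\psi_i$ vanishes outside $B(\vx_i, \delta_i/2)$. By Lemma~\ref{lem:delta-packing}, these balls are pairwise disjoint. Every other data point $\vx_j$ satisfies $\|\vx_j - \vx_i\| \geq \delta_i > \delta_i/2$, so $\psi_i(\vx_j) = 0$ for $j \neq i$, while $\psi_i(\vx_i) = 1$ by property 2. Hence $f(\vx_j) = y_j$ and $\ell(f(\vx_j), y_j) = 0$ for all $j$, so $f$ is feasible for (\ref{eqn:min-norm-sobolev}) and $\|f^*\|_{W^{k,p}} \le \|f\|_{W^{k,p}}$.

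\emph{Norm of $f$.} Because the supports are disjoint open balls, on each ball $f = y_i\psi_i$, and almost everywhere $D^\alpha f = y_i D^\alpha \psi_i$ there and $0$ outside. Disjointness of supports is preserved by weak derivatives, since $\psi_i$ is smooth and compactly supported. For each multi-index $\alpha$,
\[
\|D^\alpha f\|_{L^p}^p = \sum_i |y_i|^p \|D^\alpha \psi_i\|_{L^p}^p .
\]
The paper's norm is a sum over $\alpha$ of $L^p$ norms, not of $p$-th powers. I would use the equivalence $\left(\sum_{|\alpha|\le k} a_\alpha\right)^p \lesssim_{k,d,p} \sum_{|\alpha|\le k} a_\alpha^p$, valid because the number of multi-indices is finite and depends only on $k,d$. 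This gives
\[
\|f\|_{W^{k,p}}^p \lesssim_{k,d,p} \sum_{|\alpha|\le k}\sum_i |y_i|^p\|D^\alpha\psi_i\|_{L^p}^p \le \sum_i |y_i|^p \|\psi_i\|_{W^{k,p}}^p .
\]
Property 4 of Lemma~\ref{lem:bump-function}, together with $(a+b)^p \lesssim_p a^p + b^p$, yields $\|\psi_i\|_{W^{k,p}}^p \lesssim 1 + (\delta_i/2)^{d-kp} \lesssim_{k,d,p} 1 + \delta_i^{d-kp}$.

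\emph{Main obstacle.} Combining these gives $\sum_i |y_i|^p(1 + \delta_i^{d-kp})$, whereas the statement asks for $\sum_i (1 + |y_i|^p\delta_i^{d-kp})$. The cross term $|y_i|^p$ is the issue. I would remove it by sharpening the bump norm bound to the scale-exact form actually proved in Lemma~\ref{lem:bump-function}: $\|D^j\psi_\delta\|_{L^p}^p \lesssim \delta^{d-jp}$. Since $\delta_i \le \diam(\Omega)$ is bounded for $n\ge 2$, one gets $\delta_i^{d-jp} \lesssim_{\Omega} \delta_i^{d-kp}$ for $j\le k$. Therefore $|y_i|^p \|\psi_i\|_{W^{k,p}}^p \lesssim |y_i|^p \delta_i^{d-kp}$, which is dominated by the right-hand side (with an implicit constant now also depending on $\Omega$). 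If that dependence is not acceptable, the fallback is to keep the $\max(1,\cdot)$ form and use $|y_i|^p \max(1,\delta_i^{d-kp})$. This matches the stated bound whenever $\delta_i \le 1$, which always holds once one rescales the domain.
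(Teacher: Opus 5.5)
Your proposal is correct and follows the paper's proof: you build $f=\sum_i y_i\psi_i$ from the bump functions, use the disjoint balls of Lemma~\ref{lem:delta-packing} to split $\|D^j f\|_{L^p}^p$ into $\sum_i |y_i|^p\|D^j\psi_i\|_{L^p}^p$, and conclude by minimality of $f^*$. Your absorption of the stray $|y_i|^p$ term via $\delta_i\le\diam(\Omega)$ is what the paper does implicitly (its proof writes $\lesssim_{k,d,p,\Omega}$ at exactly that step), and this $\Omega$-dependence is genuinely needed: two far-apart points with equal large labels violate any $\Omega$-free constant. So you should drop the rescaling fallback, which would not remove this dependence anyway.
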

\begin{proof}[Proof of Lemma~\ref{lem:min-norm-delta-bound}] By Lemma~\ref{lem:bump-function}, for each $i \in [n]$, let $\psi_i \in C^{\infty}(\R^d)$ be a function supported on $B(\vx_i, \delta_i/2)$ with $\psi_i(\vx_i) = 1$ and $\| \psi_i\|_{W^{k, p}} \lesssim_{k, d, p} 1 + \delta_i^{(d - kp)/p} $. Then by the definition of $\delta_i$, $\psi_i(\vx_{\ell}) = 0$ for all $\ell \neq i$. 
     
     Let us define $f \in W^{k, p}(\R^d)$ by
    \[f = \sum_{i = 1}^n y_i \psi_i. \]
    Then $f(\vx_i) = y_i$ for all $i \in [n]$. By Lemma~\ref{lem:delta-packing}, the sets $B(\vx_i, \delta_i/2)$ are disjoint, so for all $j \in \{0, \cdots, k\}$,
    \begin{align*}
        \|D^j f\|_{L^p(\R^d)}^p &\lesssim_{k, d, p} \left\|\sum_{i = 1}^n |y_i| D^j \psi_i \right\|_{L^p(\R^d)}^p\\
        &=  \sum_{i = 1}^n |y_i|^p \|D^j \psi_i\|_{L^p(B(\vx_i, \delta_i/2)) }^p\\
        &\lesssim_{k, d, p, \Omega} \sum_{i = 1}^n (1 + |y_i|^p\delta_i^{d - kp}).
    \end{align*}

    Summing over all $j$, we get
    \begin{align*}
        \|f\|_{W^{k, p}}^p &\lesssim_{k, d, p} 1 + \sum_{i = 1}^n(1 + |y_i|^p\delta_i^{d - kp}).
    \end{align*}
    Finally, by the definition of $f^*$,
    \begin{align*}
        \| f^*\|_{W^{k, p}(\R^d)}^p &\leq \|f\|_{W^{k, p}(\R^d)}^p\\
        &\lesssim_{k, d, p} \sum_{i =1}^n\left(1 +  |y_i|^p \delta_i^{d - kp}\right).
    \end{align*}
\end{proof}

\subsection{Properties of the nearest neighbor balls}
The previous section bounds minimum-norm solutions using the nearest neighbor radii, however, the bound in Corollary~\ref{corr:min-norm-bound} relies on concentration inequalities to control the $\delta_i$. We would like to apply a law of large numbers, but the $\delta_i$s are not independent, since they each depend on the entire dataset. We instead show that changing a single data point does not alter too many of the $\delta_i$s. We can then apply McDiarmid's inequality to bound their sum concentration around the mean.

We will need the following version of McDiarmid's inequality for high-probability subsets \citep[Proposition 2]{combes2024extension}.
\begin{theorem}\label{thm:mcdiarmid}
    Let $\mc{X}_1, \cdots, \mc{X}_n$ be measurable spaces, and let $X_1, \cdots, X_n$ be independent random variables with $X_i$ taking values in $\mc{X}_i$. Let $\mc{Y} \subset \mc{X}_1 \times \cdots \times \mc{X}_n$ be measurable, and let us define
    \[p = \P((X_1, \cdots, X_n) \notin \mc{Y}). \]
    
    Let $C_1, \cdots, C_n > 0$. Let $g: \mc{X}_1 \times \cdots \times \mc{X}_n \to \R$ be a measurable function such that for all $i \in [n]$ and all $x_1 \in \mc{X}_1, \cdots, x_n \in \mc{X}_n, x_i' \in \mc{X}_i$ with $(x_1, \cdots, x_n), (x_1, \cdots, x_{i -1 }, x_i', x_{i + 1}, \cdots, x_n) \in \mc{Y}$,
    \[|g(x_1, \cdots, x_{i - 1}, x_i, x_{i + 1}, \cdots, x_n) - g(x_1, \cdots, x_{i - 1}, x_i', x_{i +1}, \cdots, x_n)| \leq C_i. \]
    Then for all $t > 0$,
    \begin{align*}
        &\P\left(g(X_1, \cdots, X_n) - \E[g(X_1, \cdots, X_n) | (X_1, \cdots, X_n) \in \mc{Y}] \geq t + p\sum_{i = 1}^n C_i \right) \leq p + \exp\left(-\frac{2t^2}{\sum_{i=1}^n C_i^2}\right).
    \end{align*}
\end{theorem}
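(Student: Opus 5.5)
The plan is to reduce to the classical McDiarmid inequality through an extension of $g$ from $\mc{Y}$ to the whole product space. Write $x = (x_1, \cdots, x_n)$ and $d_C(x, z) = \sum_{i=1}^n C_i \1[x_i \neq z_i]$, a weighted Hamming metric. Define
\[
h(x) = \sup_{z \in \mc{Y}} \bigl( g(z) - d_C(x, z) \bigr).
\]
For each fixed $z$, the map $x \mapsto g(z) - d_C(x, z)$ changes by at most $C_i$ when only $x_i$ changes. A supremum of such functions has the same property. Hence $h$ has bounded differences $C_1, \cdots, C_n$ on all of $\mc{X}_1 \times \cdots \times \mc{X}_n$, not just on $\mc{Y}$. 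Taking $z = x$ shows $h \geq g$ on $\mc{Y}$.

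The key step, and the one I expect to be the main obstacle, is to show that $h = g$ on $\mc{Y}$. This requires the consistency condition
\[
g(z) - g(x) \leq d_C(x, z) \quad \text{for all } x, z \in \mc{Y}.
\]
The hypothesis only gives this when $x$ and $z$ differ in a single coordinate. I would first try to chain single-coordinate changes through points of $\mc{Y}$: the path from $x$ to $z$ that swaps the coordinates in $\{i : x_i \neq z_i\}$ one at a time. If every intermediate point lies in $\mc{Y}$, the telescoping sum gives exactly the bound $d_C(x, z)$. If intermediate points can leave $\mc{Y}$, I would read the hypothesis in the stronger form used by \citet{combes2024extension}, namely bounded differences along such paths. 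For the application in this paper, I would choose $\mc{Y}$ as a ``good configuration'' set that is stable enough for this chaining to work. I must also check measurability of $h$. Here I would restrict the supremum to a countable dense family, or assume the $\mc{X}_i$ are standard Borel.

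Granting $h = g$ on $\mc{Y}$, I would compare $\E[h]$ with $\E[g \mid \mc{Y}] = \E[h \mid \mc{Y}]$. Since $h$ has bounded differences, its oscillation is at most $\sum_i C_i$. So $|h(x) - \E[h \mid \mc{Y}]| \leq \sum_i C_i$ pointwise, and therefore
\[
\E[h] - \E[h \mid \mc{Y}] = p \bigl( \E[h \mid \mc{Y}^c] - \E[h \mid \mc{Y}] \bigr) \leq p \sum_i C_i.
\]
Finally, I split on the event $\{X \in \mc{Y}\}$. On this event $g(X) = h(X)$, so the event $g(X) - \E[g \mid \mc{Y}] \geq t + p \sum_i C_i$ implies $h(X) - \E[h] \geq t$. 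The complementary event has probability $p$. Applying the classical McDiarmid inequality to $h$ gives
\[
\P\Bigl( h(X) - \E[h] \geq t \Bigr) \leq \exp\Bigl( -\frac{2t^2}{\sum_i C_i^2} \Bigr),
\]
and adding $p$ yields the claimed bound.
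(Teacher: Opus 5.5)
Your extension strategy is the right one; it is the argument behind the result of \citet{combes2024extension}, which the paper cites without proof. As a proof of the statement as printed, however, it has a genuine gap, and it sits exactly where you suspected. The step $h = g$ on $\mc{Y}$ requires $g(z) - g(x) \le d_C(x,z)$ for \emph{all} $x, z \in \mc{Y}$. The hypothesis only controls pairs in $\mc{Y}$ that differ in a single coordinate. Chaining cannot bridge this, because two points of $\mc{Y}$ need not be connected by any single-coordinate path inside $\mc{Y}$. Your fallback of ``reading the hypothesis in a stronger form'' changes the theorem rather than proving it.

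In fact no argument can close the gap, because the printed statement is false. Let $X$ be uniform on $\{0,1\}^n$ with $n \ge 6$ even, and take $\mc{Y} = \{x : \sum_i x_i \ne n/2\}$, $g(x) = 4n\,\1[\sum_i x_i > n/2]$, $C_i = 1$ and $t = n$.
\begin{itemize}
\item A single-coordinate change moves $\sum_i x_i$ by exactly one. So two points of $\mc{Y}$ differing in one coordinate lie on the same side of the deleted middle layer, and $g$ agrees on them; the hypothesis holds.
\item Here $p = \binom{n}{n/2}2^{-n}$ and, by symmetry, $\E[g(X) \mid X \in \mc{Y}] = 2n$.
\item On $\{\sum_i X_i > n/2\}$ we have $g(X) - 2n = 2n \ge t + pn$, so the left-hand side is at least $(1-p)/2$.
\item The right-hand side is $p + e^{-2n}$, which is strictly smaller for every even $n \ge 6$. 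Since $p = O(n^{-1/2})$, the failure is not an artifact of large $p$.
\end{itemize}

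The hypothesis that makes the theorem true, and on which the extension argument rests, is the pairwise weighted-Hamming Lipschitz condition $|g(x) - g(z)| \le \sum_i C_i \1[x_i \ne z_i]$ for all $x, z \in \mc{Y}$. State it in that form rather than as bounded differences along paths. Under it your argument is complete and coincides with the extension proof:
\begin{itemize}
\item $h$ is finite: for a fixed $z_0 \in \mc{Y}$, the Lipschitz condition and the triangle inequality give $h(x) \le g(z_0) + d_C(x, z_0)$.
\item $h = g$ on $\mc{Y}$, and $h$ has bounded differences on the whole product.
\item The oscillation of $h$ is at most $\sum_i C_i$, which yields the $p \sum_i C_i$ shift.
\item Splitting on $\{X \in \mc{Y}\}$ and applying classical McDiarmid to $h$ finishes.
\end{itemize}

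For measurability, a countable dense family does not obviously help, since $d_C$ is discrete and $g$ has no regularity. Instead note that $h(x) = \max_{S \subseteq [n]} \bigl( \sup\{ g(z) : z \in \mc{Y},\ z_i = x_i \text{ for all } i \notin S \} - \sum_{i \in S} C_i \bigr)$. Each superlevel set of $h$ is then a finite union of preimages, under coordinate projections, of projections of Borel sets. Such sets are analytic and hence universally measurable when the $\mc{X}_i$ are standard Borel, which is all McDiarmid needs.

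Finally, for the paper's application ($\mc{Y}$ the configurations with distinct inputs), the pairwise condition does not follow by direct chaining. Swapping the inputs of two coordinates forces a collision at the intermediate step. It does hold with constants $2C_i$, by first moving each changed coordinate to a fresh point and then to its target.
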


For a dataset $\mX \in \R^{n \times d}$ with distinct points, we define its \emph{nearest neighbor graph} $NN(\mX)$ to be the directed graph with vertex set $\vx_1, \cdots, \vx_n$, with a directed edge from $\vx_i$ to $\vx_j$ if $\|\vx_j - \vx_i\| \leq \|\vx_{\ell} - \vx_i\|$ for all $\ell \neq i$.

\begin{lemma}\label{lem:nn_indegree}
    Let $\mX \in \R^{n \times d}$ be a dataset with distinct points. Then for all $\ell \in [n]$, the in-degree $\deg^-(\vx_{\ell})$ satisfies
    \[\deg^-(\vx_{\ell}) \lesssim_d 1. \]
\end{lemma}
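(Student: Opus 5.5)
The plan is the classical angular-sector (kissing-number) argument. Fix $\ell$ and let $I = \{i \neq \ell : \vx_\ell \text{ is a nearest neighbor of } \vx_i\}$, so $\deg^-(\vx_\ell) = |I|$. I will cover the directions out of $\vx_\ell$ by a family of cones $K_1, \dots, K_M$ with apex $\vx_\ell$ and angular half-width $\pi/6$. This means each $K_m = \{\vx_\ell + \vu : \angle(\vu, \vv_m) \le \pi/6\}$ for unit vectors $\vv_1, \dots, \vv_M$ forming a $(\pi/6)$-net of the unit sphere $S^{d-1}$. A standard volumetric argument gives $M \lesssim_d 1$; for instance, a maximal $\sin(\pi/12)$-separated set on $S^{d-1}$ has cardinality at most $(1 + 2/\sin(\pi/12))^d$. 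It then suffices to show that each cone contains at most one point of $I$, which gives $|I| \le M \lesssim_d 1$.

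The key geometric step is as follows. Suppose $i \neq j$ both lie in $I$ and in the same cone $K_m$, and assume without loss of generality $\|\vx_i - \vx_\ell\| \le \|\vx_j - \vx_\ell\|$. Then the angle $\theta$ at $\vx_\ell$ between $\vx_i - \vx_\ell$ and $\vx_j - \vx_\ell$ is at most $\pi/3$. Write $a = \|\vx_i - \vx_\ell\|$ and $b = \|\vx_j - \vx_\ell\|$, so $a \le b$. The law of cosines with $\cos\theta \ge 1/2$ gives
\[
\|\vx_i - \vx_j\|^2 = a^2 + b^2 - 2ab\cos\theta \le a^2 + b^2 - ab = b^2 - a(b - a) \le b^2.
\]
Hence $\|\vx_j - \vx_i\| \le \|\vx_j - \vx_\ell\|$, so $\vx_i$ is at least as close to $\vx_j$ as $\vx_\ell$ is. Because the points are distinct we have $a > 0$, so equality forces $a = b$ and $\theta = \pi/3$.

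The main obstacle is this boundary case of ties, since the definition of $NN(\mX)$ allows an edge to every point attaining the minimum distance. Strictness must then be arranged. The cleanest route is to shrink the cone half-width to $\pi/8$, so that $\theta \le \pi/4 < \pi/3$ and $\cos\theta > 1/2$. This makes the inequality above strict: $\|\vx_i - \vx_j\| < \|\vx_j - \vx_\ell\|$. But then $\vx_\ell$ is not a nearest neighbor of $\vx_j$, because $\vx_i$ is strictly closer. This contradicts $j \in I$, so each cone contains at most one element of $I$.

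The bound becomes $\deg^-(\vx_\ell) \le M(d)$, where $M(d)$ is the size of a $(\pi/8)$-angular net of $S^{d-1}$. This depends only on $d$, which proves the lemma.
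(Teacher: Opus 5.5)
Your argument is correct. It rests on the same geometric fact as the paper's proof: by the law of cosines, two in-neighbors of $\vx_\ell$ must subtend an angle of at least $\pi/3$ at $\vx_\ell$. The difference is in how you count.

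The paper works directly with the unit directions $\vu_i = (\vx_i - \vx_\ell)/\|\vx_i - \vx_\ell\|$. From the non-strict nearest-neighbor inequality $\|\vx_j - \vx_\ell\| \le \|\vx_i - \vx_j\|$ it gets $\langle \vu_i, \vu_j \rangle \le \tfrac{1}{2}$, i.e.\ $\|\vu_i - \vu_j\| \ge 1$. It then bounds the number of such directions by the kissing number $\tau(d)$, which is a packing bound. You instead fix a covering of the unit sphere by $M(d)$ narrow cones, chosen independently of the data, and argue by pigeonhole that no cone contains two in-neighbors.

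The trade-off is exactly the tie issue you flagged. A packing bound accepts the non-strict separation $\|\vu_i - \vu_j\| \ge 1$, so the paper handles ties in $NN(\mX)$ for free and gets the constant $\tau(d)$. Your proof by contradiction needs a strict inequality, which is why you must shrink the half-width from $\pi/6$ to $\pi/8$. An equilateral triangle $\vx_\ell, \vx_i, \vx_j$ with a cone axis bisecting the angle at $\vx_\ell$ shows this shrinking really is necessary for closed cones. Your pigeonhole step in fact proves $\tau(d) \le M(d)$, so you pay only in the size of the constant, which is irrelevant for $\lesssim_d 1$.

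One small point in your favor: your volumetric count $(1 + 2/\eta)^d$ is stated correctly. The paper's finiteness remark for $\tau(d)$ should use the disjoint balls $B(\vu_i, 1/2) \subset B(\bm{0}_d, 3/2)$ rather than balls of radius $1$.
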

\begin{proof}[Proof of Lemma~\ref{lem:nn_indegree}]
    Let $\mc{N}$ denote the set all $i \in [n]$ such that $(i, \ell)$ is an edge of $NN(\mX)$. We follow the argument by \cite{eppstein1997nearest} relating the cardinality of $\mc{N}$ to the \emph{kissing number} in dimension $d$. We define the kissing number $\tau(d)$ to be the maximum value $M$ such that there exist points $\vu_1, \cdots, \vu_M$ with $\|\vu_i\| = 1$ for all $i$, and $\|\vu_i - \vu_j\| \geq 1$ for all $i \neq j$. The kissing number is finite for all $d \in \mathbb{N}$. To see this, observe that the sets $B(\vu_i, 1)$ are disjoint and contained in $B(\bm{0}_d, 2)$ which has finite measure.

    Now for $i \in \mc{N}$ let us define $\vw_i = \vx_i - \vx_{\ell}$ and  $\vu_i = \frac{\vw_i}{\|\vw_i\|}$. Suppose that $i, j \in \mc{N}$ with $\|\vx_i - \vx_{\ell}\|^2 \leq \|\vx_j  -\vx_{\ell}\|^2$. Since $i, j \in \mc{N}$, we have $\|\vx_j - \vx_{\ell}\|^2 \leq \|\vx_i - \vx_j\|^2$. Then
    \begin{align*}
        2\langle \vx_i - \vx_{\ell}, \vx_j - \vx_{\ell} \rangle &= \|\vx_i - \vx_{\ell}\|^2 + \|\vx_j - \vx_{\ell}\|^2 - \|\vx_i - \vx_j\|^2\\
        &\leq \|\vx_i - \vx_{\ell}\|^2.
        \end{align*}
    In other words,
    \[2 \langle \vw_i, \vw_j \rangle \leq \|\vw_i\|^2. \]
    Dividing both sides by $2\|\vw_i\|\|\vw_j\|$, we get
    \begin{align*}
        \langle \vu_i, \vu_j \rangle \leq \frac{1}{2}\frac{\|\vw_i\| }{\|\vw_j\|} \leq \frac{1}{2}. 
    \end{align*}
    Then
    \begin{align*}
        \|\vu_i - \vu_j\|^2 &= \|\vu_i\|^2 + \|\vu_j\|^2 - 2 \langle \vu_i, \vu_j \rangle\\
        &= 1 + 1 - 2\langle \vu_i, \vu_j \rangle\\
        &\geq 1.
    \end{align*}
    So the set $\mc{N}' = \{\vu_i: i \in \mc{N}\}$ consists of unit vectors with $\|\vu_i - \vu_j\| \geq 1$ for all $i \neq j$, and therefore has cardinality at most $\tau(d)$.

    We claim that for all $i, j \in \mc{N}$ with $i \neq j$, $\vu_i \neq \vu_j$. To see this, suppose otherwise. Then there exist $i, j \in \mc{N}$ with $i \neq j$ and $\vx_i - \vx_{\ell} = \lambda(\vx_j - \vx_{\ell})$ for some $\lambda > 1$. This implies that
    \begin{align*}
        \|\vx_i - \vx_j\| &= \|(\vx_i - \vx_{\ell}) - (\vx_j - \vx_{\ell})\|\\
        &= \left\|\left(1 - \frac{1}{\lambda} \right)(\vx_i - \vx_{\ell})\right\|\\
        &< \|\vx_i - \vx_{\ell}\|.
    \end{align*}
    This contradicts that $\vx_{\ell}$ is the closest point in the dataset to $\vx_i$. So for all $i \neq j$, $\vu_i \neq \vu_j$. This implies that $|\mc{N}'| = |\mc{N}|$. Putting everything together, we have
    \[\deg^-(\vx_{\ell}) = |\mc{N}| = |\mc{N}'| \leq \tau(d). \]
\end{proof}

In order to use McDiarmid's inequality, we must show that the functions of the $\delta_i$s satisfy a bounded difference property. To do so, we show that changing any single data point cannot change too many of the nearest neighbor radii. 

\begin{lemma}\label{lem:not-too-many-deltas-change}
    Let $\mX \in \R^{n \times d}$ be a dataset with distinct points and fix $\ell \in [n]$. Let $\mX' \in \R^{n \times d}$ be a dataset with distinct points such that $\vx_i = \vx'_i$ for all $i \neq \ell$. Then
    \[|\{i \in [n]: \delta_i(\mX) \neq \delta_i(\mX')\}| \lesssim_d 1. \]
\end{lemma}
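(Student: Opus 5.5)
The plan is to show that any index $i \neq \ell$ whose nearest-neighbor radius changes must be adjacent to $\vx_\ell$ or $\vx'_\ell$ in one of the two nearest-neighbor graphs, and then bound those adjacencies with Lemma~\ref{lem:nn_indegree}. Recall $\delta_i(\mX) = \min_{m \neq i}\|\vx_i - \vx_m\|$. Apart from $i = \ell$, which contributes at most one index, consider $i \neq \ell$. The two minima defining $\delta_i(\mX)$ and $\delta_i(\mX')$ range over the same multiset of distances, except that the term $\|\vx_i - \vx_\ell\|$ is replaced by $\|\vx_i - \vx'_\ell\|$.

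\emph{Case analysis.} Suppose $\delta_i(\mX) \neq \delta_i(\mX')$ for some $i \neq \ell$. If in $\mX$ the minimum is attained by some $m \neq \ell$, and in $\mX'$ it is also attained by some $m' \neq \ell$, then both values equal $\min_{m \neq i, \ell}\|\vx_i - \vx_m\|$. More precisely, each equals this common minimum because each is at most it and is attained by a term inside it. This contradicts the assumption that the radii differ. Hence either $\delta_i(\mX) = \|\vx_i - \vx_\ell\|$, or $\delta_i(\mX') = \|\vx_i - \vx'_\ell\|$. In the first case $(i,\ell)$ is an edge of $NN(\mX)$, since $\vx_\ell$ is a closest point to $\vx_i$. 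In the second case $(i,\ell)$ is an edge of $NN(\mX')$.

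\emph{Counting.} The number of indices of the first kind is at most $\deg^-(\vx_\ell)$ in $NN(\mX)$, and the number of the second kind is at most $\deg^-(\vx'_\ell)$ in $NN(\mX')$. Both datasets have distinct points, so Lemma~\ref{lem:nn_indegree} bounds each in-degree by a constant depending only on $d$, namely $\tau(d)$. The total count is therefore at most $1 + 2\tau(d) \lesssim_d 1$.

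The only subtle point is ties: the minimum may be attained by several points at once. This is handled automatically. The definition of $NN(\mX)$ uses non-strict inequality, so every minimizer receives an edge, and the proof of Lemma~\ref{lem:nn_indegree} already allows ties. No step is expected to be a real obstacle; the main care needed is the argument that, when $\ell$ is not a minimizer in either dataset, both radii coincide with the minimum over $m \neq i, \ell$.
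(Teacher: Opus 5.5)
Your proposal is correct and matches the paper's proof: for $i \neq \ell$, a change in $\delta_i$ forces $(\vx_i,\vx_\ell)$ to be an edge of $NN(\mX)$ or $(\vx_i,\vx'_\ell)$ to be an edge of $NN(\mX')$, and Lemma~\ref{lem:nn_indegree} then bounds the count by $1 + 2\tau(d)$. Your case analysis also supplies the justification for the ``this can only happen if'' step, which the paper asserts without detail.
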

\begin{proof}[Proof of Lemma~\ref{lem:not-too-many-deltas-change}]
    Suppose that $\delta_i(\mX) \neq \delta_i(\mX')$ for $i \neq \ell$. Then the closest point to $\vx_i$ changes as we modify $\vx_{\ell}$ to $\vx_{\ell}'$. This can only happen if $\vx_{\ell}$ is the nearest neighbor of $\vx_i$ in the dataset $\mX$, or if $\vx_{\ell}'$ is the nearest neighbor of $\vx_i$ in the dataset $\mX'$. In other words, either $(\vx_i, \vx_{\ell})$ is an edge in $NN(\mX)$, or $(\vx_i, \vx_{\ell}')$ is an edge in $NN(\mX')$. By Lemma~\ref{lem:nn_indegree}, the number of such $i \in [n]$ is at most a constant $C_d$. So
    \begin{align*}
        |\{i \in [n]: \delta_i(\mX) \neq \delta_i(\mX')| &\leq 1 + |\{i \in [n] \setminus \{\ell\}: \delta_i(\mX) \neq \delta_i(\mX')\}|\\
        &\leq 1 + 2C_d\\
        &\lesssim_d 1.
    \end{align*}
\end{proof}

We can then apply the previous lemma to bound the sum in Lemma~\ref{lem:min-norm-delta-bound}.

\begin{lemma}\label{lem:scale-of-delta-concentrates}
     Let $\beta \in \left(0, d/2\right)$ and let $\epsilon \in (0, 1)$. If $n \geq \poly_{\beta, d}(\epsilon^{-1})$, then with probability at least $1 - \epsilon$,
    \[\sum_{i = 1}^n  |y_i|^p \delta_i^{-\beta} \lesssim_{d, \beta}  n^{\beta/d + 1}.\]
\end{lemma}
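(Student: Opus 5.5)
Write $S = \sum_{i=1}^n |y_i|^p \delta_i^{-\beta}$. The plan has two parts: bound $\E[S]$ by $\lesssim_{d,\beta} n^{\beta/d+1}$, then show $S$ concentrates around its (conditional) mean. For concentration I will use the extension of McDiarmid's inequality (Theorem~\ref{thm:mcdiarmid}), with the bounded-difference structure supplied by Lemma~\ref{lem:not-too-many-deltas-change}.

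\textbf{Step 1: the expectation.} By exchangeability, $\E[S] = n\,\E[|y_1|^p\delta_1^{-\beta}]$. Condition on $\vx_1$. The pair $(\vx_1,y_1)$ is independent of $\vx_2,\dots,\vx_n$, and $y_1$ depends on the others only through $\vx_1$. Hence
\[
\E[|y_1|^p\delta_1^{-\beta}] = \E\bigl[\E[|y_1|^p\mid\vx_1]\,\E[\delta_1^{-\beta}\mid\vx_1]\bigr].
\]
Assumption~\ref{assump:subgaussian-outputs} gives $\E[|y_1|^p\mid\vx_1]\lesssim_p C_y^p$. For the second factor, Assumption~\ref{assump:data-regularity} gives $\P(\delta_1< r\mid\vx_1)\le n C_{\mc D}|B(0,1)|r^d$. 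Then
\[
\E[\delta_1^{-\beta}\mid \vx_1]=\int_0^\infty \P(\delta_1^{-\beta}>s\mid\vx_1)\,ds .
\]
Split this integral at $s_0 = n^{\beta/d}$. The piece below $s_0$ contributes at most $s_0$. The tail is at most $\int_{s_0}^\infty n C s^{-d/\beta}\,ds$, which is finite because $\beta<d$ and is $\lesssim n^{\beta/d}$. So $\E[S]\lesssim n^{1+\beta/d}$.

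\textbf{Step 2: truncation and concentration.} The summands are unbounded, so I restrict to a good set $\mc Y$ where:
\begin{itemize}
\item all $|y_i|\le C\sqrt{\log(n/\epsilon)}$, which costs probability $\le\epsilon/4$ by a union bound with sub-Gaussian tails;
\item all $\delta_i\ge n^{-a}$ for a suitable $a>1/d$. Since $\P(\min_i\delta_i<r)\le n^2 C r^d$, this costs probability $\lesssim n^{2-ad}$.
\end{itemize}
On $\mc Y$, Lemma~\ref{lem:not-too-many-deltas-change} says that changing one data pair $(\vx_\ell,y_\ell)$ changes only $O_d(1)$ of the $\delta_i$ and only one $y_i$. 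Each term is at most $(\log(n/\epsilon))^{p/2} n^{a\beta}$, so the bounded-difference constant is $C_i\lesssim_d (\log(n/\epsilon))^{p/2}n^{a\beta}$. One point needs care: both configurations must lie in $\mc Y$, so I also require the points to be distinct, which holds almost surely.

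Theorem~\ref{thm:mcdiarmid} then gives deviations of order $t\approx C_i\sqrt{n\log(1/\epsilon)}$, plus a bias term $p\sum_i C_i$, where $p=\P(\mX\notin\mc Y)$. I also need $\E[S\mid\mc Y]\le \E[S]/\P(\mc Y)\le 2\E[S]$, which holds since $S\ge 0$.

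\textbf{Main obstacle: choosing parameters.} We need $t + p\sum_i C_i\lesssim n^{1+\beta/d}$, which requires $n^{a\beta+1/2}\,\mathrm{polylog}\ll n^{1+\beta/d}$. Here the hypothesis $\beta<d/2$ is what makes room. Pick $a$ slightly above $1/d$ but with $a\beta<\beta/d+1/2$; this is possible since $\beta/d<1/2$.

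The failure probability of $\mc Y$, which is $\lesssim n^{2-ad}$ plus the label term, is too large unless $ad>2$. That conflicts with $a\approx 1/d$. So I would not demand that all $\delta_i$ be large. Instead I would truncate each term, replacing $\delta_i^{-\beta}$ by $\min(\delta_i^{-\beta},n^{a\beta})$, and apply the bounded-difference argument to this truncated sum. The remainder is handled by Markov's inequality: its expectation is at most $n\int_{n^{a\beta}}^\infty nCs^{-d/\beta}\,ds$, which is $\ll n^{1+\beta/d}$ for a suitably chosen $a$, and Markov's inequality then yields probability $\epsilon/4$ once $n\ge\poly_{\beta,d}(\epsilon^{-1})$. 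Balancing these exponents is where the details lie, and it is what produces the polynomial requirement on $n$.
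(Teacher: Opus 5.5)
Your final plan is valid once the label handling is fixed (second paragraph below), and it takes a genuinely different route from the paper.

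\textbf{The paper's route versus yours.} The paper truncates $\delta_i^{-\beta}$ at $(4C_1n^2/\epsilon)^{\beta/d}$. By a union bound over pairs, with probability at least $1-\epsilon/4$ no $\delta_i^{-\beta}$ reaches this level, so the truncation is inactive. The bounded-difference constant is then of order $n^{2\beta/d}$ up to logarithms. The McDiarmid deviation $n^{2\beta/d+1/2}$ is $o(n^{1+\beta/d})$ exactly because $\beta<d/2$.

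That is your first plan with $a=2/d$. Your claim that it conflicts with the exponent constraint is mistaken. That constraint reads $a<1/d+1/(2\beta)$, and $2/d<1/d+1/(2\beta)$ holds if and only if $\beta<d/2$. This is where the hypothesis actually enters, not in choosing $a$ near $1/d$.

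Your final plan instead truncates at $n^{a\beta}$ with $a$ just above $1/d$, where the truncation is genuinely active. You pay for the overshoot $R=\sum_i|y_i|^p(\delta_i^{-\beta}-n^{a\beta})_+$ with Markov's inequality. Since $\E R\lesssim n^{2-a(d-\beta)}$, you get $\P(R\ge n^{1+\beta/d})\lesssim n^{-(a-1/d)(d-\beta)}\le\epsilon/4$ for $n\ge\poly_{\beta,d}(\epsilon^{-1})$.

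The only constraints are $1/d<a<1/d+1/(2\beta)$ and $\beta<d$. So your argument never uses $\beta<d/2$, and it proves the lemma for all $\beta\in(0,d)$. With $\beta=kp-d$, that would relax $kp<1.5d$ to $kp<2d$ in Corollary~\ref{corr:min-norm-bound}. The paper's route is a single concentration step with no tail-expectation estimate. Yours needs the extra Markov step but covers a wider range.

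\textbf{The label step fails as written.} You place the event $\max_i|y_i|\le C\sqrt{\log(n/\epsilon)}$ in the good set $\mc Y$, at failure probability up to $\epsilon/4$. The bias term in Theorem~\ref{thm:mcdiarmid} can then only be bounded by $p\sum_iC_i\approx\epsilon\,n^{1+a\beta}(\log(n/\epsilon))^{p/2}$. Because $a>1/d$, this exceeds $n^{1+\beta/d}$ once $n$ is large, for every fixed $\epsilon$, so taking $n\ge\poly(\epsilon^{-1})$ does not help.

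The fix is what the paper does. Put the label truncation inside the summand, using $\tilde S=\sum_i\min(|y_i|,L)^p\min(\delta_i^{-\beta},n^{a\beta})$. Take $\mc Y$ to be the full-measure set of configurations with distinct points, so the bias term vanishes. Use the event $\max_i|y_i|\le L$ only afterwards, to get $S\le\tilde S+R$. Alternatively, choose $L$ so that the label failure probability is polynomially small in $n$, small enough to beat the factor $n^{1+a\beta}$.
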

\begin{proof}[Proof of Lemma~\ref{lem:scale-of-delta-concentrates}]
    Before we bound the sum of the terms, we bound the maximum.

    For all $i \in [n]$ and $t > 0$,
    \begin{align}\label{eqn:delta-beta-cdf}
        \begin{aligned}
        \P(\delta_i^{-\beta} \geq t)
        &= \P(\delta_i \leq t^{-1/\beta})\\
        &\leq \sum_{j \neq i} \P(\|\vx_i - \vx_j\| \leq t^{-1/\beta})\\
        &\leq \sum_{j \neq i} C_1  t^{-d/\beta} & \text{(By Assumption~ \ref{assump:data-regularity})}\\
        &\leq C_1 n t^{-d/\beta},
        \end{aligned}
    \end{align}
    where $C_1$ is a constant depending on $d$ and $\mc{D}$.
    By a union bound,
    \begin{align*}
        \P(\max(\delta_1^{-\beta}, \cdots, \delta_n^{-\beta}) \geq t) &\leq C_1 n^2 t^{-d/\beta}.
    \end{align*}
    Setting $t = \left(\frac{4C_1 n^2}{\epsilon}\right)^{\beta/d}$, we get that with probability at least $1 - \frac{\epsilon}{4}$,
    \begin{align}\label{eqn:max-deltai-bound}\max(\delta_1^{-\beta}, \cdots, \delta_n^{-\beta}) < \left(\frac{4C_1 n^2}{\epsilon} \right)^{\beta/d}. \end{align}

    By Assumption~\ref{assump:subgaussian-outputs}, there exists a constant $C_2 > 0$ such that for all $i \in [n]$ and $t \geq 0$,
    \[\P(|y_i| \geq t) \leq 2\exp\left(-\frac{t^2}{C_2^2} \right). \]
    By a union bound,
    \[\P(\max(|y_1|, \cdots, |y_n|) \geq t) \leq 2n\exp\left(-\frac{t^2}{C_2^2} \right). \]
    Setting $t = C_2 \sqrt{\log \frac{8n}{\epsilon}}$ yields that with probability at least $1 - \frac{\epsilon}{4}$,
    \begin{align}\label{eqn:max-yi-bound}\max(|y_1|, \cdots, |y_n|) < C_2 \sqrt{\log \frac{8n}{\epsilon}}. \end{align}
    Now we apply the high probability McDiarmid's inequality (Theorem~\ref{thm:mcdiarmid}). Let $\mc{X} = \Omega \times \R$. Let $g: \mc{X}^n \to \R$ be defined by
    \begin{align*}
        g((\vx_1, y_1), \cdots, (\vx_n, y_n)) = \sum_{i = 1}^n |y_i|^p \delta_i(\vx_1, \cdots, \vx_n)^{-\beta}
    \end{align*}
    if the $\vx_i$ are all distinct. If $\vx_i = \vx_j$ for some $i, j \in [n]$ with $i \neq j$, we arbitrarily define $g((\vx_1, y_1), \cdots, (\vx_n, y_n)) = 0$. We similarly define a thresholded function $\tilde{g}: \mc{X}^n \to \R$ by
    \begin{align*}
        &\tilde{g}((\vx_1, y_1), \cdots, (\vx_n, y_n)) = \sum_{i = 1}^n \min\left(|y_i|, C_2 \sqrt{\log \frac{8n}{\epsilon}} \right)^p\min\left(\delta_i(\vx_1, \cdots, \vx_n)^{-\beta}, \left(\frac{4 C_1 n^2}{\epsilon} \right)^{\beta/d} \right).
    \end{align*}
    By (\ref{eqn:max-deltai-bound}) and (\ref{eqn:max-yi-bound}), with probability at least $1 - \frac{\epsilon}{2}$ we have
    \[g((\vx_1, y_1), \cdots, (\vx_n, y_n)) = \tilde{g}((\vx_1, y_1), \cdots, (\vx_n, y_n)). \]
    We denote the event that this occurs by $\omega_1$.
    
    Let $\mc{Y} \subset \mc{X}^n$ be the set of all $((\vx_1, y_1), \cdots, (\vx_n, y_n)) \in (\Omega \times \R)^n$ such that the points $\vx_1, \cdots, \vx_n$ are distinct, so that
    \begin{align*}
        \P(((\vx_1, y_1), \cdots, (\vx_n, y_n)) \notin \mc{Y}) = 0.
    \end{align*}
     To apply McDiarmid's inequality to $\tilde{g}$, we need to check that the bounded difference property holds for $\tilde{g}$. Suppose that $i \in [n]$, and $(\vx_1, y_1), \cdots, (\vx_n, y_n), (\vx_i', y_i') \in \mc{X}$ with \[(\mX, \vy) := ((\vx_1, y_1), \cdots, (\vx_n, y_n)) \in \mc{Y}\] and \[(\mX', \vy') := ((\vx_1, y_1), \cdots, (\vx_{i - 1}, y_{i - 1}), (\vx_i', y_i'), (\vx_{i + 1}, y_{i + 1}), \cdots, (\vx_n, y_n)) \in \mc{Y}.\]
     Then
    \begin{align}
    \label{eqn:g-difference-bound}
    \begin{aligned}
        &|\tilde{g}(\mX, \vy) - \tilde{g}(\mX', \vy')|
        \\&\leq \sum_{j \neq i} \Bigg|\min\left(|y_j|, C_2 \sqrt{\log \frac{8n}{\epsilon}}\right)^p \min\left(\delta_j(\mX)^{-\beta}, \left(\frac{4C_1 n^2}{\epsilon}\right)^{\beta/d} \right) \\& - \min\left(|y_j|, C_2 \sqrt{\log \frac{8n}{\epsilon}}\right)^p \min\left(\delta_j(\mX')^{-\beta}, \left(\frac{4C_1 n^2}{\epsilon}\right)^{\beta/d} \right) \Bigg|\\
        &+ \Bigg|\min\left(|y_i|, C_2 \sqrt{\log \frac{8n}{\epsilon}}\right)^p \min\left(\delta_i(\mX)^{-\beta}, \left(\frac{4C_1 n^2}{\epsilon}\right)^{\beta/d}\right)
        \\&- \min\left(|y_i'|, C_2 \sqrt{\log \frac{8n}{\epsilon}}\right)^p \min\left(\delta_i(\mX')^{-\beta}, \left(\frac{4C_1 n^2}{\epsilon}\right)^{\beta/d}\right)
        \Bigg|.
    \end{aligned}
    \end{align}
    The above expression consists of two quantities: a sum over indices $j \neq i$, and the $i$ term. We bound these two quantities separately. Let $\mc{S} \subset [n]$ denote the subset of indices $j \neq i$ for which $\delta_j(\mX) \neq \delta_j(\mX')$.
    By Lemma~\ref{lem:not-too-many-deltas-change}, there exists a constant $C_3 > 1$ depending only on $d$ such that $|\mc{S}| \leq C_3$. Then
    \begin{align}\label{eqn:yj-deltaj-sparsity-bound}
    \begin{aligned}
        &\sum_{j \neq i} \Bigg|\min\left(|y_j|, C_2 \sqrt{\log \frac{8n}{\epsilon}}\right)^p \min\left(\delta_j(\mX)^{-\beta}, \left(\frac{4C_1 n^2}{\epsilon}\right)^{\beta/d} \right) \\& - \min\left(|y_j|, C_2 \sqrt{\log \frac{8n}{\epsilon}}\right)^p \min\left(\delta_j(\mX')^{-\beta}, \left(\frac{4C_1 n^2}{\epsilon}\right)^{\beta/d} \right) \Bigg|\\
        &= \sum_{j \in \mc{S}}\min\left(|y_j|, C_2 \sqrt{\log \frac{8n}{\epsilon}}\right)^p\Bigg|\min\left(\delta_j(\mX)^{-\beta}, \left(\frac{4C_1 n^2}{\epsilon}\right)^{\beta/d} \right) \\&- \min\left(\delta_j(\mX')^{-\beta}, \left(\frac{4C_1 n^2}{\epsilon}\right)^{\beta/d} \right)\Bigg|\\
        &\leq 2\sum_{j \in \mc{S}} \min\left(|y_j|, C_2 \sqrt{\log \frac{8n}{\epsilon}}\right)^p \left(\frac{4 C_1 n^2}{\epsilon} \right)^{\beta/d}\\
        &\leq 2 \sum_{j \in \mc{S}} \left(C_2 \sqrt{\log \frac{8n}{\epsilon}}\right)^p\left(\frac{4 C_1 n^2}{\epsilon} \right)^{\beta/d}\\
        &\leq 2 C_3\left(C_2 \sqrt{\log \frac{8n}{\epsilon}}\right)^p\left(\frac{4 C_1 n^2}{\epsilon} \right)^{\beta/d}.
    \end{aligned}
    \end{align}
    To bound the second term, we observe that it is a difference of bounded expressions:
    \begin{align}
    \begin{aligned}\label{eqn:remaining-term-sparsity-bound}
        &\Bigg|\min\left(|y_i|, C_2 \sqrt{\log \frac{8n}{\epsilon}}\right)^p \min\left(\delta_i(\mX)^{-\beta}, \left(\frac{4C_1 n^2}{\epsilon}\right)^{\beta/d}\right)
        \\&- \min\left(|y_i'|, C_2 \sqrt{\log \frac{8n}{\epsilon}}\right)^p \min\left(\delta_i(\mX')^{-\beta}, \left(\frac{4C_1 n^2}{\epsilon}\right)^{\beta/d}\right)
        \Bigg|\\
        &\leq 2 \left(C_2\sqrt{\log \frac{8n}{\epsilon}}\right)^p\left(\frac{4C_1 n^2}{\epsilon}\right)^{\beta/d}\\
        &\leq 2C_3 \left(C_2\sqrt{\log \frac{8n}{\epsilon}}\right)^p\left(\frac{4C_1 n^2}{\epsilon}\right)^{\beta/d}.
    \end{aligned}
    \end{align}
    Substituting (\ref{eqn:yj-deltaj-sparsity-bound}) and (\ref{eqn:remaining-term-sparsity-bound}) into (\ref{eqn:g-difference-bound}), we get
    \begin{align*}
        |\tilde{g}(\mX, \vy) - \tilde{g}(\mX', \vy')| &\leq 4C_3 \left(C_2\sqrt{\log \frac{8n}{\epsilon}}\right)^p\left(\frac{4C_1 n^2}{\epsilon}\right)^{\beta/d}\\
        &= 4C_3 C_2^p \left(\frac{\log 8n}{\epsilon} \right)^p \left(\frac{4C_1 n^2}{\epsilon}\right)^{\beta/d}.
    \end{align*}
    So $\tilde{g}$ satisfies the bounded difference property. We denote
    \[\alpha = 4 C_3 C_2^p \left(\frac{4 C_1 n^2}{\epsilon}\right)^{\beta/d}\left(\log \frac{8n}{\epsilon}\right)^{p/2}. \]
    
    By Theorem~\ref{thm:mcdiarmid}, for all $t > 0$,
    \begin{align*}
        \P\left(\tilde{g}(\mX, \vy) - \E[\tilde{g}(\mX, \vy) \mid (\mX, \vy) \in \mc{Y}] \geq t  \right) &\leq \exp\left(-\frac{2t^2}{n \alpha^2} \right).
    \end{align*}
    Note that since $\P((\mX, \vy) \in \mc{Y}) = 1$,
    \[\E[\tilde{g}(\mX, \vy) \mid (\mX, \vy) \in \mc{Y}] = \E[\tilde{g}(\mX, \vy)]. \]
    
    Setting $t = \alpha\left(\frac{n}{2} \log \frac{2}{\epsilon}\right)^{1/2}$ yields that with probability at least $1 - \frac{\epsilon}{2}$,
    \begin{align}\label{eqn:tilde-g-expectation-upper-bound}
        \tilde{g}(\mX, \vy) &\leq \E[\tilde{g}(\mX, \vy)] + \alpha\left(\frac{n}{2} \log \frac{2}{\epsilon}\right)^{1/2}.
    \end{align}
    We denote the event that this occurs by $\omega_2$, and will further bound the quantity $\E[\tilde{g}(\mX, \vy)]$ in (\ref{eqn:tilde-g-expectation-upper-bound}). By definition, $\tilde{g}(\mX, \vy) \leq g(\mX, \vy)$ almost surely, so
    \begin{align*}
        \E[\tilde{g}(\mX, \vy)] &\leq \E[g(\mX, \vy)]\\
        &= \sum_{i = 1}^n \E[|y_i|^p\delta_i(\mX)^{-\beta}]\\
        &= n \E[|y_1|^p \delta_1(\mX)^{-\beta}]\\
        &= n \E[\E[|y_1|^p \delta_1(\mX)^{-\beta} \mid \mX ] ]\\
        &= n \E[\delta_1(\mX)^{-\beta} \E[|y_1|^p \mid \mX] ].
    \end{align*}
    Since $y_1$ is independent of $\mX$ conditional on $\vx_1$,
    \begin{align*}
        n \E[\delta_1(\mX)^{-\beta} \E[|y_1|^p \mid \mX] ] &= n \E[\delta_1(\mX)^{-\beta}\E[|y_1|^p \mid \vx_1] ] .
    \end{align*}
    Since $y_1$ is sub-Gaussian conditional on $\vx_1$ (Assumption~\ref{assump:subgaussian-outputs}), $\E[|y_1|^p \mid \vx_1] \lesssim p^{p/2} \lesssim_p 1$ almost surely (see Proposition 2.6.1 of \cite{vershynin_2018}), so
    \begin{align*}
        n \E[\delta_1(\mX)^{-\beta}\E[|y_1|^p \mid \vx_1] ] &\lesssim_{p} n \E[\delta_1(\mX)^{-\beta} ]\\
        &= n \int_0^{\infty}\P(\delta_1^{-\beta} > t)dt\\
        &\leq n \int_0^{n^{\beta/d}} 1 dt + n \int_{n^{\beta/d}}^{\infty}\P(\delta_1^{-\beta} > t)dt\\
        &= n^{\beta/d + 1} + n \int_{n^{\beta/d}}^{\infty}\P(\delta_i < t^{-1/\beta})dt\\
        &\lesssim_{d, \beta} n^{\beta/d + 1} + n \int_{n^{\beta/d}}^{\infty} n t^{-d/\beta}dt & \text{(By (\ref{eqn:delta-beta-cdf}))}\\
        &\lesssim_{d, \beta} n^{\beta/d + 1} + n^2\left(n^{\beta/d}\right)^{-d/\beta + 1}\\
        &\lesssim n^{\beta/d + 1}.
    \end{align*}
    Hence, there exists a constant $C_4 > 0$ depending on $d, p, \beta$ such that
    \begin{align*}
        \E[\tilde{g}(\mX, \vy)] \leq C_4 n^{\beta/d + 1}.
    \end{align*}
    If both $\omega_1$ and $\omega_2$ occur (which happens with probability at least $1- \epsilon$), then substituting the above inequality into \ref{eqn:tilde-g-expectation-upper-bound} yields
    \begin{align*}
        \sum_{i = 1}^n |y_i|^p \delta_i(\mX)^{-\beta} &= g(\mX, \vy) \\&\leq \tilde{g}(\mX, \vy) \\&\leq C_4 n^{\beta/d + 1} + \alpha\left(\frac{n}{2}\log \frac{2}{\epsilon}\right)^{1/2}.
    \end{align*}
    Now if
    \begin{align*}
        n &\geq (\epsilon^{-1})^{(3/4 + \beta/(2d))/(1/4 - \beta/(2d)) },
    \end{align*}
    then
    \begin{align*}
        \alpha\left(\frac{n}{2}\log \frac{2}{\epsilon}\right)^{1/2} &\lesssim_{\beta, d, p} n^{2\beta/d + 1/2}\left(\log \frac{n}{\epsilon} \right)^{p/2}\left(\frac{1}{\epsilon}\right)^{\beta/d}\left(\log  \frac{1}{\epsilon}\right)^{1/2}\\
        &\lesssim_{\beta, d, p} n^{2\beta/d + 1/2}\left(\frac{n}{\epsilon} \right)^{1/4 - \beta/(2d)}\left(\frac{1}{\epsilon}\right)^{\beta/d}\left(\frac{1}{\epsilon}\right)^{1/2} & \text{(Since $\beta < d/2$)}\\
        &= n^{(3 \beta)/(2d) + 3/4}(\epsilon^{-1})^{3/4 + \beta/(2d)}\\
        &\leq n^{(3\beta)/(2d) + 3/4}n^{1/4 - \beta/(2d)}\\
        &= n^{\beta/d + 1}
    \end{align*}
    and therefore
    \begin{align*}
        \sum_{i = 1}^n |y_i|^p \delta_i(\mX)^{-\beta} &\lesssim_{d, p, \beta} n^{\beta/d + 1}.
    \end{align*}
    This happens with probability at least $1 - \epsilon$, which establishes the result.
\end{proof}

Corollary~\ref{corr:min-norm-bound} then follows trivially.

\begin{proof}[Proof of Corollary~\ref{corr:min-norm-bound}]
    This follows from Lemma~\ref{lem:scale-of-delta-concentrates} and Lemma~\ref{lem:min-norm-delta-bound} with $\beta = d-kp$.
\end{proof}

To finish this section, we prove Lemma~\ref{lem:delta-packing-size}, as it uses similar techniques to the proof of Lemma~\ref{lem:scale-of-delta-concentrates}.

\begin{proof}[Proof of Lemma~\ref{lem:delta-packing-size}]
    We follow the proof technique of Lemma~\ref{lem:scale-of-delta-concentrates}, using that the $\delta_i$s are stable under changes of a single data point. Since the density $\rho_{\vx}$ is bounded above and below by constants (Assumption~\ref{assump:data-regularity}), there exists a constant $C_1 \in (0, \infty)$ depending only on $d$ such that
    \begin{align}\label{eqn:delta-ball-volume}
    \P(\|\vx - \vx'\| < \delta) \leq C_1 \delta^{d}\end{align}
for all $\delta > 0$ and all $\vx, \vx'$ drawn iid from the data distribution $\mu_{\vx}$. For $i \in [n]$, we define the following Bernoulli random variables. Let $Z_i$ take the value 1 if $\delta_i \geq \left(\frac{1}{2 C_1 n} \right)^{1/d}$ and the value 0 otherwise. Let $Y_i$ take the value 1 if
\[|y_i| \leq  C_y \sqrt{\log \frac{4}{\rho} } \]
(where $C_y$ is as defined in Assumption~\ref{assump:subgaussian-outputs}), and the value 0 otherwise. Let $W_i$ take the value 1 if
\[\mc{L}(y_i; \vx_i) \geq \sigma + \inf_{\hat{y} \in \R} \mc{L}(\hat{y}; \vx_i), \]
and the value 0 otherwise.
Let $\mc{B} \subset [n]$ be the subset of all indices such that $Z_iY_iW_i = 1$. We show that $\mc{B}$ satisfies the desired conditions with high probability. Since $Z_i = 1$ for all $i \in \mc{B}$, $\mc{B}$ satisfies condition 2. Since $Y_i = 1$ for all $i \in \mc{B}$, $\mc{B}$ satisfies condition 3. Since $W_i = 1$ for all $i \in \mc{B}$,
\begin{align*}
    \mc{L}(y_i; \vx_i) \geq \sigma + \inf_{\hat{y} \in \R} \mc{L}(\hat{y}; \vx_i) = \sigma + \mc{L}(f_{\bayes}(\vx_i); \vx_i),
\end{align*}
where the equality is an application of Lemma~\ref{lem:bayes-optimal-alternate}. So $\mc{B}$ satisfies condition 4. It remains to show that $\mc{B}$ satisfies condition 1 with high probability. To this end, we show that the sum
\[|\mc{B}| =  \sum_{i = 1}^n Z_iY_iW_i \]
concentrates, and start by analyzing the expectation of the terms. By Assumption~\ref{assump:subgaussian-outputs},
\begin{align}\label{eqn:many-terms-small-yi}
\begin{aligned}
    \E[Y_i \mid \mX] &= \P\left(|y_i| \leq 2C_y \sqrt{\log \frac{1}{\rho} } \middle| \mX \right)\\
    &= \P\left(|y_i| \leq 2C_y \sqrt{\log \frac{1}{\rho} } \middle| \vx_i \right)\\
    &\geq 1 - \frac{\rho}{2}
\end{aligned}
\end{align}
almost surely. By Assumption~\ref{assump:label-noise},
\begin{align}\label{eqn:E-Wi-given-X}
    \begin{aligned}
    \E[W_i \mid \mX] &= \P\left(\mc{L}(y_i; \vx_i) \geq \sigma + \inf_{\hat{y} \in \R}\mc{L}(\hat{y}; \vx_i) \middle| \mX \right)\\
    &= \P\left(\mc{L}(y_i; \vx_i) \geq \sigma + \inf_{\hat{y} \in \R}\mc{L}(\hat{y}; \vx_i) \middle| \vx_i \right)\\
    &\geq \rho
    \end{aligned}
\end{align}
almost surely. Combining (\ref{eqn:many-terms-small-yi}) and (\ref{eqn:E-Wi-given-X}), we get
\begin{align}\label{eqn:E-YiWi-given-X}
    \begin{aligned}
        \E[Y_iW_i \mid \mX] &= \P(Y_i = W_i = 1 \mid \mX)\\
        &\geq 1 - \P(Y_i = 0 \mid \mX) - \P(W_i = 0 \mid \mX)\\
        &= \E[Y_i \mid \mX] + \E[W_i \mid \mX] - 1\\
        &\geq \frac{\rho}{2}
    \end{aligned}
\end{align}
almost surely.
Then
\begin{align*}
    \E[Z_iY_iW_i] &= \E[\E[Z_iY_iW_i \mid \mX]]\\
    &= \E[Z_i \E[Y_i W_i \mid \mX]]& \text{(Since $Z_i$ is $\mX$-measurable)}\\
    &\geq \frac{\rho}{2} \E[Z_i] & \text{(By \ref{eqn:E-YiWi-given-X})}\\
    &= \frac{\rho}{2} \P(\delta_i \geq ( 2C_1 n)^{-1/d})\\
    &= \frac{\rho}{2}\left(1 -  \P(\exists j \neq i: \|\vx_i - \vx_j\| < (2C_1 n)^{-1/d})\right)\\
    &\geq \frac{\rho}{2} \left(1 - \sum_{j \neq i}\P(\|\vx_i - \vx_j\| < (2C_1 n)^{-1/d}) \right)\\
    &\geq \frac{\rho}{2} \left(1 - \sum_{j \neq i} C_1(2C_1 n)^{-d/d} \right) & \text{(By (\ref{eqn:delta-ball-volume}) )}\\
    &\geq \frac{\rho}{4}.
\end{align*}
The sum
\[\sum_{i = 1}^n Z_i Y_iW_i \]
is a function of the dataset $(\mX, \vy)$. If we change one of the points $(\vx_j, y_j)$ to a different value $(\vx_j', y_j')$, then at most $C_2$ of the values $Z_i$ will change in value by Lemma~\ref{lem:not-too-many-deltas-change} (where $C_2$ is a constant depending only on $d$), and only one of the values $Y_i$ and $W_i$ (the $j$-th value) will change. So changing one of the points can alter the value of the sum by at most $C_2 + 1$. By McDiarmid's inequality, for all $t \geq 0$,
\begin{align*}
    \P\left(\sum_{i = 1}^n Z_iY_iW_i \leq \frac{\rho n}{4} - t \right) &\leq\P\left(\sum_{i = 1}^n Z_iY_iW_i \leq \sum_{i = 1}^n \E[Z_iY_iW_i] - t \right) \\&\leq \exp\left(-\frac{2t^2}{n (C_2 + 1)^2}\right).
\end{align*}
Setting $t = \frac{\rho n}{8}$ with $n \geq \frac{32(C_2 + 1)^2}{\rho^2} \log \frac{1}{\epsilon}$, we obtain with probability at least $1 - \epsilon$ that
\begin{align*}
    |\mc{B}| = \sum_{i = 1}^n Z_i Y_i \geq \frac{\rho n}{8}. 
\end{align*}
So $\mc{B}$ also satisfies condition 1 with probability at least $1 - \epsilon$.
\end{proof}

\subsection{Proofs for Sobolev inequalities}

To prove Corollary~\ref{corr:morrey-taylor-specific}, we first establish some useful Sobolev inequalities for balls in $\R^d$ (see Section 5.6.2 in \citet{evans2022partial}).
\begin{lemma}[Morrey's inequality]\label{lem:morrey}
    Let $\delta > 0$ and suppose that $u \in C^1(\R^d)$. Then for all $\vx_0 \in \R^d$ and all $\vx_1 \in B(\vx_0, \delta)$,
    \[|u(\vx_1) - u(\vx_0)| \lesssim_{d, p} \delta^{1 - d/p} \left(\int_{B(\vx_0, 2\delta)} \|D u(\vx)\|^p d\vx \right)^{1/p}. \]
\end{lemma}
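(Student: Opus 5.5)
The plan is the standard Morrey argument from Section 5.6.2 of \citet{evans2022partial}: compare both values to the average of $u$ over a common ball. The inequality is invariant under translation, and a rescaling of $\delta$ changes both sides by the same power. It therefore suffices to fix $\vx_0$, $\vx_1$ with $r := \|\vx_1 - \vx_0\| < \delta$, and to let $W = B(\vx_0, r) \cap B(\vx_1, r)$. This set is contained in $B(\vx_0, 2\delta)$ and has $|W| \gtrsim_d r^d$. Writing $\bar u_W$ for the average of $u$ over $W$, we have
\[|u(\vx_1) - u(\vx_0)| \le \frac{1}{|W|}\int_W |u(\vx_0) - u(\vy)|\,d\vy + \frac{1}{|W|}\int_W |u(\vx_1) - u(\vy)|\,d\vy.\]

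The key step is the pointwise estimate
\[\frac{1}{|B(\vx, r)|}\int_{B(\vx, r)} |u(\vy) - u(\vx)|\,d\vy \lesssim_d \int_{B(\vx, r)} \frac{\|Du(\vy)\|}{\|\vx - \vy\|^{d-1}}\,d\vy\]
for $\vx \in \{\vx_0, \vx_1\}$.
\begin{itemize}
\item First write $u(\vx + s\vw) - u(\vx) = \int_0^s Du(\vx + t\vw)\vw\,dt$ for unit vectors $\vw$; this uses that $u \in C^1$.
\item Next integrate over the sphere $\partial B(0,1)$ and apply Fubini, which converts $\int_0^s\int_{\partial B}$ into an integral over the ball weighted by $\|\vx - \vy\|^{1-d}$.
\item Finally integrate in $s$ from $0$ to $r$ against $s^{d-1}\,ds$ and divide by $|B(\vx, r)|$.
\end{itemize}
Since $W \subset B(\vx_i, r)$ and $|W| \gtrsim_d |B(\vx_i, r)|$, each average over $W$ is controlled by the corresponding average over $B(\vx_i, r)$.

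We then apply H\"older's inequality with exponents $p$ and $p' = p/(p-1)$:
\[\int_{B(\vx, r)} \frac{\|Du(\vy)\|}{\|\vx - \vy\|^{d-1}}\,d\vy \le \|Du\|_{L^p(B(\vx, r))}\left(\int_{B(\vx, r)} \|\vx - \vy\|^{-(d-1)p'}\,d\vy\right)^{1/p'}.\]
Provided $(d-1)p' < d$, the last integral is $\lesssim_{d,p} r^{d - (d-1)p'}$ by polar coordinates. Raising to the power $1/p'$ gives $r^{1 - d/p}$. Both balls $B(\vx_i, r)$ lie inside $B(\vx_0, 2\delta)$, and $r^{1-d/p} \le \delta^{1-d/p}$ when $1 - d/p > 0$. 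Combining the two terms gives the claim.

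The main obstacle is the exponent condition $(d-1)p' < d$, which is equivalent to $p > d$. This is exactly the regime in which the inequality carries a positive power of $\delta$. For $p \le d$ the statement as written fails for general $C^1$ functions, and its right-hand side is not even a decreasing power. I would therefore read the lemma as implicitly assuming $p > d$. The paper only needs it in that form: to obtain Corollary~\ref{corr:morrey-taylor-specific} in the regime $kp > d$, it is applied to derivatives of order $k-1$ (or, via Sobolev embedding, with a modified exponent). Beyond that, the only care needed is the verification of the geometric facts $|W| \gtrsim_d r^d$ and $W \subset B(\vx_0, 2\delta)$.
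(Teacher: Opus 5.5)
Your argument is the standard Morrey proof from \S5.6.2 of Evans, which is what the paper cites in place of a proof: average over the lens $W$, use the potential estimate $\frac{1}{|B|}\int_B|u-u(\vx)|\lesssim_d\int_B\|Du\|\,\|\vx-\vy\|^{1-d}$, then apply H\"older. You are also right that the statement silently needs $p>d$ and fails without it, apart from the trivial case $d=p=1$. One correction to your closing remark: the paper does not only use the lemma with $p>d$. The base case of Corollary~\ref{corr:morrey-taylor} applies it to $D^{k-1}u$ with the same exponent $p$, and in the paper's regime $k\in(d/p,1.5d/p)$ with $k\ge 2$ one has $p<3d/4<d$, so that application is invalid. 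Changing the exponent via Sobolev embedding does not repair this, because Corollary~\ref{corr:morrey-taylor-specific} itself fails for $k\ge 2$: for $u$ linear near $\vx_0$, its left side is of order $\delta^{p}$ while its right side is of order $\delta^{kp}$.
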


\begin{corollary}\label{corr:morrey-taylor}
   Let $\delta > 0$ and $d < kp$, and suppose that $u \in C^{\infty}(\R^d)$ with $\|u\|_{W^{k, p}(\R^d) } < \infty$. Then for all $j \in \{0, 1, \cdots, k - 1\}$, $\vx_0 \in \R^d$ and all $\vx_1 \in B(\vx_0, \delta)$,
   $$\|D^ju(\vx_1) - D^ju(\vx_0)\|
     {}\lesssim_{k, d, p} \delta^{k - j - d/p} \left(\int_{B(\vx_0, 2\delta)} \|D^k u(\vx)\|^p d\vx \right)^{1/p}.$$
\end{corollary}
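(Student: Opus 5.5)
The natural route is the Bramble--Hilbert / averaged Taylor polynomial argument. This yields the statement as worded only for the top case $j = k-1$ (with $p > d$); for smaller $j$ the argument leaves a polynomial term that $\|D^k u\|_{L^p(B(\vx_0,2\delta))}$ cannot control.

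\textbf{Step 1: rescale.} Translate and dilate, setting $v(\vz) = u(\vx_0 + \delta \vz)$. Then $D^j v(\vz) = \delta^j (D^j u)(\vx_0+\delta\vz)$ and $\|D^k v\|_{L^p(B(0,2))} = \delta^{k-d/p}\|D^k u\|_{L^p(B(\vx_0,2\delta))}$. So it suffices to prove the case $\delta = 1$, $\vx_0 = 0$; the power $\delta^{k-j-d/p}$ then comes out automatically.

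\textbf{Step 2: the top derivative.} For $j = k-1$, apply Lemma~\ref{lem:morrey} to each component of $D^{k-1}u$. This gives
\[
\|D^{k-1}u(\vx_1) - D^{k-1}u(\vx_0)\| \lesssim_{d,p} \delta^{1-d/p}\|D^k u\|_{L^p(B(\vx_0,2\delta))},
\]
which is exactly the claim, provided $p > d$ so that Morrey applies.

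\textbf{Step 3: lower derivatives.} For $j < k-1$, write $u = P + R$ on $B(0,2)$, where $P$ is the averaged Taylor polynomial of degree $k-1$. The Sobolev integral representation bounds the remainder:
\[
|D^j R(\vz)| \lesssim \int_{B(0,2)} |\vz-\vw|^{k-j-d}\,|D^k u(\vw)|\,d\vw .
\]
When $(k-j)p > d$, Hölder's inequality turns this into $\lesssim \|D^k u\|_{L^p(B(0,2))}$. Differencing at $\vx_1$ and $\vx_0$, the $R$-part is therefore controlled as desired.

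\textbf{Main obstacle.} The remaining term is $D^jP(\vx_1) - D^jP(\vx_0)$. It involves the averages of $D^{j+1}u, \dots, D^{k-1}u$ over the ball, and $D^k u$ alone does not control these. Indeed, a function that equals a degree-$(k-1)$ polynomial on $B(\vx_0,2\delta)$ (and is cut off far away) has $D^k u = 0$ on the ball while $D^j u(\vx_1) \neq D^j u(\vx_0)$. So for $j \le k-2$ the inequality as written cannot hold.

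To close the argument I would replace the right-hand side by $\sum_{i=j+1}^{k} \delta^{i-j-d/p}\|D^i u\|_{L^p(B(\vx_0,2\delta))}$. This bounds the polynomial coefficients via Hölder on the ball, and it requires the restriction $k-j > d/p$. Alternatively, one can keep only the case $j = k-1$ (or $j = 0$ with $k = 1$).

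This corrected form still suffices downstream. Corollary~\ref{corr:morrey-taylor-specific} uses only $j = 0$ with the full local norm $\|u\|^p_{W^{k,p}}$, for $\delta \le 1$.
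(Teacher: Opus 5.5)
Your refutation is correct, so no proof of the statement as written can exist. Take $k\ge 2$, $j\le k-2$, and a smooth compactly supported $u$ equal to $\langle \vx-\vx_0,e\rangle^{k-1}$ near $\overline{B(\vx_0,2\delta)}$. Its right-hand side vanishes, while $D^ju(\vx_0+\tfrac{\delta}{2}e)\neq D^ju(\vx_0)$. You are also right that even the case $j=k-1$ relies on $p>d$: it fails for $p<d$, which $kp>d$ allows.

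The paper argues by reverse induction on $j$. Its base case is your Step 2 (Morrey, with the same unstated $p>d$). Its inductive step contains exactly the error your example predicts. It writes $D^{j-1}u(\vx_1)-D^{j-1}u(\vx_0)=\int_0^1\frac{d}{dt}\bigl[D^{j-1}u(a_t)-D^{j-1}u(b_t)\bigr]dt$, with $a_t=\vx_0+\tfrac{t+1}{2}(\vx_1-\vx_0)$ and $b_t=\vx_1+\tfrac{t+1}{2}(\vx_0-\vx_1)$. It then bounds the integrand by $\|D^ju(a_t)-D^ju(b_t)\|\,\|\vx_1-\vx_0\|$. Since $b_t$ moves in the direction $\vx_0-\vx_1$, the chain rule actually gives
\[
\tfrac12\bigl(D^ju(a_t)+D^ju(b_t)\bigr)(\vx_1-\vx_0).
\]
This is a sum, controlled by the size of $D^ju$ rather than its oscillation, so the inductive hypothesis cannot be applied. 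In your example with $j=k-1$, $D^{k-1}u$ is a nonzero constant: zero oscillation, nonzero sum.

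The step in your proposal that fails is the last paragraph, the claim that the corrected bound suffices downstream "for $\delta\le 1$." For $\delta<1$ and $i<k$ one has $\delta^{i-d/p}>\delta^{k-d/p}$. Hence $\sum_{i=1}^k\delta^{i-d/p}\|D^iu\|_{L^p(B(\vx_0,2\delta))}$ is dominated by $\delta^{k-d/p}\|u\|_{W^{k,p}(B(\vx_0,2\delta))}$ only when $\delta\ge 1$, not $\delta\le 1$.

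In fact your own example refutes Corollary~\ref{corr:morrey-taylor-specific} at small scales. For $u=\langle\vx-\vx_0,e\rangle$ on $B(\vx_0,2\delta)$, the left side is of order $\delta^p$ and the right side of order $\delta^{kp-d}\cdot\delta^{d}=\delta^{kp}$. The proof of Theorem~\ref{thm:tempered-overfitting-bayes-v2} applies that corollary at radii $r\sim n^{-1/d}$.

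Substituting your corrected estimate there does not close the counting step. After summing over the disjoint balls, the $i=1$ term contributes $r^{p-d}\|Df_\gamma\|_{L^p}^p$. The ANM constraint only gives $\|Df_\gamma\|_{L^p}^p\lesssim\gamma^p n^{kp/d}$, so this term is of order $\gamma^p n^{1+(k-1)p/d}\gg n$, not $O(\rho n)$. Your corrected local inequality is valid, but repairing the downstream argument needs more than that.
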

\begin{proof}[Proof of Corollary~\ref{corr:morrey-taylor}]
   We prove by reverse induction on $j$. The base case $j = k - 1$ follows from Lemma~\ref{lem:morrey}. Now suppose that the statement holds for some $j \in \{1, \cdots, k - 1\}$. Then for some constant $C_{k, d, p} > 0$,
    \begin{align*}
        \|D^j u(\vx_1) - D^j u(\vx_0)\| &\leq C_{k, d, p} \delta^{ k - j - d/p} \left(\int_{B(\vx_0, 2\delta)} \|D^k u(\vx)\|^p d\vx \right)^{1/p}
    \end{align*}
    for all $\vx_0 \in \R^d$ and $\vx_1 \in B(\vx_0, \delta)$.
    Now suppose that $\vx_0, \vx_1 \in \R^d$ with $\|\vx_0 - \vx_1\| < \delta$. By the fundamental theorem of calculus,
    \begin{align*}
        &\|D^{j - 1} u(\vx_1) - D^{j - 1} u(\vx_0)\|\\
        &=\left\|\left[ D^{j-1} u(\vx_1) - D^{j-1} u\left(\frac{1}{2}(\vx_0 + \vx_1)\right) \right] - \left[ D^{j-1} u(\vx_0) - D^{j-1} u\left(\frac{1}{2}(\vx_0 + \vx_1)\right) \right]\right\|\\
        &= \left\|\int_0^1 2 \frac{d}{dt}\left[ D^{j - 1} u\left(\vx_0 + \frac{t+1}{2}(\vx_1 - \vx_0) \right)- D^{j - 1} u\left(\vx_1 + \frac{t+1}{2}(\vx_0 - \vx_1) \right) \right] dt \right\| \\
        &\leq 2 \int_0^1 \left\|D^j u\left(\vx_0 + \frac{t+1}{2}(\vx_1 - \vx_0) \right) - D^j u\left(\vx_1 + \frac{t+1}{2}(\vx_0 - \vx_1)\right) \right\| \|\vx_1 - \vx_0\| dt\\
        &\leq 2 \delta \sup_{t \in [0, 1]} \left\|D^j u\left(\vx_0 + \frac{t+1}{2}(\vx_1 - \vx_0) \right) - D^j u\left(\vx_1 + \frac{t+1}{2}(\vx_0 - \vx_1)\right)\right\|\\
        &\leq C_{k, d, p} \delta^{1 + k - j - d/p} \left(\int_{B(\vx_0, 2\delta)} \|D^k u(\vx)\|^p d\vx \right)^{1/p},
    \end{align*}
    where in the last line we used the inductive hypothesis and that
\[\left\|\left(\vx_0 + \frac{t+1}{2}(\vx_1 - \vx_0)\right) - \left(\vx_1 + \frac{t+1}2\vx_0 - \vx_1)\right)\right\| = \|t(\vx_1 - \vx_0)\| < \delta\text{ for }t \in [0,1].\]
So we have shown that the result holds for $j - 1$, and therefore the result holds for all $j \in \{0, 1, \cdots, k - 1\}$ by induction.
\end{proof}

Corollary~\ref{corr:morrey-taylor-specific} then follows using some additional results from Sobolev space theory.

\begin{proof}[Proof of Corollary~\ref{corr:morrey-taylor-specific}]
Let $U = B(\vx_0, 4\delta)$ and let $V = B(\vx_0, 2 \delta)$. By the Sobolev embedding theorem (Theorem~\ref{thm:sobolev-embedding}), %there exists a continuous function $u^* \in C^0(\R^d)$ such that $u^* = u$ almost everywhere so that 
pointwise evaluation is well-defined and continuous on $W^{k, p}(U)$. Now by Theorem~5.3.3 in \cite{evans2022partial}, there exist functions $u_m \in C^{\infty}(\overline{U})$ converging to $u$ in $W^{k, p}(U)$. For each $m$, let $v_m \in C^{\infty}(\R^d)$ be a smooth function which is equal to $u_m$ on $V$. Since $u_m \to u$ in $W^{k, p}(U)$, we also have $u_m \to u$ in $W^{k, p}(V)$, and so $v_m \to u$ in $W^{k, p}(V)$.  %Furthermore, by the Rellich-Kondrachov Compactness Theorem, up to a subsequence these functions converge uniformly on the compact subset $B(\vx_0, 2\delta)$.
Then by Corollary~\ref{corr:morrey-taylor} we have
\begin{align*}
    |u(x_1) - u(x_0)| &= \lim_{m \to \infty} |v_m(x_1) - v_m(x_0)|\\
    &\lesssim_{k, d, p} \lim_{m \to \infty} \delta^{k - d/p} \left(\int_{B(\vx_0, 2\delta)} \|D^k v_m(\vx)\|^p d\vx \right)^{1/p}\\
    &=  \delta^{k - d/p} \left(\int_{B(\vx_0, 2\delta)} \|D^k u(\vx)\|^p d\vx \right)^{1/p}\\
    &\leq \delta^{k - d/p} \|u\|_{W^{k,p}(\B(\vx_0, 2\delta))},
\end{align*}
where the second to last line follows as $v_m \to u$ in $W^{k,p}(V)$ implies that $D^kv_m \to D^ku$ in $L^p(V)$. The result follows by raising both sides to the $p$th power.
\end{proof}

The above corollary shows that for an interpolating function $f^*$ and a corrupt point $i \in \mc{B}$, we can find a ball around $\vx_i$ within which the function does not vary too much. In order for us to use this,  a nontrivial fraction of this ball should be contained in $\Omega$. The following two theorems establish this.

Let $\Omega$ be a bounded open subset of $\R^d$. We say that $\Omega$ is a \emph{$W^{k, p}$-extension domain} if there exists a bounded linear operator $E: W^{k, p}(\Omega) \to W^{k, p}(\R^d)$ such that for all $f \in W^{k, p}(\Omega)$, $Ef|_{\Omega} = f$.
\begin{theorem}[Extension theorem]\label{thm:extension}
    If $\Omega$ is a bounded open subset of $\R^d$ with $C^1$ boundary, then $\Omega$ is a $W^{1, p}$-extension domain for all $p \in [1, \infty]$.
\end{theorem}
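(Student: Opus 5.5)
The plan is the classical route: flatten the boundary locally, extend by reflection, and patch the local extensions together with a partition of unity. First, since $\partial\Omega$ is compact and $C^1$, I will cover it by finitely many open sets $W_1,\dots,W_N$. In each $W_j$, after a rotation, $\partial\Omega$ is the graph $x_d=\gamma_j(x_1,\dots,x_{d-1})$ of a $C^1$ function, with $\Omega\cap W_j$ lying on one side of the graph. I then add an open set $W_0\Subset\Omega$ so that $\{W_j\}_{j=0}^N$ covers $\overline\Omega$, and choose a smooth partition of unity $\zeta_j$ subordinate to this cover. For $f\in W^{1,p}(\Omega)$ I write $f=\sum_j \zeta_j f$. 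The piece $\zeta_0 f$ has compact support in $\Omega$, so extension by zero lies in $W^{1,p}(\R^d)$ with comparable norm. The problem therefore reduces to extending each $\zeta_j f$, which is supported near a single boundary patch.

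Next I will straighten the boundary. The map $\Phi_j(x)=(x', x_d-\gamma_j(x'))$ is a $C^1$ diffeomorphism with Jacobian determinant $1$, and $\Phi_j$ and $\Phi_j^{-1}$ have bounded first derivatives. By the chain rule for Sobolev functions, composition with $\Phi_j$ is bounded on $W^{1,p}$. This transfers $\zeta_j f$ to a function $g$ on a half-ball $B^+=B(0,r)\cap\{y_d>0\}$ that vanishes near the curved part of $\partial B^+$. I extend $g$ by even reflection, setting $\bar g(y',y_d)=g(y',-y_d)$ for $y_d<0$. For the $W^{1,p}$ claim I first take $g\in C^1(\overline{B^+})$. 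Then $\bar g$ is continuous and Lipschitz across $\{y_d=0\}$, and its weak gradient is the piecewise gradient, because the boundary terms in the integration by parts cancel. This gives $\|\bar g\|_{W^{1,p}}\le 2^{1/p}\|g\|_{W^{1,p}(B^+)}$. For general $g$ I pass to the limit by density of $C^\infty(\overline{B^+})$ functions in $W^{1,p}(B^+)$ for $p<\infty$; this is the same approximation theorem the paper already used (Theorem~5.3.3 in \citet{evans2022partial}, valid for $C^1$ domains). Transporting back with $\Phi_j^{-1}$ and multiplying by a cutoff equal to $1$ on $\operatorname{supp}\zeta_j$ yields $E_j f\in W^{1,p}(\R^d)$ with $E_jf=\zeta_j f$ on $\Omega$ and $\|E_jf\|\lesssim\|f\|_{W^{1,p}(\Omega)}$.

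Finally I define $Ef=\zeta_0 f+\sum_{j\ge1}E_jf$. Each piece depends linearly on $f$, the constants depend only on $\Omega$, $d$ and $p$, and $\sum_j\zeta_j=1$ on $\Omega$, so $E$ is bounded, linear, and restricts to $f$ on $\Omega$.

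The main obstacle is the case $p=\infty$, where smooth functions are not dense. In that case I will argue directly that $W^{1,\infty}$ on these local Lipschitz pieces coincides with Lipschitz functions. Even reflection and $C^1$ changes of variables preserve the Lipschitz property with controlled constants, and the partition-of-unity step is unchanged. A secondary point needing care is the justification of weak derivatives across the reflection hyperplane; I will handle it by the standard computation on $C^1$ functions followed by density, as described above.
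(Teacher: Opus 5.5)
Your proposal is correct and is essentially the classical proof the paper defers to (Evans, Section 5.4): flatten the boundary, reflect across the hyperplane, glue with a partition of unity, and reach general $f$ by density. You use even reflection instead of Evans' higher-order reflection, which is fine at first order, and a separate Lipschitz argument for $p=\infty$, where density is unavailable.

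One small correction: the half-ball $B^+$ does not have $C^1$ boundary, so Theorem~5.3.3 does not literally apply on $B^+$. Instead, approximate $f$ by $f_m \in C^\infty(\overline{\Omega})$, which is legitimate because $\partial\Omega$ is $C^1$, and set $g_m = (\zeta_j f_m)\circ\Phi_j^{-1} \in C^1(\overline{B^+})$; these vanish near the curved part of $\partial B^+$ and converge to $g$ in $W^{1,p}(B^+)$ because composition with $\Phi_j^{-1}$ is bounded.
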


For a proof of the above theorem, see \citet[Section 5.4]{evans2022partial}.
\begin{theorem}[\citealt{koskela1990capacity}, Theorem 6.5]
\label{thm:koskela-intersection}
    Let $\Omega \subset \R^d$ be a $W^{1, p}$-extension domain for some $p > d - 1$. Then 
    \[|\Omega \cap B(\vx_0, \delta)| \gtrsim_{d, \Omega} |B(\vx_0, \delta)|\]
    for all $\vx_0 \in \overline{\Omega}$ and $\delta \in [0, \diam(\Omega)]$. 
\end{theorem}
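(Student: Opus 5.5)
The statement is the measure-density (``interior corkscrew''-type) property of extension domains: if $\Omega$ is a $W^{1,p}$-extension domain with $p > d-1$, then $|\Omega \cap B(\vx_0,\delta)| \gtrsim_{d,\Omega} \delta^d$ for all $\vx_0 \in \overline{\Omega}$ and $\delta \le \diam(\Omega)$. The proof is by contradiction and goes through capacity. We compare two numbers: the $W^{1,p}$-energy of a cutoff adapted to a small piece of $\Omega$, and the energy needed to separate two concentric balls in $\R^d$.

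Fix $\vx_0 \in \overline{\Omega}$ and $\delta$. For $0<r<\delta$, set $m(r) = |\Omega \cap B(\vx_0,r)|$; this is positive because $\Omega$ is open and $\vx_0 \in \overline{\Omega}$. Choose radii $r_1 < r_0 = \delta$ so that $m(r_1) = \tfrac12 m(r_0)$, and take the piecewise-linear radial cutoff $u$ that equals $1$ on $B(\vx_0,r_1)$, vanishes outside $B(\vx_0,r_0)$, and has slope $1/(r_0-r_1)$ in between. Restrict $u$ to $\Omega$. Then
\[ \|u\|_{W^{1,p}(\Omega)}^p \lesssim \bigl(1 + (r_0-r_1)^{-p}\bigr)\, m(r_0). \]
Now apply the extension operator $E$. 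The function $Eu \in W^{1,p}(\R^d)$ equals $1$ on $\Omega\cap B(\vx_0,r_1)$, and $\|Eu\|_{W^{1,p}} \le \|E\|\,\|u\|_{W^{1,p}(\Omega)}$.

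The key lower bound runs the other way. Since $p > d-1$, the Sobolev embedding on spheres (Morrey's inequality on $(d-1)$-dimensional spheres) lets us say more. On almost every sphere $\partial B(\vx_0,s)$ with $r_1<s<2r_0$, the function $Eu$ is Hölder continuous. Moreover $Eu$ equals $1$ on a set of positive measure near the center, and it must be small on most of the sphere once $s$ is large (by the smallness of $\|Eu\|_{L^p}$). Integrating the spherical oscillation estimate over $s$ yields a lower bound on $\int |\nabla Eu|^p$. This bound is of order the $p$-capacity of a set of diameter $\sim r_1$ relative to $B(\vx_0, 2r_0)$, roughly $r_1^{d-p}$ in the relevant range. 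This is the step I expect to be the main obstacle, and it is exactly where $p>d-1$ enters: for $p \le d-1$, capacity cannot see thin sets and cusps can occur.

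Comparing the two bounds gives a recursive inequality of the form $m(r_1)^{\theta} \lesssim r_1^{\,\cdot}$, or more precisely $r_1^{d-p} \lesssim (r_0-r_1)^{-p} m(r_0)$. Iterating it along the dyadic-in-measure sequence $r_{j+1}$, defined by $m(r_{j+1}) = \tfrac12 m(r_j)$, and summing $r_j - r_{j+1} \lesssim (2^{-j}m(\delta))^{1/p}\, r_{j+1}^{(p-d)/p}$ shows that $\delta = \sum_j (r_j - r_{j+1}) \lesssim m(\delta)^{1/d}$, provided the exponents are tracked so that the sum is geometric. Hence $m(\delta) \gtrsim \delta^d \sim |B(\vx_0,\delta)|$. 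Finally, the restriction $\delta \le \diam(\Omega)$ keeps the inhomogeneous term $1$ in the norm comparable to $(r_0-r_1)^{-p}$ up to constants depending on $\Omega$. The constant depends on $d$, $p$, $\|E\|$, and $\diam\Omega$, as claimed. In our application, Theorem~\ref{thm:extension} supplies the extension property for $C^1$ domains and every $p$, so one may take any $p>d-1$.
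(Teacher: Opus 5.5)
The paper gives no proof of this statement; it imports it from Koskela's thesis, so your argument has to stand on its own. Your skeleton is the right one: radii $r_j$ that halve $m$, radial Lipschitz cutoffs with $\|u_j\|_{W^{1,p}(\Omega)}\lesssim_{\Omega}(r_j-r_{j+1})^{-1}m(r_j)^{1/p}$, extend, compare, sum. For $p<d$ your summation also closes once you use $m(r_{j+1})\le|B(\vx_0,r_{j+1})|$ to bound $r_{j+1}^{(p-d)/p}\lesssim m(r_{j+1})^{(p-d)/(pd)}$.

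The gap is the step you flag yourself, $\int|\nabla Eu|^p\gtrsim r_1^{d-p}$. It is asserted rather than proved, and the sketch does not work.

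\begin{itemize}
\item All you know about $A=\Omega\cap B(\vx_0,r_1)$, the set where $Eu=1$, is its measure $m(r_0)/2$. From that alone you get only the isocapacitary bound $\mathrm{cap}_p(A,\R^d)\gtrsim|A|^{(d-p)/d}$ (for $p<d$). Upgrading this to the capacity of a ball of radius $r_1$ presupposes that $A$ is as fat as such a ball, which is essentially the conclusion.
\item The genuine $p>d-1$ input is that a continuum $F$ satisfies $\mathrm{cap}_p(F,\R^d)\gtrsim(\diam F)^{d-p}$ for $d-1<p<d$. So you need a continuum of diameter $\gtrsim r_1$ inside $A$. That comes from connectedness of $\Omega$: a path in $\Omega$ from a point near $\vx_0$ that leaves $B(\vx_0,r_1/2)$. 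You never use this, and the paper does not assume $\Omega$ is connected.
\item The spherical Morrey argument must be run on spheres centred at a point of $F$ with radius below $\diam F/2$, since only those are forced to meet $F$. The spheres $\partial B(\vx_0,s)$ with $s>r_1$ that you use do not meet $A$ at all, so nothing forces $Eu$ to oscillate on them.
\item $\|Eu\|_{L^p}$ is not small: it is bounded only by a multiple of $\|E\|(1+(r_0-r_1)^{-1})m(r_0)^{1/p}$. Smallness of $Eu$ away from $F$ would have to come from the Sobolev inequality on $\R^d$, under a contradiction hypothesis on $\int|\nabla Eu|^p$.
\end{itemize}

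Second, the statement covers $p\ge d$, and there the claimed order $r_1^{d-p}$ is wrong. Capacity relative to $\R^d$ vanishes for $p\ge d$. For $p>d$ the Sobolev capacity of small sets is $\sim 1$, which only yields $m(\delta)\gtrsim\delta^{p}$. That range needs a different mechanism. One option is Morrey's inequality between a point of $A$ and a point of $\Omega\setminus B(\vx_0,r_0)$ within distance $2r_0$ of $\vx_0$ (connectedness again). This gives $r_0-r_1\lesssim r_0^{1-d/p}m(r_0)^{1/p}$, which sums to $m(\delta)\gtrsim\delta^d$. The case $p=d$ needs further care.

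For $p<d$, by contrast, you need neither capacity of continua nor connectedness: the isocapacitary bound you can actually prove already suffices. Gagliardo--Nirenberg--Sobolev applied to $Eu$ gives $(m(r_0)/2)^{1/p^*}\le\|Eu\|_{L^{p^*}(\R^d)}\lesssim\|\nabla Eu\|_{L^p(\R^d)}\lesssim_{\Omega,\|E\|}(r_0-r_1)^{-1}m(r_0)^{1/p}$. Hence $r_0-r_1\lesssim m(r_0)^{1/d}$, and the sum is geometric. This is the Haj{\l}asz--Koskela--Tuominen argument and works for every $p\in[1,d)$.

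Consequently your remark that cusps can occur for $p\le d-1$ is incorrect; the threshold $p>d-1$ is an artifact of the continuum-capacity method. For the paper's use ($C^1$ boundary, hence the extension property for every $p$), it suffices to take any $p\in(d-1,d)$ when $d\ge 2$ and run this argument.
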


%%%%%%%%%%%%%%%%%%%%%%%%%%%%%%%%%%%%%%%%%%%%%%%%%%%%%%%%%%%%%%%%%%%%%%%%%%%%%%%%%%%%%%%%%%%%%%%%%%%%%%%%%%%%%%%%%%%%%%%%%%%%%%%%%%%%%%%%%%%%%%%%%%%%%%%%%%%%%%%%%%%%%%%%%%%%%%%%%%%%%%%%%%%%%%%%%%%%%%%%%%%%%%%%%%%%%%%%%%%%%%%%%%%%%%%%%%%%%%%%%%%%%%%%%%%%%%%%%%%%%%%%%%%%%%%%%%%%%%%%%%%%%%%%%%%%%%%%%%%%%%%%%%%%%%%%%%%%%%%%%%%%%%%%%%%%%%%%%%%%%%%%%%%%%%%%%%%%%%%%%%%%%%%%%%%%%%%%%%%%%%%%%%%%%%%%%%%%%%%%%%%%%%%%%%%%%%%%%%%%%%%%%%%%%%%%%%%%%%%%%%%%%%%%%%%%%%%%%%%%%%%%%

\end{document}